\documentclass{article}

\usepackage{microtype}
\usepackage{graphicx}
\usepackage{booktabs} %
\usepackage{hyperref}

\usepackage{tikz}

\newif\ifdraft
\drafttrue
\usepackage{rotating}
\usepackage{float}
\usepackage{setspace}
\usepackage[bottom]{footmisc}
\usepackage{amsthm}
\usepackage{amssymb}
\usepackage{amsmath}
\usepackage{paralist} %
\usepackage{pifont}
\usepackage{mathtools}
\usepackage{multirow}
\usepackage{booktabs}
\usepackage{multirow}
\usepackage{color}
\usepackage{style_ltx}
\usepackage{xspace}
\usepackage{algorithm}
\usepackage{algorithmic}
\usepackage{fullpage}
\usepackage{array}
\usepackage{wrapfig}
\usepackage[labelformat=simple]{subcaption}

\usepackage[utf8]{inputenc} %
\usepackage{hyperref}       %
\usepackage{url}            %
\usepackage{amsfonts}       %
\usepackage{nicefrac}       %
\usepackage[inline]{enumitem}
\usepackage[compact]{titlesec}
\usepackage{natbib}
\usepackage{authblk}

\title{Robustness via Deep Low-Rank Representations}

\author[1,2]{Amartya Sanyal\thanks{amartya.sanyal@cs.ox.ac.uk}}
\author[1,2]{Varun Kanade}
\author[3]{Philip HS Torr}
\author[3]{Puneet K Dokania}
\affil[1]{Department of Computer Science, University of Oxford}
\affil[3]{Department of Engineering Science, University of Oxford}
\affil[2]{The Alan Turing Institute, London.}

\date{}

\newcommand{\fsgm}{FSGM\xspace}
\newcommand{\ifsgm}{Iter-FSGM\xspace}
\newcommand{\deepfool}{DeepFool\xspace}

\newcommand{\ill}{Iter-LL-FSGM\xspace}
\newcommand{\lr}{$\mathsf{LR}$-$\mathrm{layer}$\xspace}
\newcommand{\lrs}{$\mathsf{LR}$-$\mathrm{layers}$\xspace}

\newcommand{\prnk}[2]{\mathsf{\Pi}_{#1}^{\mathrm{rank}}(#2)}
\newcommand{\clip}[1]{\mathrm{clip}_{\vx, \epsilon}(#1)}
\renewcommand{\th}{{\it th}}
\definecolor{DarkGreen}{rgb}{0, 0.4, 0}

\graphicspath{{./figs/},{./images_adv/}}

\usetikzlibrary{arrows,decorations.pathmorphing,backgrounds,fit,positioning,shapes.symbols,chains}
\tikzset{ node distance = 1cm, auto,font=\footnotesize,
tensors/.style={circle, rounded corners, draw=black, fill=black!10, inner sep=0.5pt, text width=1cm, text badly centered, minimum height=1.2cm,, font=\bfseries\footnotesize\sffamily} ,
temp_tensors/.style={circle, rounded corners, dashed, draw=black, fill=black!5, inner sep=0.5pt, text width=1cm, text badly centered, minimum height=1.2cm,, font=\bfseries\footnotesize\sffamily} ,
parameters/.style={align=center, text width=2cm, font=\bfseries\footnotesize\sffamily}}

\setlength{\abovedisplayshortskip}{2pt}
\setlength{\belowdisplayshortskip}{2pt}
\setlength{\abovedisplayskip}{2pt}
\setlength{\belowdisplayskip}{0pt}

\newcolumntype{L}[1]{>{\raggedright\let\newline\\\arraybackslash\hspace{0pt}}m{#1}}
\newcolumntype{C}[1]{>{\centering\let\newline\\\arraybackslash\hspace{0pt}}m{#1}}
\newcolumntype{R}[1]{>{\raggedleft\let\newline\\\arraybackslash\hspace{0pt}}m{#1}}

\setlength{\belowcaptionskip}{-3pt}

\begin{document}
\maketitle

\begin{abstract}
We investigate the effect of the dimensionality of the representations learned in Deep
Neural Networks (DNNs) on their robustness to input perturbations, both adversarial and
random. To achieve low dimensionality of learned representations, we propose an easy-to-
use, end-to-end trainable, low-rank regularizer (LR) that can be applied to any
intermediate layer representation of a DNN. This regularizer forces the feature
representations to (mostly) lie in a low-dimensional linear subspace. We perform a wide
range of experiments that demonstrate that the LR indeed induces low rank on the
representations, while providing modest improvements to accuracy as an added benefit.
Furthermore, the learned features make the trained model significantly more robust to
input perturbations such as Gaussian and adversarial noise (even without adversarial
training). Lastly, the low-dimensionality means that the learned features are highly
compressible; thus discriminative features of the data can be stored using very
little memory. Our experiments indicate that models trained using the LR learn robust
classifiers by discovering subspaces that avoid non-robust features. Algorithmically, the LR
is scalable, generic, and straightforward to implement into existing deep learning
frameworks.
\end{abstract}

\section{Introduction}
\label{sec:introduction}
Dimensionality reduction methods are some of the oldest techniques in machine learning that extract a small number of factors from a dataset that explain the most of its variance; these factors contain most of the {\em discriminative} power useful in classification or regression tasks, and are known to increase {\em robustness}, i.e. these methods typically have a denoising effect. 
Perhaps, the most popular and widely used among them are PCA (see e.g.~\citep{PCA2002a}) and CCA~\citep{hotelling1935canonical}. %

In recent years, deep neural networks~(DNNs) have proved to be the state-of-the art models for a wide
range of tasks. An intriguing aspect has been their ability to
generate representations directly from raw data that are useful in several
tasks, including ones for which they were not specifically trained, usually known as ~\textit{representation
learning}~\citep{zeiler2014,Sermanet2014,donahue2013}.
Essentially, for most models trained in a supervised fashion, the vector of
activations in the penultimate layer is a \emph{learned} representation of the raw input. The remarkable success of DNNs is primarily attributed to the discriminative quality of this learned representation space. However, despite their impressive performance, DNNs are known to be brittle to input perturbations~\citep{szegedy2013intriguing,goodfellow2014explaining}. This raises concerns regarding the robustness of the factors captured by the learned representation space of DNNs. As mentioned earlier, the factors captured by dimensionality reduction techniques, while being discriminative, are robust to input perturbations. This motivates the thesis behind this work---if we enforce DNNs to learn representations that lie in a low-dimensional subspace (for the entire dataset), we would obtain more robust classifiers while preserving their discriminative power.

Ideally, to encourage learning low-dimensional representations, we would like to
insert a \emph{dimensionality reduction} ``module'' in DNNs and develop an end-to-end
training method that simultaneously does supervised training and dimensionality
reduction. At first glance, using SVD to project representations into lower dimensional subspace seems viable. However, this approach encounters challenges because of the large number of training examples, and, also due to the fact that the representations themselves change after every parameter update (discussed later in detail). A workaround could be to design architectures with bottlenecks similar to auto-encoders~\citep{Hinton2006}. The fact that most of the state-of-the-art networks do not have such bottlenecks limit their usability.

Our work provides the benefits of dimensionality reduction by inserting a
\emph{virtual layer} (not used at prediction time) and augmenting the
loss function to induce low-rank representations. Precisely, we propose a
low-rank Regularizor (LR) that (1) does not put any restriction on the network
architecture,%
\footnote{It puts no \emph{direct} restriction, though of course any extra regularizion will produce an inductive bias.}
(2) is end-to-end trainable, and (3) is efficient in that it allows
mini-batch training. LR explicitly {\em enforces} representations to lie in a
linear subspace with low {\em intrinsic} dimension and is guaranteed to provide
low-rank representations for the entire dataset even when trained 
using mini batches. As LR is a virtual layer, it can
be applied to any intermediate representations of DNNs. It is sensible to do so
as, DNNs, actually learn hierarchical representations, one after another.
These intermediate representations are known to capture interesting properties
such as semantic meaning or concepts needed to improve the discriminative power
of the penultimate layer representation. %

Apart from successfully reducing the dimensionality of learned representations,
DNNs trained with LR turn out to be significantly more robust to input
perturbations, both adversarial and random, while providing modest improvements
over the unperturbed test accuracy. This is of particular
interest as it suggests that adding well-thought priors over factors
influencing the representation space (e.g. low-rank prior over 
representations) might further improve the robustness of DNNs,
before {\em actually} reaching the limit beyond which robustness comes at a
cost, be it computational~\citep{goodfellow2014explaining,madry2018towards},
statistical~\citep{schmidt2018adversarially} or a loss in
accuracy~\citep{tsipras2018robustness}.

Lastly, because of the low-dimensionality, we are able to compress
representations by a significant factor without losing its discriminative
power. Thus, discriminative features of the data can be stored using very
little memory. For example, we show in one of our experiments that, even with a
$5$-dimensional embedding~(400x compression), the model with LR looses only $6\%$
in accuracy.

\section{Deep Low-Rank Representations}
\label{sec:low_rank}
\label{sec:what-low-rank}
Consider $f: \mathbb{R}^p \mapsto \mathbb{R}^k$ to be a feed-forward
multilayer NN that maps $p$ dimensional input $\vx$ to a $k$
dimensional output $\vy$. We can decompose this into two
sub-networks, one consisting of the layers before the $\ell^{\it th}$
layer and one after i.e.  $f(\vx) = f^{+}_\ell\br{f^{-}_\ell(\vx;
  \phi) ; \theta}$, where $f^{-}_\ell (.;\phi)$, parameterized by
$\phi$, represents the part of the network up to layer
$\ell$ and, $f^{+}_\ell(.;\theta)$ represents the part of the
network thereafter. With this notation, the $m$ dimensional
representation (or the activations) of any layer $\ell$ can simply be
written as $\va = f^{-}_\ell(\vx; \phi) \in \mathbb{R}^m$. In
what follows, we first formalize the low-rank representation problem,
then provide insights on its difficulty, and finally propose our
approach to solve it approximately and efficiently.

\noindent\textbf{Problem Formulation:} Let $\vec{X} = \{{\bf x}_i\}_{i=1}^n$
and $\vec{Y} = \{{\bf y}_i\}_{i=1}^n$ be the set of inputs and outputs
of a given training dataset. By slight abuse of notation, we
define $\vec{A}_\ell = f^{-}_\ell(\vec{X}; \phi) =\bs{\vec{a}_1,\cdots,\vec{a}_n}^\top\in \mathbb{R}^{n
  \times m}$ to be the activation matrix of the entire dataset, so
that $\vec{a}_i$ is the activation vector of the $i$-th sample. Note
that for most practical purposes $n\gg m$. In this setting,
the problem of learning low-rank representations can be
formulated as a constrained optimization problem as follows:
\begin{align}
  \label{eq:opt_prob}
	\min_{\theta, \phi}\mathcal{L}(\vec{X}, \vec{Y}; \theta, 
  \phi),~\text{s.t.}~~&\rank{\vec{A}_\ell} = r,\end{align} 
where $\mathcal{L}(.)$ is the loss function and $r < m$ is the
desired rank of the representations at layer $\ell$. The rank $r$ is a
hyperparameter (though empirically not a sensitive one as observed
in our experiments). Throughout this section, we consider imposing
low-rank constraints over only one intermediate layer, however, the
methods provided here can be easily extended to any number of
layers. Note that both the loss and the constraint set of the above
objective function are non-convex. One approach to optimize this would
be to perform alternate minimization, first over the loss (gradient
descent) and then projecting onto the non-convex set to satisfy the
rank constraint.

Since $n \gg m$, ensuring $\rank{\vec{A}_\ell} = r$
would  be practically infeasible as it would require performing SVD at
every iteration (at cost $\mathcal{O}(n^2m)$). A feasible, but incorrect, approach
would be do this on mini-batches, instead of the entire dataset. However,
projecting each mini-batch onto the space of rank $r$ matrices does
not guarantee that the activation matrix of the entire dataset will be of
rank $r$, as each of these mini-batches can lie in very different subspaces.
Computational issues aside, another crucial problem stems
from the fact that the activation matrix $\vec{A}_\ell = f^{-}_\ell(.;
\phi)$  is itself parameterized by $\phi$ and thus $\phi$ needs to be
updated in a way such that the generated $\vec{A}_\ell$ is low
rank and it is not immediately clear how to use the low-rank projection
of $\vec{A}_\ell$ to achieve this.
One might suggest to first fully train the network and then obtain low
rank projections of the activations. However, as our experiments show,
this procedure does not provide the two main benefits: compression and robustness. 

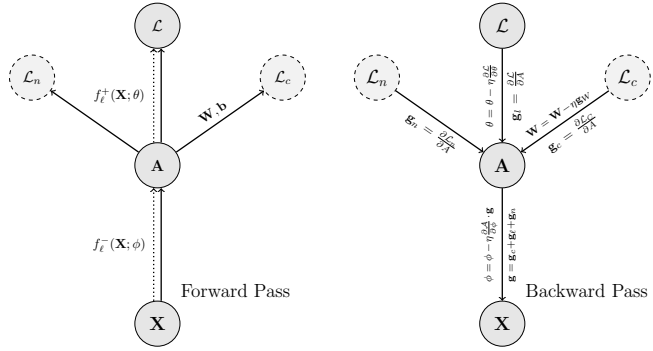
\begin{figure}\centering%
\scalebox{0.5}{\begin{tikzpicture}
\node [tensors] (data) { \Large$\mathbf{X}$};
\node [tensors, above=3cm of data] (activations) {
  \large$\mathbf{A}$};
\node [rectangle, rounded corners,  above right=0.8cm of data,
yshift=-0.4cm, xshift=-0.5cm]
(fwd_lbl) { \Large Forward Pass};
\node [tensors, above=2.5cm  of activations] (output) { \large$\mathcal{L}$};
\node [temp_tensors,  above right =3.5cm of activations, yshift=-1cm] (recons) { \large$\mathcal{L}_c$};
\node [temp_tensors, above left =3.5cm of activations, yshift=-1cm] (normed) { \large$\mathcal{L}_n$};
\draw[->, thick, color=black, line width=1pt] ([xshift=1 * 0.1 cm]data.north)--   node[midway, parameters] {\normalsize$f_{\ell}^{-}(\mathbf{X};\phi)$}  ([xshift=1 * 0.1 cm]activations.south);
\draw[->, dotted, color=black, line width=1pt] ([xshift=-1 * 0.1 cm]data.north)--  ([xshift=-1 * 0.1 cm]activations.south);

\draw[->, thick, color=black, line width=1pt] ([xshift=1 * 0.1
cm]activations.north)  --  node[parameters, align=center, midway] {\normalsize$f_{\ell}^{+}(\mathbf{X};\theta)$}  ([xshift=1 * 0.1 cm]output.south);
\draw[->, dotted, color=black, line width=1pt] ([xshift=-1 * 0.1 cm]activations.north)  -- ([xshift=-1 * 0.1 cm]output.south);

\draw[->, thick, color=black, line width=1pt] (activations) -- node[parameters, midway,above, sloped] {\normalsize$\mathbf{W},\mathbf{b}$} (recons);
\draw[->, thick, color=black, line width=1pt] (activations) -- (normed);
\end{tikzpicture}}\hspace{15pt}
\scalebox{0.5}{\begin{tikzpicture}
\node [tensors] (data) { \Large$\mathbf{X}$};
\node [tensors, above=3cm of data] (activations) { \Large$\mathbf{A}$};
\node [tensors, above=2.5cm  of activations] (output) { \Large$\mathcal{L}$};
\node [temp_tensors, above right=3.5cm of activations, yshift=-1cm] (recons) { \Large$\mathcal{L}_c$};
\node [temp_tensors, above left=3.5cm of activations, yshift=-1cm] (normed) { \Large$\mathcal{L}_n$};
\node [ rounded corners, dashed, above right=0.8cm of data,
xshift=-0.5cm,
yshift=-0.4cm]
(fwd_lbl) {\Large  Backward Pass};%
\draw[->, thick, color=black, line width=1pt] ([yshift=0 * 0.1
cm]activations.south) --   node[midway, parameters, below, sloped, rotate=180] {$\mathbf{g}
  =\mathbf{g}_c + \mathbf{g}_\ell +\mathbf{g} _n$}  node[parameters,
above, sloped, rotate=180]{$\phi=\phi-\eta\frac{\partial \mathcal{A}}{\partial \phi}\cdot \mathbf{g}$} ([yshift=0 * 0.1 cm]data.north);

\draw[->, thick, color=black, line width=1pt] (output)  --
node[parameters, below, sloped, rotate=180]
{\normalsize$\mathbf{g}_l=\frac{\partial \mathcal{L}}{\partial A}$}
node[parameters, above, sloped, rotate=180, allow upside down]{$\theta=\theta-\eta\frac{\partial \mathcal{L}}{\partial \theta}$}(activations) ;

\draw[->, thick, color=black, line width=1pt] (recons)  --
node[parameters, midway,above, sloped,
align=center]{$\mathbf{W}=\mathbf{W}-\eta\mathbf{g}_W$ }
node[parameters, midway,below, sloped, align=center, sloped]{\normalsize $\mathbf{g}_c = \frac{\partial \mathcal{L}_C}{\partial A}$ }  (activations);

\draw[->, thick, color=black, line width=1pt] (normed) --
node[parameters, midway,sloped,below,align=center]{\normalsize $\mathbf{g}_n = \frac{\partial \mathcal{L}_n}{\partial A}$ }  (activations);

\end{tikzpicture} }
\caption{{\bf The LR layer}. Left figure shows the {\em forward pass}, {\em solid
  edges} show the flow of data during training, {\em dashed edges}
  show the flow of data during inference, and {\em dashed nodes} indicate the
  {\em virtual layer}. Right figure shows the {\em backward pass}.}
  \label{fig:LRtikz}
\end{figure}

\textbf{Low-Rank Regularizer:} We now describe our regularizer that
encourages learning low-rank activations, and, if optimized properly, guarantees that the rank of the activation matrix (of any
size) will be  bounded by $r$. 
We do this by introducing an auxiliary %
parameter~$\vec{W}\in\reals^{m\times m}$, augmenting the loss function
and shifting the low-rank constraint from the activation
matrix $\vec{A}_\ell$~(as in \eqref{eq:opt_prob}) to this auxiliary parameter
$\vec{W}$. Switching the rank constraint to $\vec{W}$ has two
advantages: The rank constraint is put (a) on a matrix that is independent
of the batch/dataset size, and (b) on a parameter as opposed to a
data-dependant intermediate tensor~(like activations), and can thus be updated
directly at each iteration. Combining
these ideas, our
final augmented objective function, with the regularizer, is as follows:
\begin{align}
  \label{eq:aug_opt_2}
  &\min_{\theta, \phi, \vec{W}, \vb} \mathcal{L}(\vec{X}, \vec{Y}; \theta, \phi) + \mathcal{L}_c(\vec{A}_\ell; \vec{W},\vb) + \mathcal{L}_n(\vec{A}_\ell) \\
  &\text{s.t.,} \vec{W}\in \mathbb{R}^{m\times m}, \rank{\vec{W}} = r,~ \vb\in \mathbb{R}^m,\vec{A}=f^{-}_\ell(\vec{X}; \phi),\nonumber
\end{align}
where,
\begin{align}
  \label{eq:aug_opt_3}
  \mathcal{L}_c(\vec{A}; \vec{W}, \vb) &= \frac{1}{n}\sum_{i=1}^{n}\norm{\vec{W}^\top(\va_i+\vb) - (\va_i+\vb)}_2^2, \nonumber \\
   \text{and} \; \; \mathcal{L}_n(\vec{A}) &= \frac{1}{n}\sum_{i=1}^n \Big|1 - \norm{\va_i}\Big|.\nonumber
\end{align}

The intuition behind our approach is that we add a virtual (doesn't modify the
main network) branch at layer $\ell$, the goal of which is to learn a low-rank
identity projection for $\vec{A}_\ell$. It is well known that for a low-rank
matrix, there exists a low-rank projection that projects the matrix onto
itself\footnote{It is the PCA problem}. Thus, if such a low-rank identity map
does not exist for $\vec{A}_\ell$ i.e. $\vec{A}_\ell$ is not low-rank, then the
goal of our regularizer is to jointly penalize the activations to make them
low-rank and learn that low-rank identity map.

\noindent Specifically, minimizing the projection loss $\mathcal{L}_c$
ensures that the affine low-rank mappings~($\vec{A}\vec{W}$) of the activations
are close to the original ones i.e. $\vec{AW} \approx \vec{A}$. As $\vec{W}$ is
low-rank, $\vec{A}\vec{W}$ is also low-rank and thus implicitly~(due to
$\vec{A}\vec{W}\approx\vec{A}$) it forces the original activations $\vec{A}$ to
be low-rank. The bias $\vb$ allows for the activations to be translated before
projection.%
\footnote{We use the term \emph{projection} loosely as we do not strictly
constrain $\vec{W}$ to be a projection matrix.}
However note that setting $\vec{A}$ and $\vb$ close to zero trivially minimizes $\mathcal{L}_c$, especially when the activation dimension is large. We observed this to happen in practice as it is easier for the network to learn $\phi$ such that the activations and the bias are very small in order to minimize $\mathcal{L}_c$. To prevent this, we use $\mathcal{L}_n$ that acts as a norm constraint on the activation vector to keep the activations sufficiently large.
Lastly, as the rank constraint is now over $\vec{W}$ and $\vec{W}$ is a {\em global} parameter independent of the
dimension~$n$~(i.e. size of minibatch/dataset) we can use
mini-batches to optimize~\eqref{eq:aug_opt_2}. Since $\rank{\vec{AW}}
\leq r$ for any $\vec{A}$, optimizing over mini-batches 
still ensures that the entire activation matrix is
low-rank. Intuitively, this is due to the fact that the basis vectors
of the low-rank affine subspace are now captured by the low-rank parameter
$\vec{W}$. Thus, as long as $\cL_c$ is minimized for all the mini-batches, $\vec{A} \approx \vec{A}\vec{W}$ holds for the entire dataset, leading to the low-dimensional support. 
\begin{algorithm}[h!]  \centering
   \caption{Low-Rank (LR) Regularizer}
   \label{alg:lr_layer_main}
   \begin{algorithmic}[1]
		\INPUT Activation Matrix $\vec{A}_l$, gradient input ${\bf g}_l$
     \STATE{${\bf Z}  \gets (\vec{A}_l+\vb)^\top \vec{W}$}\footnotemark \COMMENT {forward propagation towards the virtual LR layer}
     \STATE {$\mathcal{L}_c \gets \frac{1}{b} \norm{{\bf Z} - (\vec{A}_l + \vb)}_2^2$}
     \COMMENT{the reconstruction loss}
     \STATE {$\mathcal{L}_n \gets \frac{1}{b}\sum_{i=1}^{b}\big\vert\mathbf{1} - \norm{\va_i} \big\vert$}
     \COMMENT{norm constraint loss}
     \STATE {${\bf g}_W \gets \frac{\partial \mathcal{L}_c}{\partial \vec{W}},\enskip {\bf g}\gets {\bf g}_l + \frac{1}{b}\sum_{i=1}^{b}\frac{\partial (\mathcal{L}_c + \mathcal{L}_n)}{\partial \va_i}$}
	  \STATE{$\vec{W} \gets \vec{W} - \lambda {\bf g}_W$}%
	  \STATE{$\vec{W} \gets \prnk{k}{\vec{W}}$ \label{alg:hard_thresh_step}}\COMMENT{hard thresholds the rank of $\vec{W}$}
     \OUTPUT ${\bf g}$
     \COMMENT{the gradient to be passed to the layer before}
\end{algorithmic}
\end{algorithm} \footnotetext{$\vec{A} + \vb$ is computed by adding $\vb$ to
every row in $\vec{A}$}

\noindent\textbf{Implementing the Low-Rank Regularizer}:
Algorithm~\ref{alg:lr_layer_main} (further details in Appendix~\ref{sec:alg-lr})
describes the forward and the backward operations of the low-rank virtual layer
for a mini-batch of size $b$. We present a flow diagram for the same in
Figure~\ref{fig:LRtikz}. This layer is virtual in the sense that it only
includes the parameters $\vec{W}$ and $\vb$ that are not used  in the NN model
itself to make predictions, but nonetheless the corresponding loss term
$\mathcal{L}_c$ does affect the model parameters through gradient updates. 
Algorithm~\ref{alg:lr_layer_main} alternately minimizes the augmented loss
function~(Line 1-5) and projects the auxiliary parameter $\vec{W}$ to the space
of low-rank matrices. The algorithm is reminiscent of the Singular Value
Projection~(SVP) Algorithm~\citep{jain2010guaranteed}. However, to make the
algorithm practical for high dimensional representation spaces, we use
ensembled Nystr\"om SVD~\citep{williams2001using, halko2011finding,
kumar2009ensemble} for the projection in Step 6~(details in
Appendix~\ref{sec:alg_details_app}).

\section{Experiments}
\label{sec:experiments}
We perform a wide range of experiments to show the effectiveness of imposing low-rank constraints on the representations of a dataset using our proposed LR. Briefly,
\begin{compactitem}
\item We show that LR indeed reduces the rank of the representation space and improves robustness for both, {\em adversarial} and {\em random} noise input perturbations, while providing modest improvements over accuracy.
\item We show results on both, white-box and black-box adversarial attacks.
\item In addition, we compare LR with various different approaches such as Stable Rank Normalization~\citep{sanyal2020stable}, Pruning~\citep{lee2018snip}, and Spectral Normalization~\citep{miyato2018spectral} where the parameter space, instead of the activations, is being either compressed or encouraged to have low {\em effective} rank.
\item Lastly, we provide analyses to show that the representations learned using LR is extremely discriminative and, because they lie in a linear space with low {\em intrinsic} dimension, can be compressed significantly. 
\end{compactitem}

\paragraph{Architectures and Datasets}
We use the standard ResNet~\citep{HZRS:2016} architecture with four residual blocks. To
capture the effect of network depth, we use ResNet-50~(R50) and
ResNet-18~(R18). Since LR can be applied to any representation layer in the
network, we investigate the following two configurations:
\begin{compactitem}
\item {\bf 1-LR}, where the LR layer is located just before the last fully-connected (FC) layer that contains 512 and 2048 units in ResNet-18 and ResNet-50, respectively.
\item {\bf 2-LR}, where there are two LR layers, the first one positioned before the fourth ResNet block with $16,384$ incoming units, and the second one just before the FC layer as in ResNet 1-LR.
\item {\bf N-LR}, without any LR layer (standard setting).
\end{compactitem}
Other model formulations that we consider such as {\em bottle-LR} and
{\em hybrid max-margin} models will be discussed when introduced. 
We use CIFAR10 and CIFAR100 datasets and show results using the coarse labels (20 classes) of CIFAR100. Further experimental details and additional experiments on other models~(VGG19) and datasets~(CIFAR100
with fine labels and SVHN) are reported in
Appendix~\ref{sec:experimental-details}. Experimentally we observed
that the target rank is not a sensitive hyper-parameter, as the
training enforces a much lower rank than what it is set to. For our
experiments, we set a target rank of $100$ for the layer before the
last FC layer and $500$ for the layer before the fourth ResNet block. 

\paragraph{Effective Rank} Before we discuss our primary findings,
here we empirically show the effect of LR on the effective rank of
activations. We use the standard  \emph{variance ratio}, defined as
$\sum_{i=1}^r\sigma_i^2/\sum_{i=1}^p\sigma_i^2$, where $\sigma_i$'s
are the ordered singular values of the given activation matrix $A$,
$p$ is the rank of the matrix, and $r\leq p$. Given $r$, a
higher value of  variance ratio indicates that a larger fraction of
the total variance in the data is captured in the $r$ dimensional
subspace.

Fig.~\ref{fig:var_1} shows the variance ratio for the activations
before the last FC layer. Note that even for NLR, the effective rank
is as low as 10. Similar low-rank structure was also observed empirically by~\citet{Oyallon_2017_CVPR}. 
However, the LR-models have almost
negligible variance leakage.

Fig.~\ref{fig:var_2} shows the variance ratio for the
activations before the 4\th ~ResNet block. The activation vector is $16,384$-dimensional and the use of the Nystr\"om method ensures computational feasibility. Note, ResNet 2-LR is the
only model that has an LR-layer in that position and these figures show that \emph{2-LR is
the only model that shows a (reasonably) low-rank structure on that
layer}. More experiments provided in the Appendix~
(Fig~\ref{fig:var_ratio_plots_vgg} and Figs.~\ref{fig:var_3},
~\ref{fig:var_4}). 

\begin{figure}[t]
  \begin{subfigure}[t]{0.45\linewidth}
    \centering
    \def\svgwidth{0.95\columnwidth}
    \input{./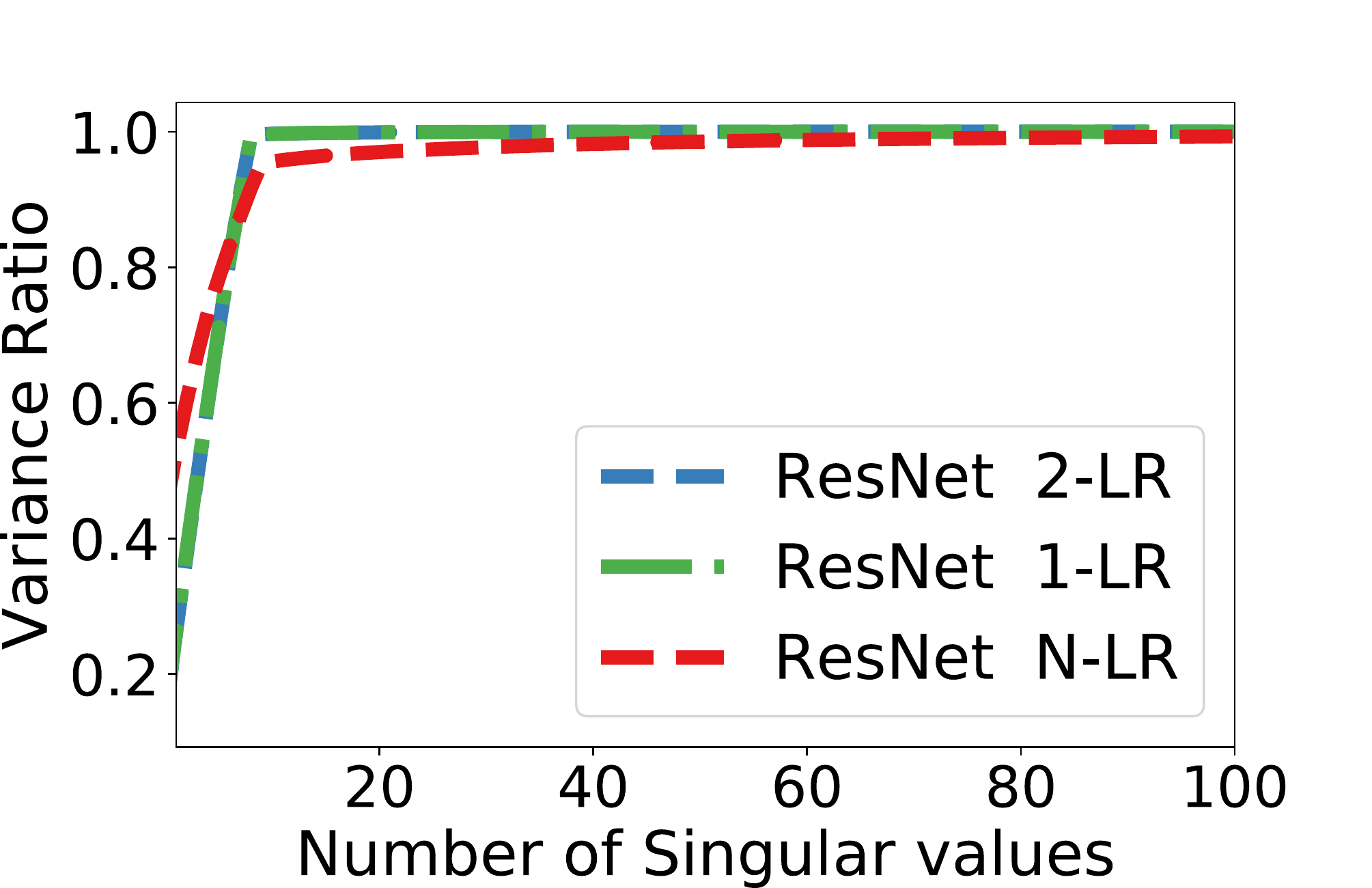_tex}
    \caption{CIFAR10: Activations after last ResNet block.}
    \label{fig:var_1}
  \end{subfigure}\hfill
  \begin{subfigure}[t]{0.45\linewidth}
    \centering
    \def\svgwidth{0.95\columnwidth}
    \input{./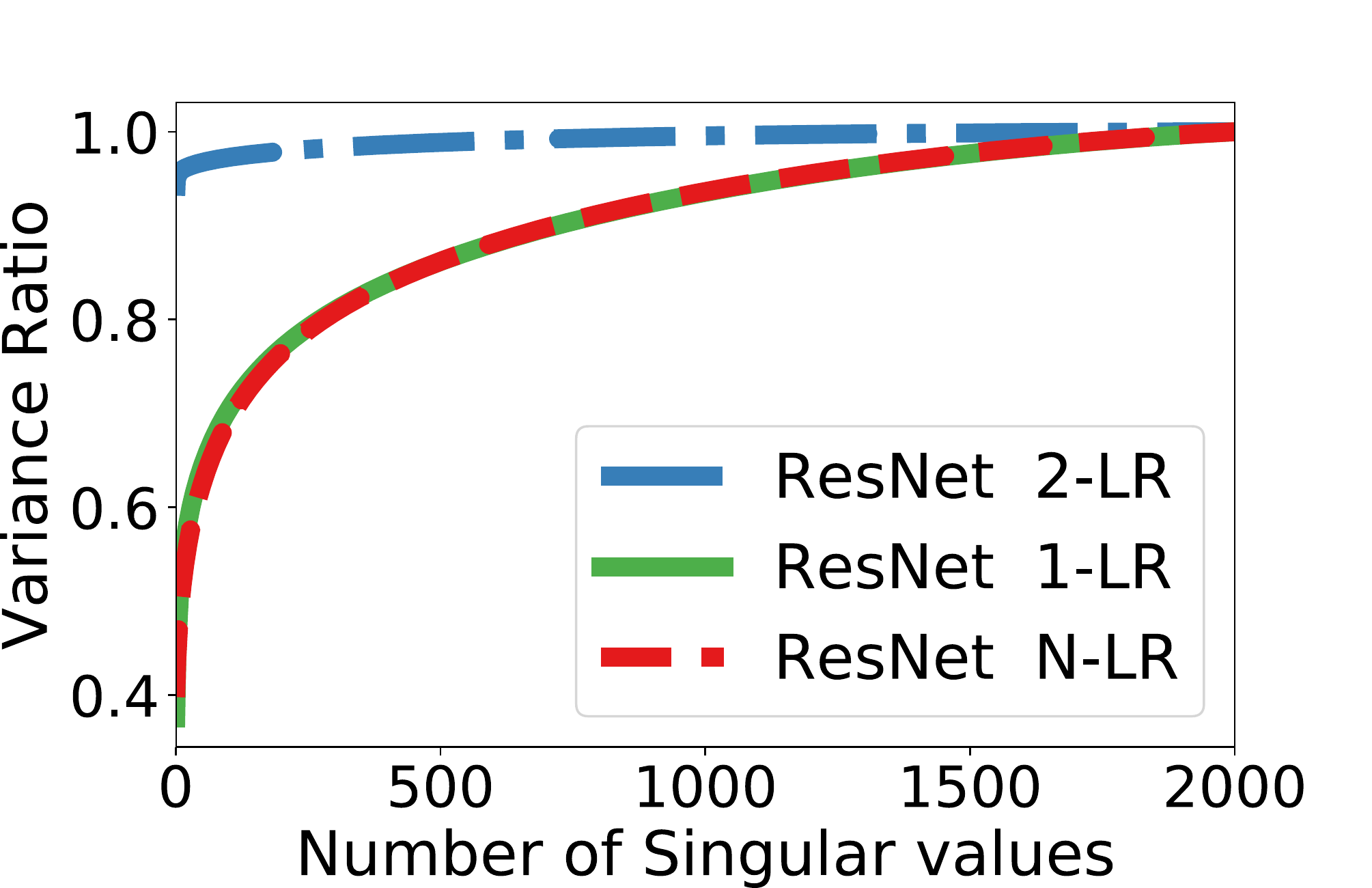_tex}
    \caption{CIFAR10: Activations before last ResNet block}
    \label{fig:var_2}
  \end{subfigure}
  \caption{Variance Ratio  captured by varying number of Singular Values}
  \label{fig:var_ratio_plots}\vspace{-1em}
\end{figure}

\begin{table*}[!htb]\centering\footnotesize
\begin{tabular}{llllllllllllc}
\toprule
&&&&\multicolumn{8}{c}{Adversarial Test Accuracy($\%)$}           &
                                                                     Clean
                                                                    Test Accuracy ($\%$)    \\ \midrule
\multicolumn{4}{c}{$L_\infty$ radius}               &   \multicolumn{2}{c}{$8/255$}      &   \multicolumn{2}{c}{$10/255$}               &   \multicolumn{2}{c}{$16/255$}   &\multicolumn{2}{c}{$20/255$}&       \\
\multicolumn{4}{c}{Attack iterations}            & $7$     & $20$       &$7$    &$20$           &$7$    &$20$      &$7$    &$20$   &       \\\toprule
\multirow{10}{*}{\rotatebox[origin=c]{90}{White Box}} &
                                                         \multirow{5}{*}{\rotatebox[origin=c]{90}{\footnotesize C10}}&  \multirow{2}{*}{R50}       &N-LR   & 43.1 & 31.0  & 38.5 & 21.8 & 31.2 & 7.8   & 28.9 & 4.5  & $\mathbf{95.4}$ \\
&&&1-LR & $\mathbf{79.1}$ & $\mathbf{78.5}$   & $\mathbf{78.6}$ & $\mathbf{78.1}$          & $\mathbf{77.9}$ & $\mathbf{77.0}$  & $\mathbf{77.1}$ & $\mathbf{76.6}$ & $\mathbf{95.4}$  \\\addlinespace
&&  \multirow{3}{*}{R18} &N-LR  & 40.9 & 26.7   & 35.1 & 16.6         & 26.7 & 4.4   &24.3&2.3 & 94.6 \\
&&&1-LR & 48    & 31.3 & 44.4 & 25.4 & 39.6 & 17.9   & 38.2 & 15.7 & $\mathbf{94.9}$ \\
                      &                   &                &2-LR & $\mathbf{54.7}$ & $\mathbf{37.6}$  & $\mathbf{52.4}$ & $\mathbf{33.1}$   & $\mathbf{48.7}$ & $\mathbf{25.7}$  & $\mathbf{48.0}$ & $\mathbf{23.6}$ & 94.5 \\\cmidrule{4-13}
& \multirow{5}{*}{\rotatebox[origin=c]{90}{\footnotesize C100}}&  \multirow{2}{*}{R50}       &N-LR  & 37.2 & 29.9  & 34.1 & 24.6& 29.8 & 15.9 &  34.1 & 13.3  & $\mathbf{85.8}$ \\
                      &                   &                &1-LR&   $\mathbf{45.3}$  & $\mathbf{38.7}$   & $\mathbf{43.7}$ & $\mathbf{35.8}$  & $\mathbf{40.9}$ & $\mathbf{31.5}$  & $\mathbf{40.0}$  & $\mathbf{29.8}$ &$\mathbf{85.8}$ \\\addlinespace
                      &                   &  \multirow{2}{*}{R18}       &N-LR&   30.6  & 23.2 & 26.4 & 16.9  & 20.5 & 7.42 & 18.4  & 5.1 & 84.1 \\
                      &                   &                &1-LR& $\mathbf{34.5}$\ & $\mathbf{25.4}$   & $\mathbf{31.3}$ & $\mathbf{20.2}$ & $\mathbf{27.3}$ & $\mathbf{13.1}$  & $\mathbf{25.7}$ & $\mathbf{10.8}$ & $\mathbf{84.2}$ \\
                      &                   &                &2-LR& 33.82&24.37 & 30.9 & 19.1& 26.8 & 11.83  & 25.41& 9.9 & 84    \\\midrule
\multirow{6}{*}{\rotatebox[origin=c-10]{90}{ Black Box}} &\multirow{4}{*}{\rotatebox[origin=c]{90}{\footnotesize C10}} &  \multirow{1}{*}{R50}       &1-LR&  64.7 & 56.8  & 59.0 & 47.5  & 51.2 & 28.0 &  48.3 & 20.6  &   95.4   \\\addlinespace
                      &            &  \multirow{2}{*}{R18}       &1-LR& 66.6 & 60.8  & 61.1 & 51.0       & 52.2 & 31.52   & 49.8 & 23.6 &  94.9     \\
                      &            &       &2-LR & 68.0 & 62.5   & 62.3 & 53.4  & 53.8 & 33.5 & 50.8 & 25.8 &   94.5    \\\cmidrule{4-13}
                      &\multirow{3}{*}{\rotatebox[origin=c]{90}{\footnotesize C100}} & R50&1-LR&  52.4 & 46.2  & 48.1 & 38.8  & 42.0 & 25.4   & 48.1 & 20.9 &  85.8     \\\addlinespace
                      &     &  \multirow{2}{*}{R18} &1-LR& 53.0 & 48.6  &47.9  & 41.0   & 41.1 & 26.3 &  38.7 & 20.4  &   84.2    \\
                      &     &        &2-LR     &  51.3     &  47.2     & 46.9       & 39.9      & 39.5    & 24.3      &   37.2  &  19.2   &   84    \\\bottomrule
\end{tabular}\caption{Adversarial Test Accuracy against a
  $\ell_\infty$ constrained PGD adversary with the $\ell_\infty$
  radius bounded by $\epsilon$ and the number of attack steps bounded
  by $\tau$. R50 and R18 denotes ResNet50 and ResNet18
  respectively. C10 and C100 refer to CIFAR10 and CIFAR100~(Coarse
  labels) respectively.}\label{tab:adv-robust-cifar}
\end{table*}

\subsection{Adversarial Robustness}
\label{sec:adversarial-attacks-1}
We now begin our analysis on the impact of low-rank representations on
adversarial robustness. We would like to highlight that in all our
experiments, all the models are trained using clean dataset. This is
important as it shows whether training on clean dataset, with
well-thought priors or regularizers, can actually improve adversarial
robustness without compromising with the clean data test accuracy. 
\renewcommand{\arraystretch}{1.2}
\begin{table}[!htb]\centering\small
\begin{tabular}{l@{\hspace{4pt}}l@{\hspace{2.5pt}}c@{\hspace{4.5pt}}l@{\hspace{2.5pt}}c@{\hspace{4.5pt}}l@{\hspace{2.5pt}}c@{\hspace{4.5pt}}l@{\hspace{2.5pt}}c@{}}
\toprule
&\multicolumn{6}{c}{\parbox[c]{10em}{Adversarial Test
           Acc.($\%$)}}\\\midrule%
$L_\infty$ radius &   \multicolumn{2}{c}{$\nicefrac{8}{255}$}      &   \multicolumn{2}{c}{$\nicefrac{10}{255}$}               &   \multicolumn{2}{c}{$\nicefrac{16}{255}$}   &\multicolumn{2}{c}{$\nicefrac{20}{255}$}      \\
Att. iter  & $7$     & $20$       &$7$    &$20$           &$7$
                                                                                                                                         &$20$      &$7$    &$20$          \\\toprule
  N-LR   & 43.1 & 31.0  & 38.5 & 21.8 & 31.2 & 7.8   & 28.9 & 4.5   \\
  SNIP          &   29.4 & 14.5  & 25.0 & 8.0 & 18.5 & 1.3  & 16.2 & 0.4 \\
  SRN    & 47.8& 37.6& 44.4& 31.4& 39.8&  21.3& 37.5& 18.4\\
  SN     & 54.2& 43.8& 50.8& 36.4 & 45.0 & 22.6 & 42.8 & 18.1 \\
  LR~(Ours) & $\mathbf{79.1}$ & $\mathbf{78.5}$   & $\mathbf{78.6}$ & $\mathbf{78.1}$          & $\mathbf{77.9}$ & $\mathbf{77.0}$  & $\mathbf{77.1}$ & $\mathbf{76.6}$ \\\bottomrule
\end{tabular}\caption{Robustness of other regularization/compression
  techniques to Adversarial Perturbations.
}\label{tab:adv-pert-compre}
\end{table}
We recall that adversarial perturbations are well crafted~(almost
imperceptible) input perturbations that, when added to a clean input, flips the prediction
of the model on the input to an incorrect
one~\citet{szegedy2013intriguing}. Various
methods~\citep{szegedy2013intriguing, goodfellow2014explaining,
  kurakin2016adversarial,mosaavi2016} have been proposed in recent
years for constructing adversarial perturbations. 

Here we use the following three {\bf white-box} 
adversarial attacks to perform our experiments:
\begin{compactenum}
\item Iterative Fast Sign Gradient Method (\ifsgm or IFSGM)~\citep{kurakin2016,madry2018towards},
\item Iterative Least Likely Class Method~(\ill or
  ILL)~\citep{kurakin2016adversarial}, and
\item \deepfool~(DFL)~\citep{mosaavi2016}.
\end{compactenum} 
The reader may refer to Appendix~\ref{sec:types-attacks} for further details on the
attacks. Iter-FSGM is essentially equivalent to the Projected Gradient Descent (PGD) with
$\ell_\infty$ projections on the negative loss function~\citep{madry2018towards}.

We also consider {\bf black-box} version of each of the aforementioned adversarial attacks where the noise is constructed using N-LR. This is to avoid situations where LR might be at an advantage due to the low-rank structure that might enforce a form of gradient masking~\citep{tramer2018ensemble}.

\paragraph{Robustness to Adversarial Attacks}
In Table~\ref{tab:adv-robust-cifar}, we measure the adversarial test
accuracy of ResNet18 and ResNet50 models trained on CIFAR10 and
CIFAR100 respectively. Adversarial test accuracy measures the  accuracy of the model,
subjected to an adversarial attack with a fixed perturbation budget,
on a test set. The adversary used here is an $L_\infty$ PGD
adversary~(or IFSGM) that has two main constraints --- the
$\ell_\infty$ radius and the number of attack steps the PGD algorithm
can take. The $\ell_\infty$ radius is chosen from $\bc{\nicefrac{8}{255},\nicefrac{16}{255},\nicefrac{32}{255},\nicefrac{64}{255}}$
with either 7 or 20 attack steps of PGD. This represents a wide variety of
severity in the attack model and our LR model performs much better
than the N-LR model in all the settings including the black-box settings (refer Table~\ref{tab:adv-robust-cifar}). For example, in the case of white-box attack, for C10, R50, $\ell_\infty = \frac{16}{255}$, and attack iteration of $20$, LR is nearly {\bf 10 times more accurate} than N-LR. 

Above results clearly indicate that LR provides low-rank representations that
are robust to adversarial perturbations and also provide modest
improvements on the test accuracy. It is also interesting to note that
in terms of model complexity, a more complex model~(ResNet50) is
significantly more adversarially robust than ResNet18. This was
observed in~\citet{madry2018towards} and the ordering is true for LR
models as well. In fact, the difference in adversarial robustness
between ResNet18 and ResNet50 is much greater for LR models than for N-LR.%

In Table~\ref{tab:adv-pert-compre}, we also compare our LR
with other methods that reduce some form of intrinsic dimensions of
the parameter space. SNIP~\citep{lee2018snip}, a pruning technique, increases the sparsity of the
parameter, SRN~\citep{sanyal2020stable} reduces the~(stable) rank and
spectral norm of the parameters, and SN~\citep{miyato2018spectral}
reduces the spectral norm of the parameters.  We use the best hyper-parameter
settings suggested for these approaches in their manuscripts. For a
description of these methods and other related approaches please refer to
Appendix~\ref{sec:alt_algs}. Our method performs much better than all
three of these methods
indicating that reducing the dimensionality of the representation
space is much more effective than doing so for the parameter space
when it comes to robustness of the network. 

\begin{table}[t]\centering\small
  \begin{tabular}[h!]{c@{\quad}c@{\enskip}c@{\enskip}c@{\enskip}c}\toprule
   &R18&$\rho$ [DFL]&$\rho$ [ILL]&$\rho$ [IFSGM]\\\hline
    \multirow{3}{0.7cm}{White Box}&\textsf{2-LR}&$\mathbf{1.8\times 10^{-1}}$&$9.8\times 10^{-2}$&$\mathbf{7.6\times 10^{-2}}$\\
           &\textsf{1-LR}&$1.7\times 10^{-1}$&$\mathbf{1.1\times 10^{-1}}$&$6.0\times 10^{-2}$\\
           &\textsf{N-LR}&$1.6\times10^{-2}$&$2.4\times10^{-2}$&$2.1\times10^{-2}$\\\hline
    \multirow{2}{0.7cm}{Black Box}&\textsf{2-LR}&$\mathbf{5.5\times10^{-2}}$&$\mathbf{2.0\times 10^{-1}}$&$\mathbf{7.5\times 10^{-2}}$\\
           &\textsf{1-LR}&$4.7\times10^{-2}$&$1.8\times 10^{-1}$&$5.6\times10^{-2}$\\\hline
  \end{tabular}
  \caption{\small Minimum perturbation required for $99\%$ Adversarial
    Misclassification by ResNet18 models on
    CIFAR10.~Table~\ref{tab:adv_rob_pert_eps} in Appendix shows the $\ell_\infty$ perturbations
used.}
\label{tab:adv_rob_pert}
\end{table}
\renewcommand{\arraystretch}{1}

\paragraph{Accuracy vs the amount of adversarial noise}
Next, we compare the change in accuracy
of adversarial classification with respect to the actual amount of
noise added (as opposed to the perturbation budget as in Table~\ref{tab:adv-robust-cifar}). The amount of noise added can be measured using the
normalized $L_2$ dissimilarity measure ($\rho$), defined as:
$$ \rho = \bE\left[\norm{\vx_a -\vx_d}_2/\norm{\vx_d}_2\right],$$
where $\vx_d$ and $\vx_a$ are the clean and adversarially perturbed
samples, respectively. It  measures the magnitude of the
noise~\citep{mosaavi2016} in the input corresponding to a 
certain adversarial misclassification rate\footnote{Similar to the setting in~\citet{kurakin2016adversarial}, the
noise is added for a pre-determined number of
steps.}.

Here we also consider another model, for comparison, we call {\em Bottle-LR} model. It contains an {\em explicit bottleneck} low-rank layer, rather than \lr, which is essentially a fully connected layer without any non-linear activation where the weight matrix $\vec{\bar{W}}\in\reals^{q\times q}$ is parameterized by $\vec{\bar{W}}_l\in{\reals^{q\times r}}$, where $r\leq q$, so that $\vec{\bar{W}}=\vec{\bar{W}}_l\vec{\bar{W}}_l^\top$. Note, by design, it can not have rank greater than $r$. 

Figure~\ref{fig:adv_pert} shows that as the noise increases, the accuracy of N-LR models decreases much
faster than the LR models.  Specifically, to reach an adversarial
mis-classification rate of $50\%$, our models require about twice the noise as
the N-LR or Bottle-LR.
~\emph{Therefore, for \emph{all} kinds of attacks we considered, LR models consistently
outperform N-LR and Bottle-LR.} More experiments provided in Appendix~(Fig~\ref{fig:adv_pert_svhn_vgg}).
\begin{wrapfigure}{r}{.6\textwidth} \vspace{-0.2em}
\parbox[b][22pt][t]{0.17\linewidth}{Input}
\def\svgwidth{0.18\linewidth}
\hspace{-2pt}\input{./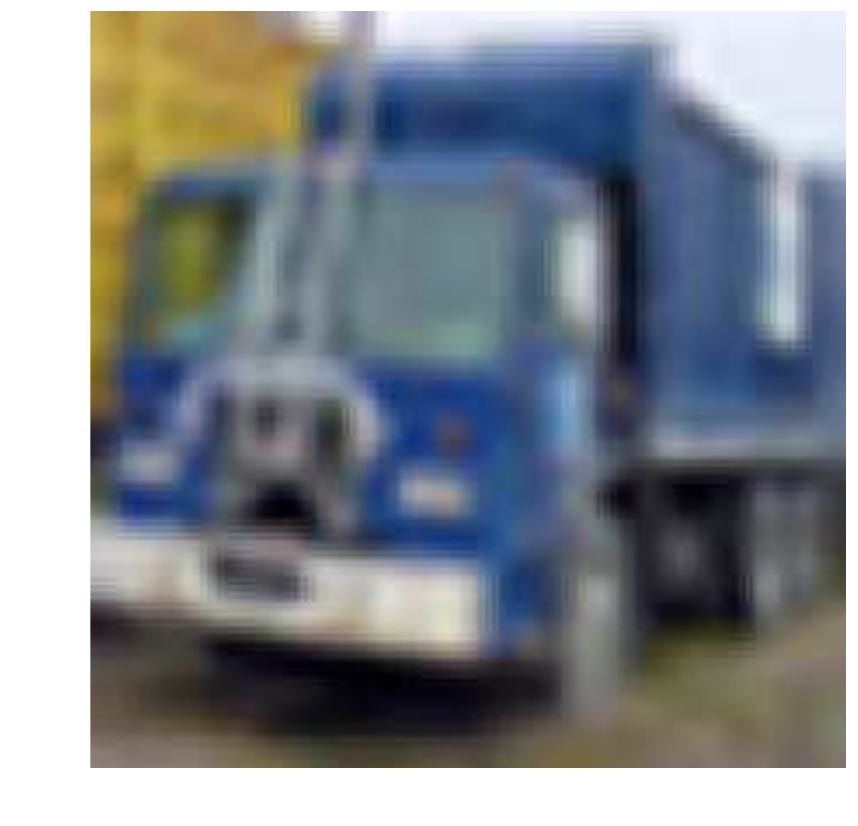_tex}\hspace{0pt}
\def\svgwidth{0.18\linewidth}
\input{./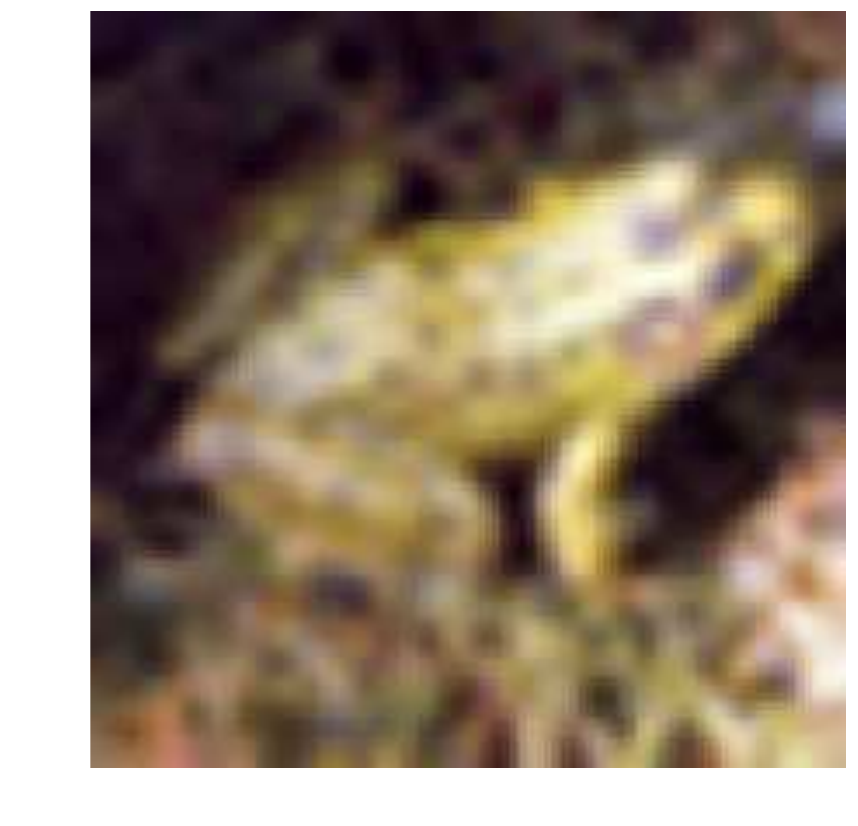_tex}%
\hspace{0pt}
\def\svgwidth{0.18\linewidth} \input{./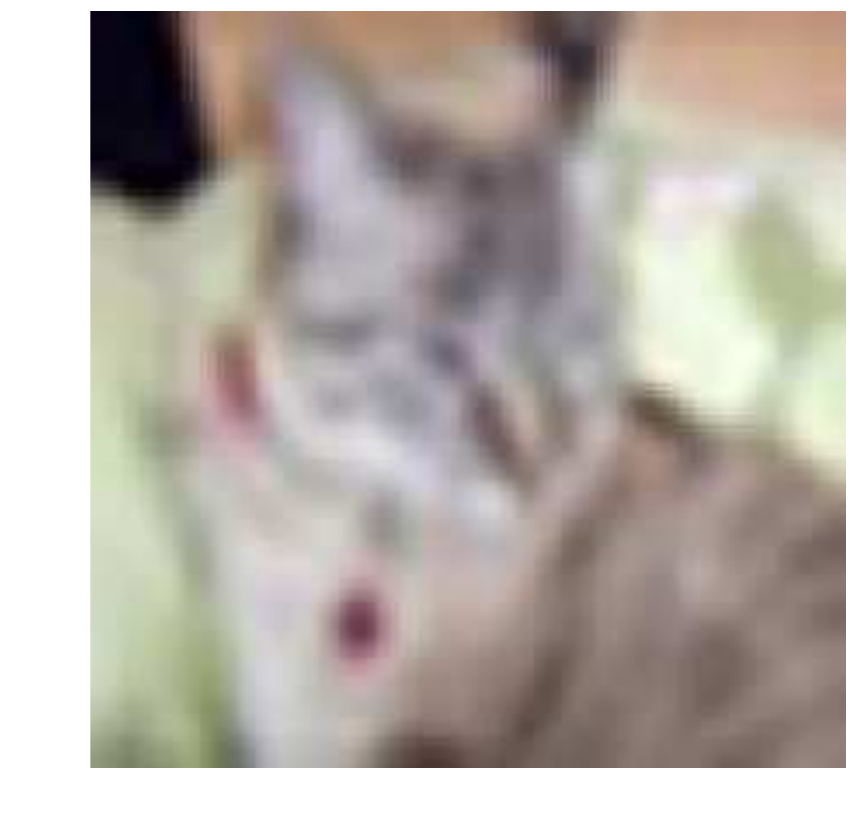_tex}\\
\parbox[b][22pt][t]{0.17\linewidth}{{2-LR}}\def\svgwidth{0.18\linewidth}
\input{./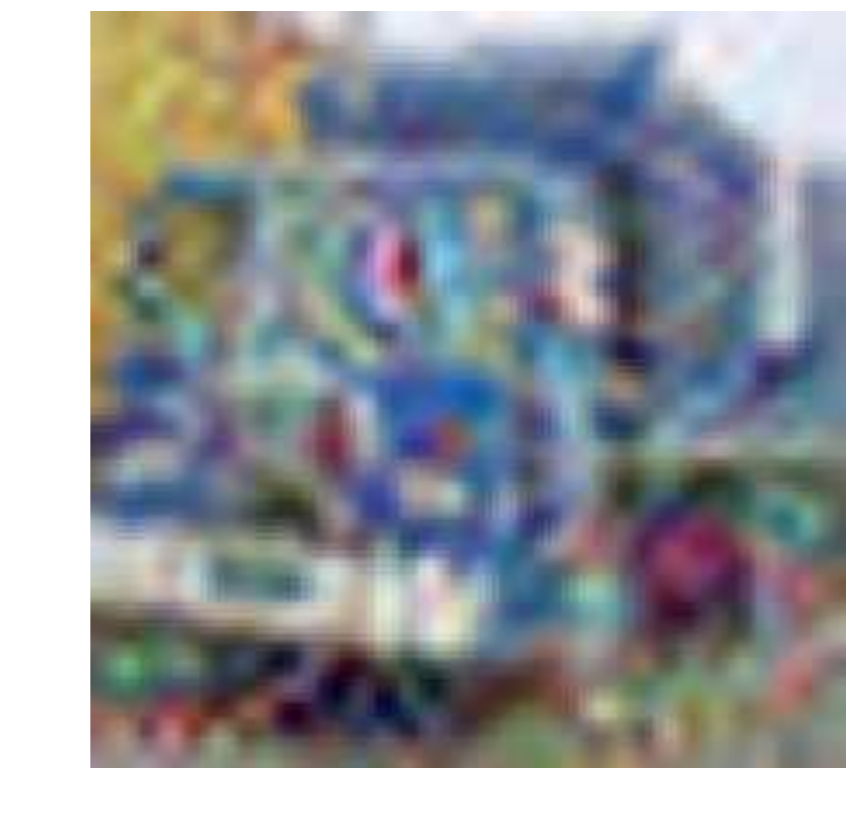_tex}
\def\svgwidth{0.18\linewidth} \input{./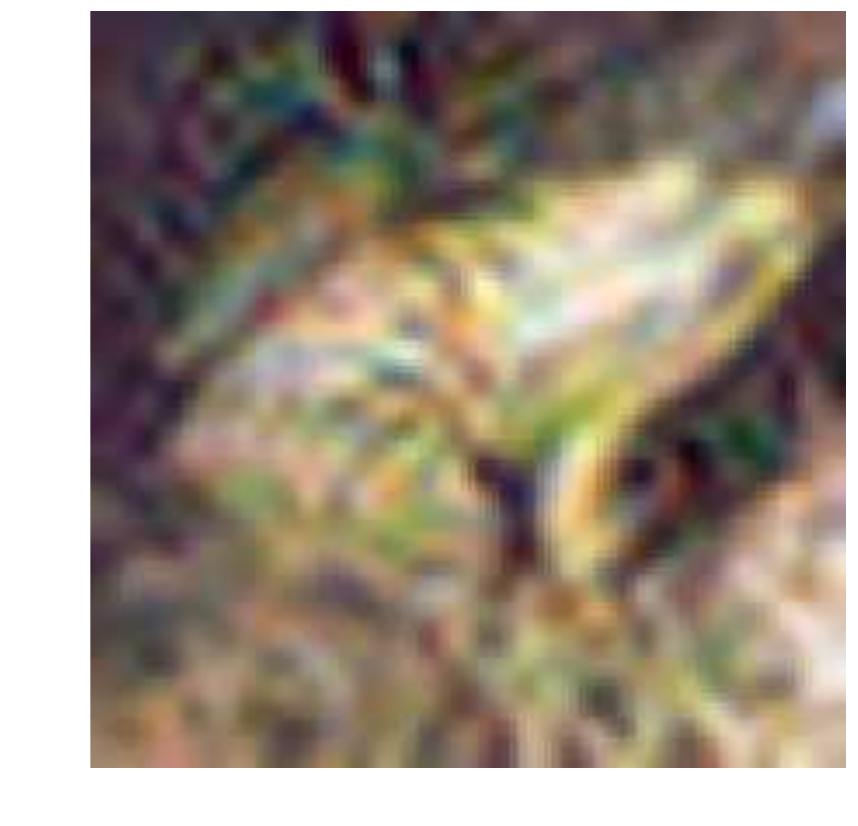_tex}
\def\svgwidth{0.18\linewidth} \input{./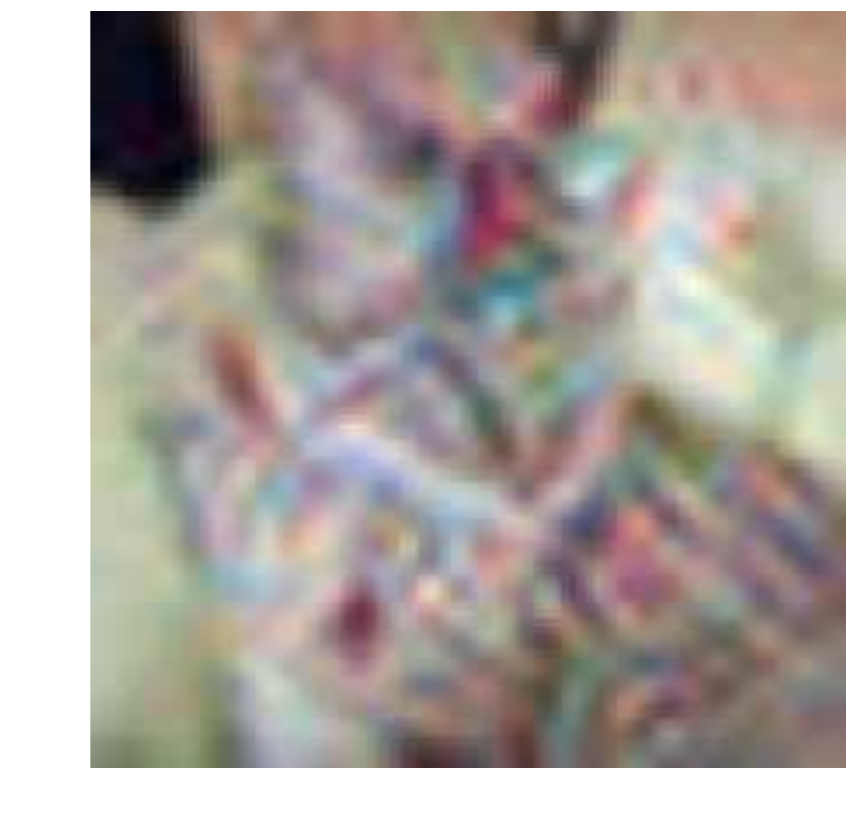_tex}\\
\parbox[b][22pt][t]{0.17\linewidth}{{1-LR}}\def\svgwidth{0.18\linewidth}
\input{./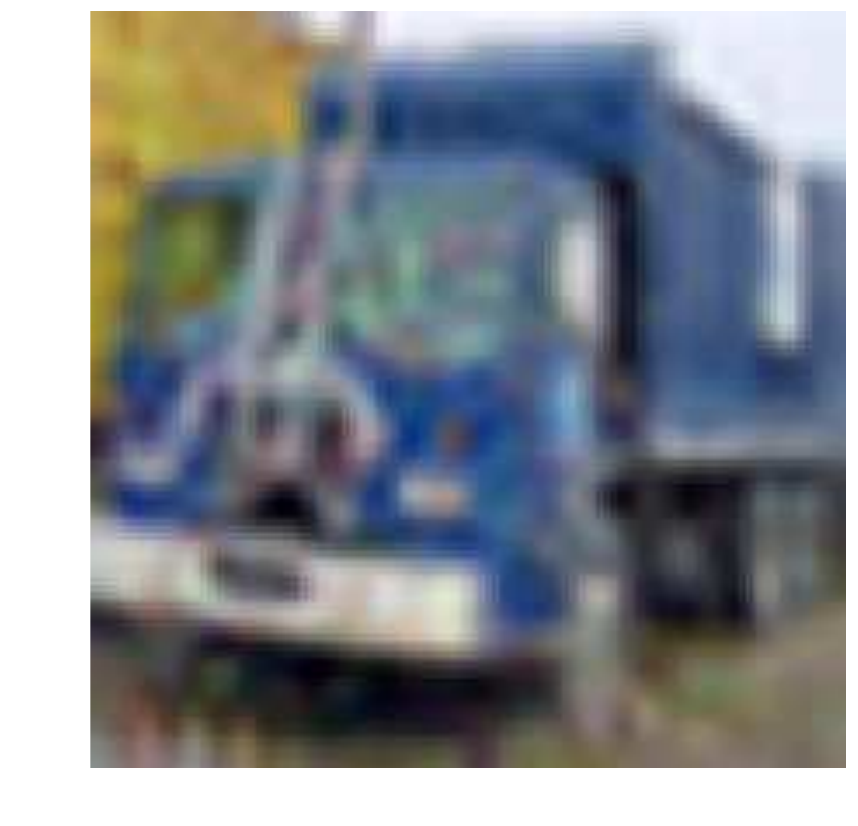_tex}
\def\svgwidth{0.18\linewidth} \input{./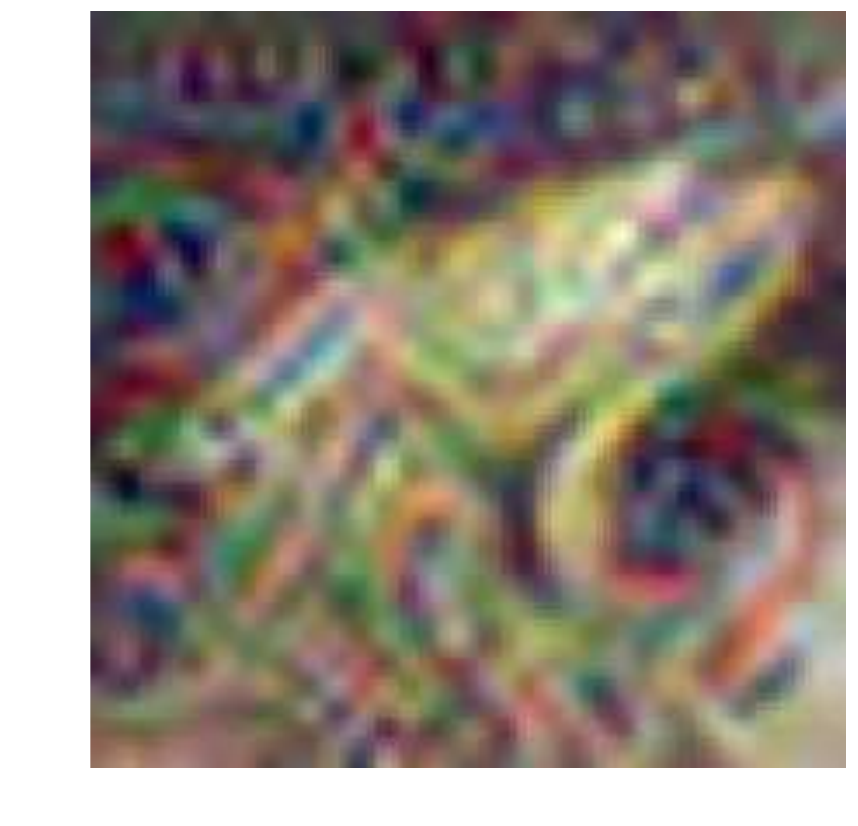_tex}
\def\svgwidth{0.18\linewidth} \input{./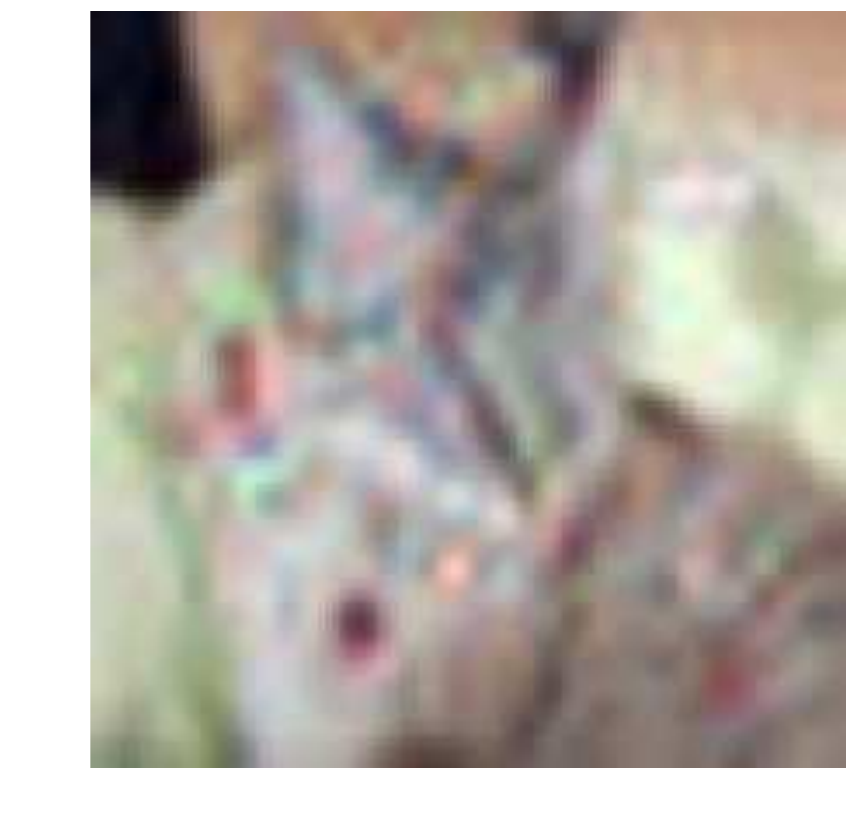_tex}\\
\parbox[b][22pt][c]{0.17\linewidth}{ResNet}\def\svgwidth{0.18\linewidth}
\input{./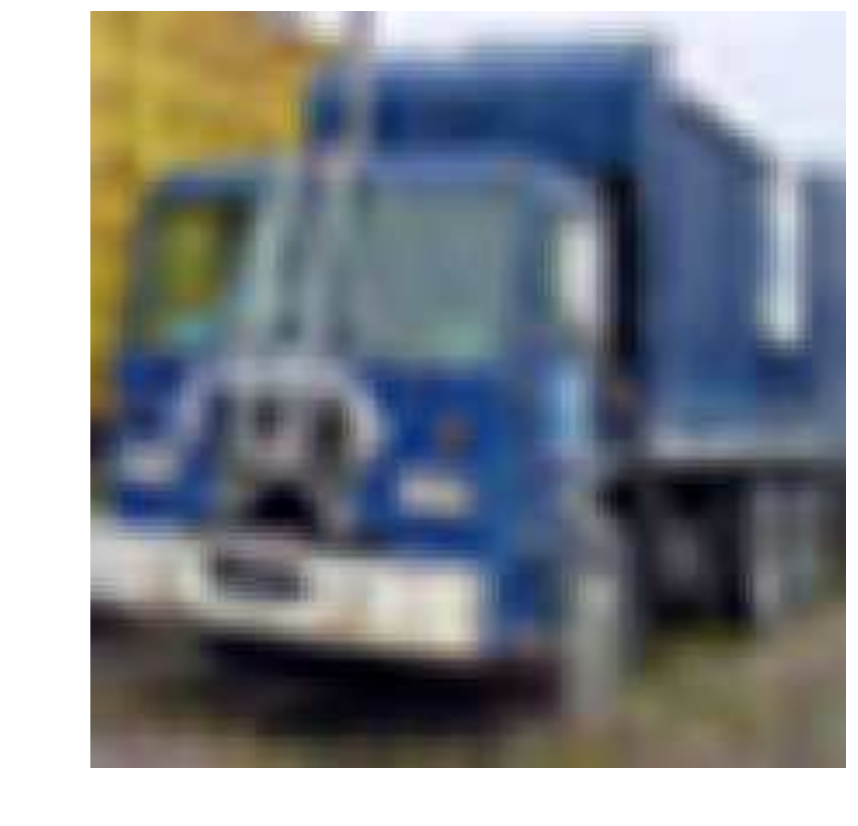_tex}
\def\svgwidth{0.18\linewidth} \input{./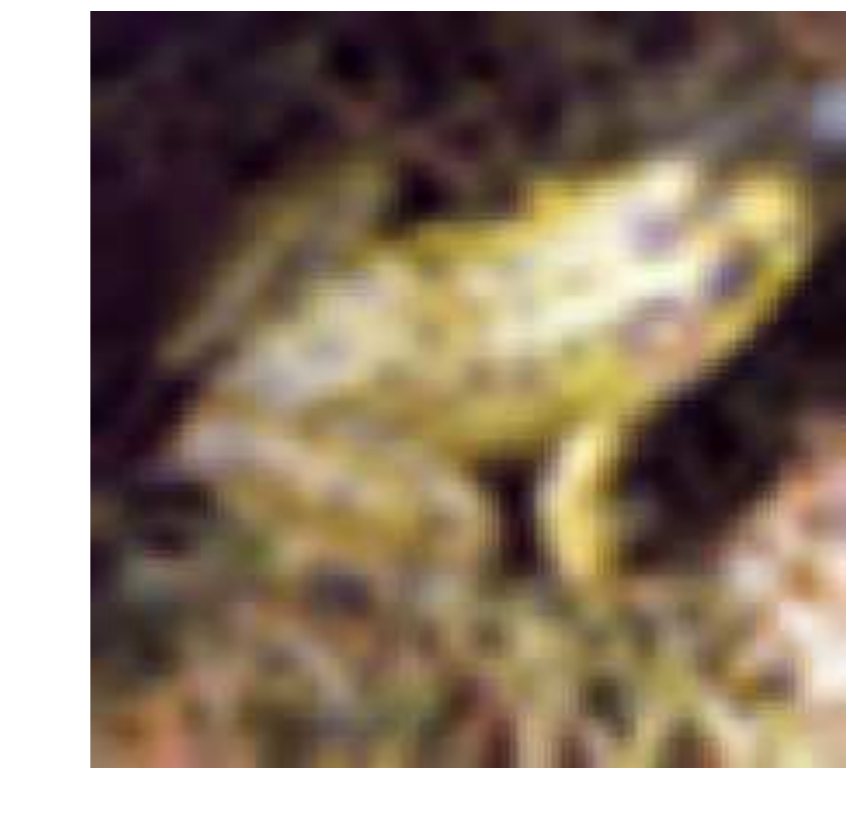_tex}
\def\svgwidth{0.18\linewidth} \input{./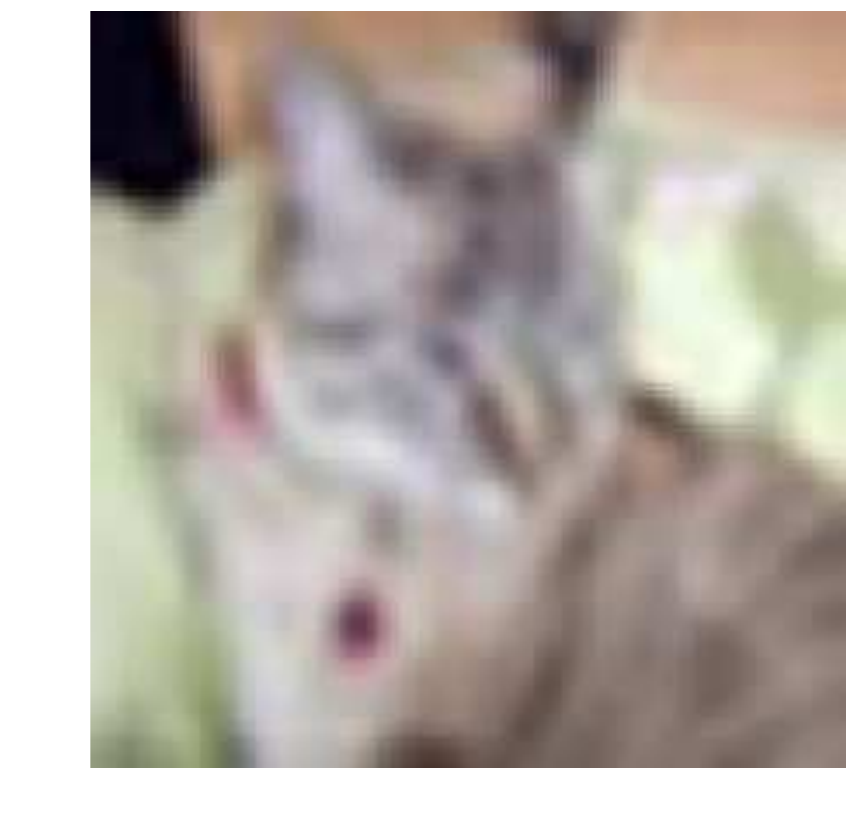_tex}
	 \caption{Adversarial images using DeepFool against different
classifiers. The images against low-rank (LR) classifiers require more
perturbations and are noticeably different.}
\label{fig:adv_images_noticeable} %
\end{wrapfigure}

Even though the rank constraint of Bottle-LR is the same as LR models, and they are placed at the same position as the LR layers, the inferior performance of Bottle-RL (sometimes even worst than N-LR) can be explained using the fact that the explicit bottleneck has a \emph{multiplicative} effect on the
 back-propagated gradients, whereas, LR's impact on gradients is \emph{additive}. With a bottleneck layer, the gradient of any $\vec{W}_{j}$ before the bottleneck layer is $\nicefrac{\partial
   \mathcal{L}}{\partial \vec{W}_{j}}=(\nicefrac{\partial
   \mathcal{L}}{\partial \vec{z}_l}) \bar{W} (\nicefrac{\partial \vec{a}_{l-1}}{\partial
   \vec{W}_j})$ where $\vec{a}_l$ and $\vec{z}_l$ are the activations
 and the pre-activations of the $l^{\it{th}}$ layer respectively. Thus, especially during early stages of training when $\bar{\vec{W}}$
 is not yet learned, important directions in  $(\nicefrac{\partial
   \vec{a}_{l-1}}{\partial \vec{W}_j})$ can be cancelled out due to the low-rank
 nature of $\bar{\vec{W}}$ thus making  $\nicefrac{\partial \mathcal{L}}{\partial \vec{W}_{j}}$  uninformative. In the case of LR, the gradients from $\mathcal{L}_c$ (depending on $W$) and
 $\mathcal{L}$ are additive and thus the low-rank $\vec{W}$ only
 affects the gradients from $\cL$ additively. The auxiliary loss $\mathcal{L}_c$ has
 less direct impact on early training (in terms of classification
 error).
 We believe this relative ``smoothness'' of our
 approach is the reason why it has a better performance on these other tasks. %

\paragraph{Minimum adversarial perturbation for $99\%$ misclassification}
Our next experiment is along the lines of that reported in~\citet{mosaavi2016}.
Table~\ref{tab:adv_rob_pert} shows the average minimum perturbation (measured
by $\rho$) required to make the classifier mis-classify more than $99\%$ of the
adversarial examples, constructed from a uniformly sampled subset of the test
set. Appendix~\ref{sec:mimim-pert-succ} describes the setup in greater detail.
Even under this scheme of attacks, our models perform better than N-LR as LR models require 4 to 11 times the amount of noise required by  N-LR
models to be fooled by adversarial attacks. This can be clearly
visualized in Figure~\ref{fig:adv_images_noticeable}. The
adversarial images for 2-LR and 1-LR are noticeably much more
perturbed than N-LR. 

\renewcommand{\arraystretch}{1}
\subsection{Noise Stability} 
To gain some understanding of this visibly better adversarial
robustness of LR models, in this section, we study the noise stability
behaviour of LR models in detail. Specifically,
we show that LR models~(and its representations) are significantly
more stable to input perturbations at test time even though training
was performed using clean data. In addition, we also measure a
quantity called {\em layer cushion} introduced by~\citet{arora18b}
that reflects the noise stability properties of layers in deep
networks.
\begin{table}[t]
  \centering\footnotesize
  \begin{tabular}{lllllll}\toprule
    \multicolumn{2}{c}{Pert. Prob.~($p$)}& 0.4 & 0.6 & 0.8 & 1.0 \\\toprule
    \multirow{2}{*}{R50}&N-LR& $69.7$ & $26.1$ & $12.6$ & $11.3$\\                                                          &1-LR&$\mathbf{75.1}$&$\mathbf{34.2}$&$\mathbf{15.8}$
                        &$\mathbf{13.0}$&\\\addlinespace
                                                      \multirow{3}{*}{R18}&N-LR &$57.7$&$27.3$&$13.0$&$7.2$&\\
                                                      &1-LR&$\mathbf{75.1}$&$33.0$&$15.2$&$11.0$\\
                                                      &2-LR&$74.1$&$\mathbf{35.5}$&$\mathbf{16.4}$&$\mathbf{11.5}$\\\bottomrule
  \end{tabular}
  \caption{Test accuracy of ResNet50~(R50) and ResNet18(R18) to
    Gaussian noise~$\cN\br{0,\nicefrac{128}{255}}$ introduced at
    each pixel with probability $p$. Evaluated on CIFAR10.}
  \label{tab:rand-noise-robust}
\end{table}

\paragraph{Random Pixel Perturbations}
In ~Table~\ref{tab:rand-noise-robust}, we measure the test accuracy
when the input is perturbed with a random additive noise.  Specifically, for a given pixel and for a given \emph{pixel perturbation
  probability} $p\in\bc{0.4, 0.6, 0.8, 1.0}$, we draw a sample from a
Bernoulli distribution  parameterized by $p$ to decide whether to
perturb the pixel or not. If the outcome of the draw is $1$, the pixel
is  perturbed with  a Gaussian noise drawn
from~$\cN\br{0, \nicefrac{128}{255}}$. This is done for all pixels in
the test set and the test accuracy is measured over this perturbed dataset. For varying levels of
perturbation,~Table~\ref{tab:rand-noise-robust} shows that LR models
are significantly  more stable to Gaussian
noise than N-LR. Our experiments indicate that learning a model that
cancels out irrelevant directions in the representations suppresses
the propagation of the input noise in a way to reduce its affect on
the output of the model. Interestingly, the level of Gaussian noise
seems to not vary the test accuracy as much as the value of $p$ does.

\begin{figure}[t]
  \begin{subfigure}[c]{0.15\linewidth}
    \centering
    \def\svgwidth{0.99\columnwidth}
    \input{./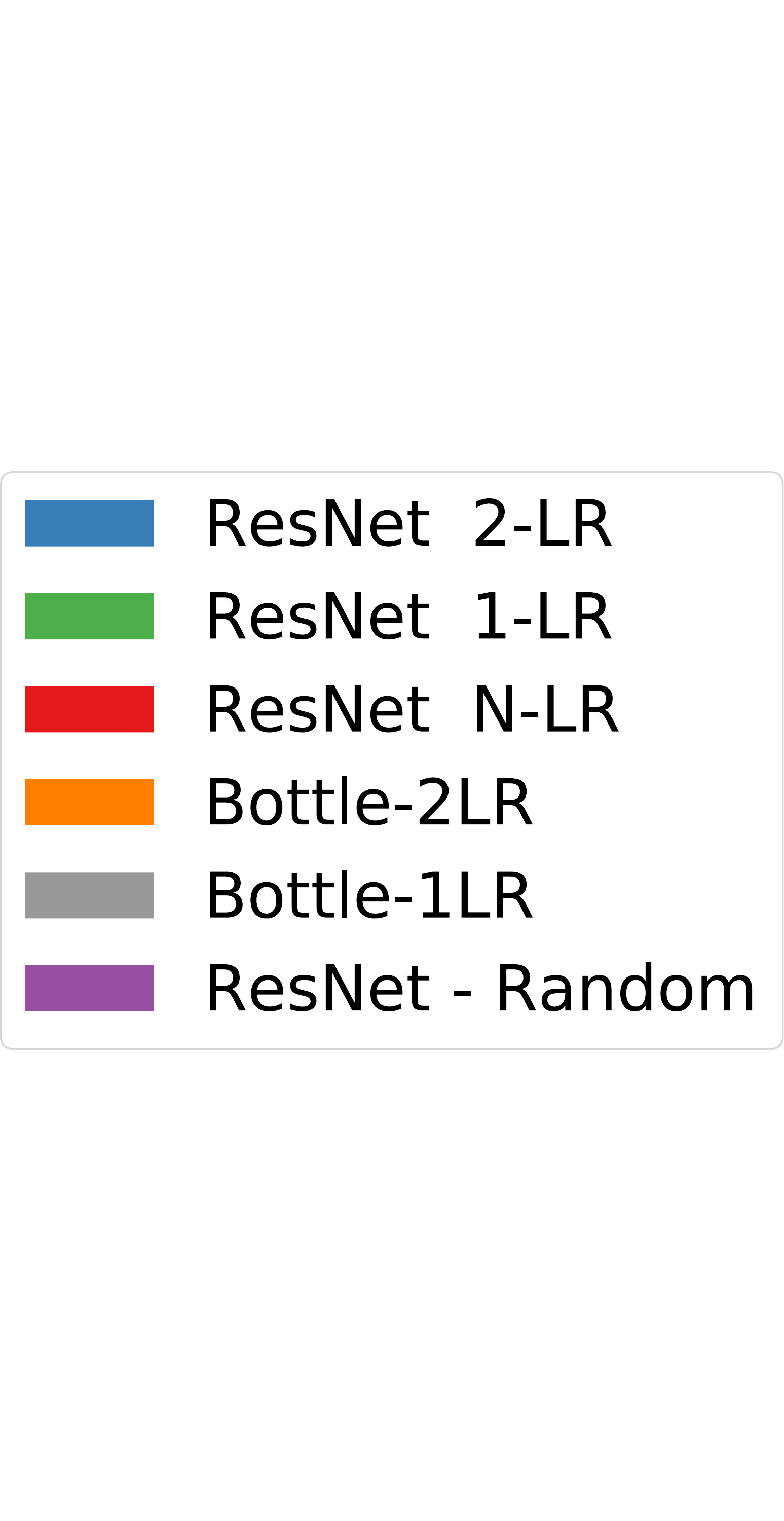_tex}
  \end{subfigure}
  \begin{subfigure}[c]{0.32\linewidth}
    \centering
    \def\svgwidth{0.99\columnwidth}
    \input{./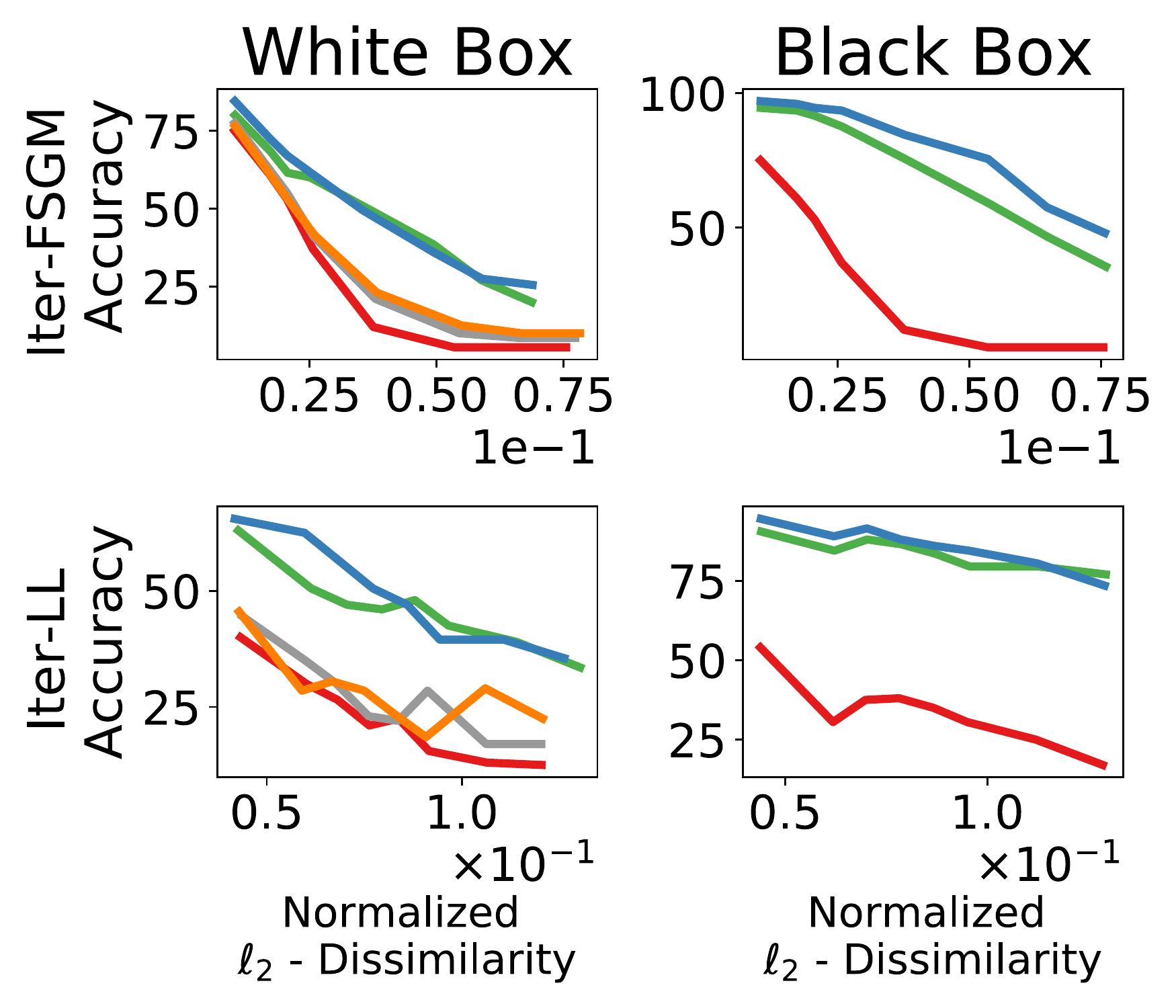_tex} \caption{}\label{fig:adv_pert}
  \end{subfigure}\hfill
  \begin{subfigure}[c]{0.53\linewidth}
    \begin{subfigure}[c]{0.99\linewidth}
      \centering
      \def\svgwidth{0.99\columnwidth}
      \input{./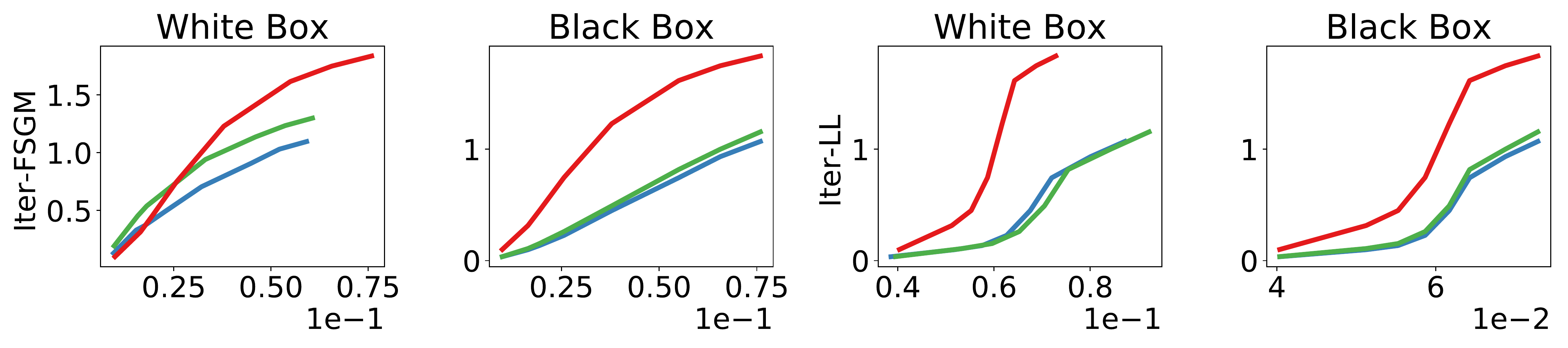_tex}\caption{}\label{fig:perturbation_spaces}
    \end{subfigure}\\
    \begin{subfigure}{0.99\linewidth}
      \vspace{-0.0em}
      \begin{subfigure}[c]{0.32\linewidth}
        \centering
        \def\svgwidth{0.99\columnwidth}
        \input{./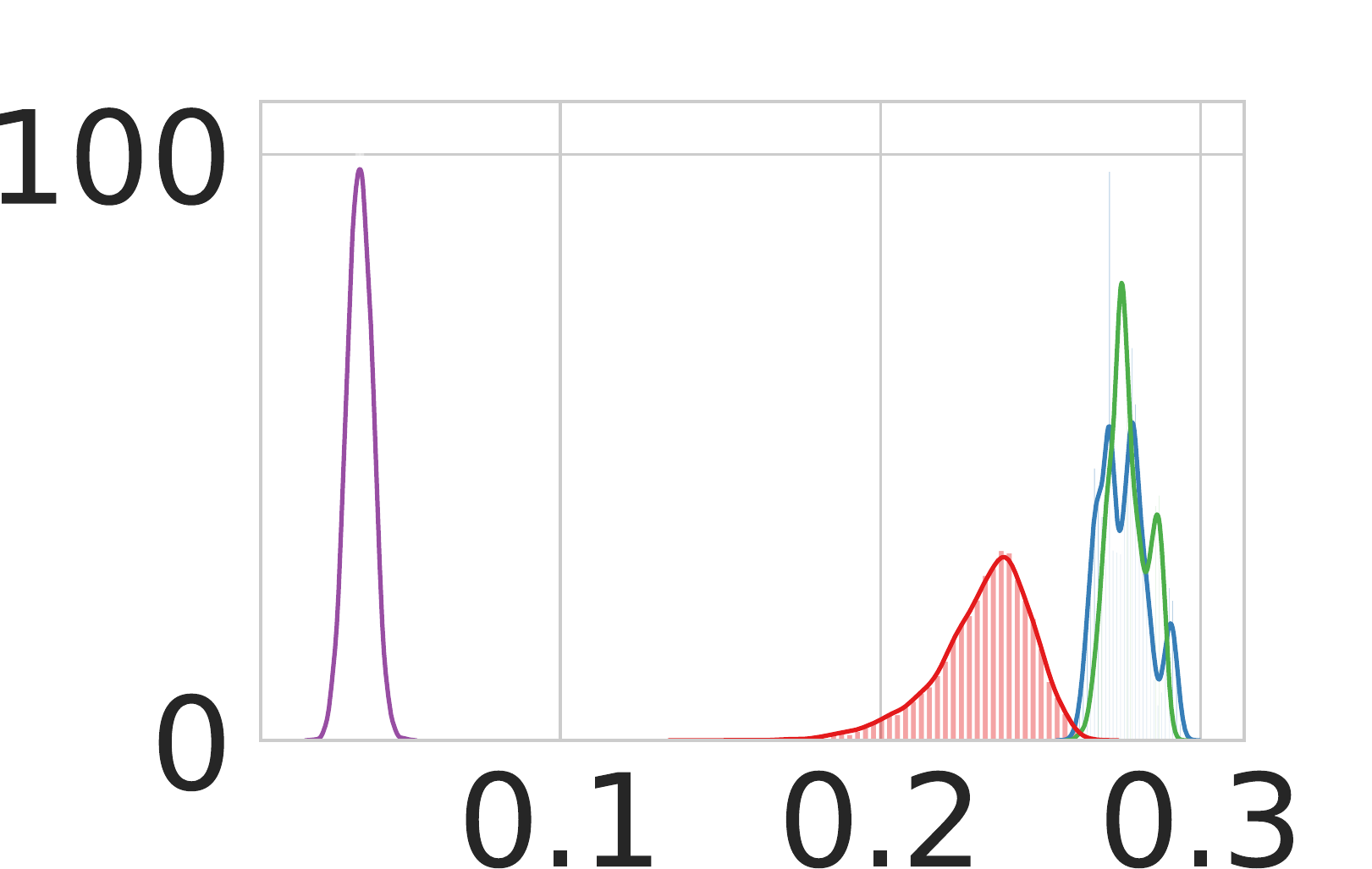_tex}
      \end{subfigure}
      \begin{subfigure}[c]{0.32\linewidth}
        \centering
        \def\svgwidth{0.99\columnwidth}
        \input{./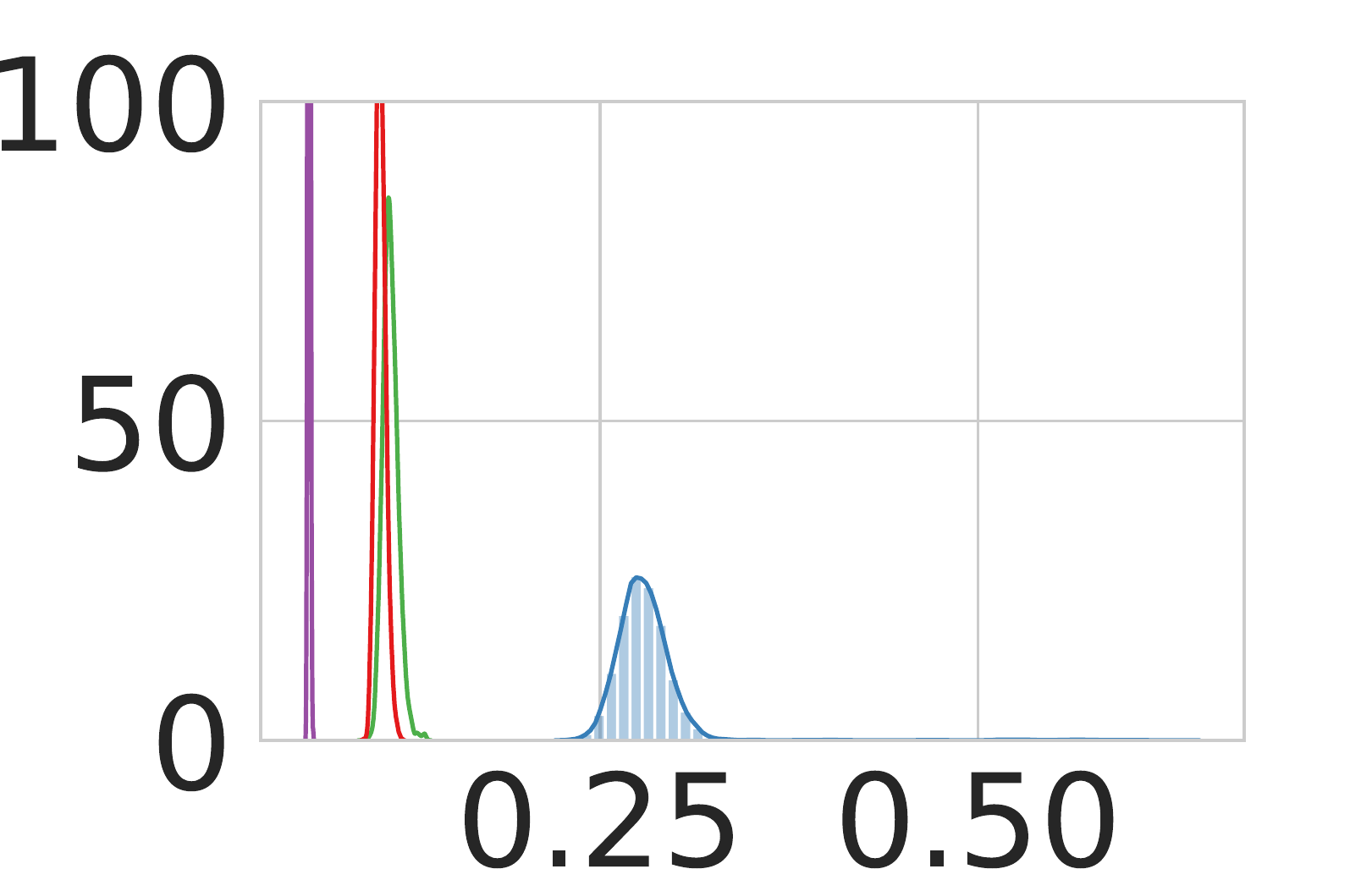_tex}
      \end{subfigure}
      \begin{subfigure}[c]{0.32\linewidth}
        \centering
        \def\svgwidth{0.99\columnwidth}
        \input{./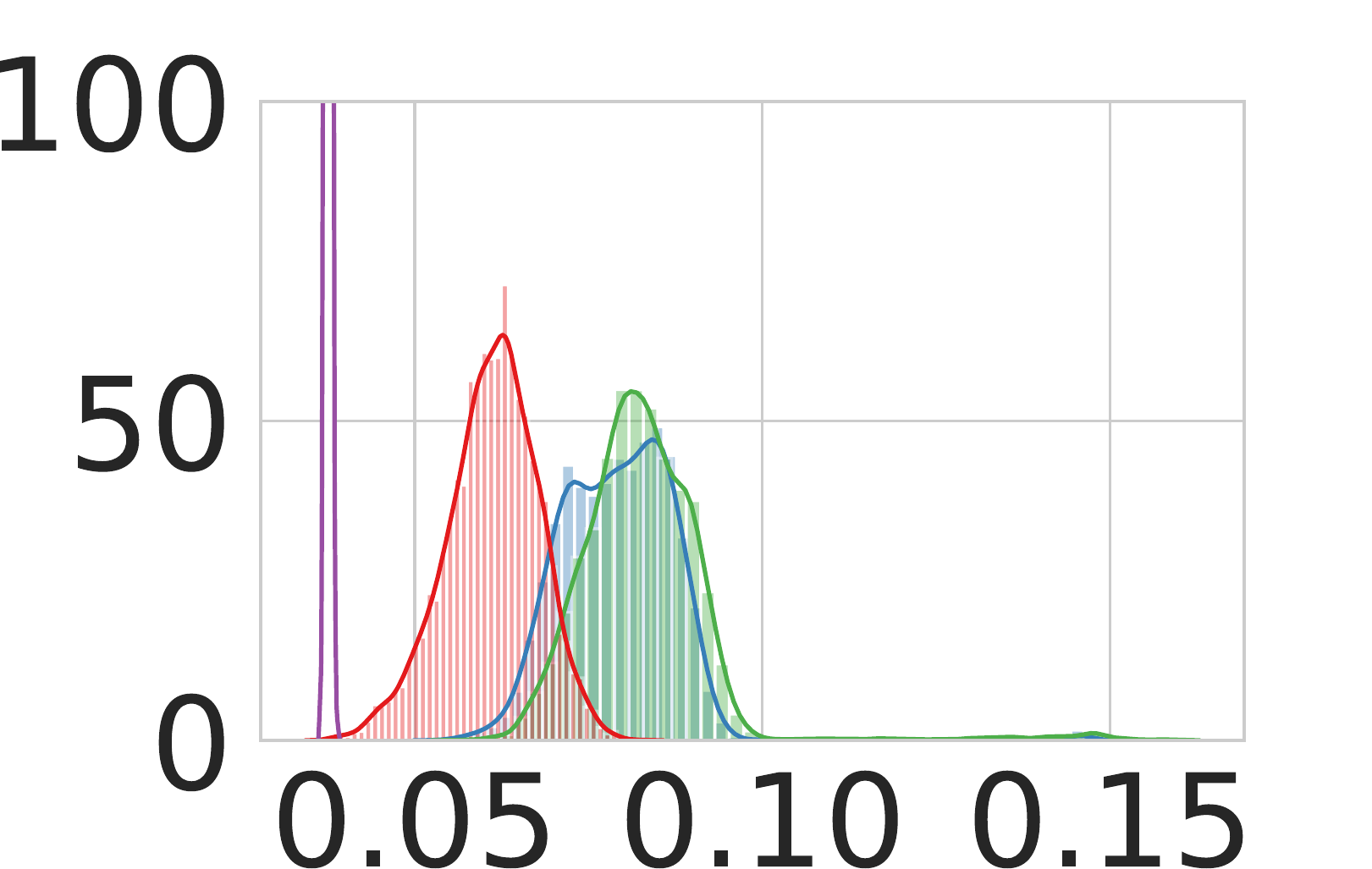_tex}
      \end{subfigure}\caption{}\label{fig:Arora}
    \end{subfigure}
  \end{subfigure}
  \vspace{-1em}
  \caption[Change in
  perturbation]{~\ref{fig:adv_pert} shows adversarial accuracy of ResNet18
    plotted against magnitude of
    perturbation~(measured by $\rho$)
    on CIFAR10. ~\ref{fig:perturbation_spaces} shows the adversarial
  Perturbation in Input Space~(x-axis) and Perturbation in
  Representation Space~(y-axis) (Lower is better).~\ref{fig:Arora} shows the
  layer cushion for the 1) last FC layer of ResNet18, 2) the last
  layer in $3^{\it rd}$ ResNet block ,and 3) the last layer in the
  $4^{\it th}$  ResNet block. (Right is better) }   \label{fig:pert} \vspace{-1em}
\end{figure}
\paragraph{Stability of representations}
In Figure~\ref{fig:perturbation_spaces}, we show how the input adversarial perturbations propagate and impact the feature space representations. Specifically, for a given adversarial perturbation $\delta$ to an input $\vec{x}$, the $x$-axis is the normalized $L_2$ dissimilarity score in the input space i.e. ${\norm{\delta}^2}/{\norm{\vec{x}}^2}$ and the $y$-axis represents the corresponding quantity in the representation space i.e. $\norm{f_{\ell}^{-}\br{\vec{x+\delta} } - f_{\ell}^{-}\br{\vec{x} }}^2 /{\norm{f_{\ell}^{-}\br{\vec{x}}}^2}$. The representations $f_{\ell}^{-}(.)$ here are taken from before the last fully connected layer. As our experiments suggest, the LR model significantly attenuates the adversarial perturbations thus making it harder to fool the softmax classifier. This observation further supports the excellent robustness of LR.
\paragraph{Layer cushion}
 \citet{arora18b} gives empirical
evidence that deep networks are stable towards injected Gaussian noise
and use a variation of this noise stability property to derive more
realistic generalization bounds. 
For any layer $i$, ~\citet{arora18b} defines the \emph{layer cushion} as the largest number $\mu_i$ such that for all examples
$\vec{x}$ in the training set:~$ \mu_i\norm{\vec{W}_i}_F\norm{\phi\br{\vec{x}_{i-1}}} \le
  \norm{\vec{W}_i\phi\br{\vec{x}_{i-1}}}$ where $\vec{W}_i$ is the weight matrix of the $i^{\it{th}}$ layer,
$\vec{x}_{i-1}$ is the pre-activation of the $i-1$ layer and $\phi$ is the
activation function. Higher the value of $\mu_i$, better is the
generalization ability of the model. Figure~\ref{fig:Arora} shows
histogram plots of distribution of this ratio for the examples in
CIFAR10 for four models---2-LR, 1-LR, N-LR and a {\em random network}
(randomly initialized, no training done) with the same
architecture.

As Figure~\ref{fig:Arora} shows, the histograms of the
LR models are to the right of the N-LR model, which is further to the
right of the randomly initialized network. Interestingly, the layer
cushion histograms of the last linear layer as well as the last layer
of the $4^{\it}$ residual block is approximately the same for both
1-LR and 2-LR whereas the histogram for the layer cushion of the last
layer in the 3rd ResNet block of the 2-LR model is significantly to
the right of the 1-LR and N-LR models. This can be explained by the
fact that 2-LR is the only model that has the low-rank regularizer
after the 3rd ResNet block and it thus has higher layer cushion~({\em and
thus better noise stability}) for the last layer in the 3rd ResNet block.
Please refer to Appendix~\ref{sec:gen_bounds} for more information on these plots and the plots for other layers.

\subsection{Compression of Model and Embeddings}
\label{sec:comp}
\begin{table}[t]
  \begin{subtable}[t]{0.45\linewidth}
\centering\small
  \begin{tabular}{c@{\quad}c@{\enskip}c@{\enskip}c@{}}
    \toprule
  \textbf{R50} & Dim & Acc($\%$) \\ \toprule 
   1-LR & $2k$  & $\mathbf{78.1}$ \\ 
  N-LR & $2k$  & $75.6$ \\ \midrule
   1-LR & 10  & $\mathbf{76.5}$ \\ 
  N-LR & 10 &  $68.4$ \\ \midrule
   1-LR & 5 & $\mathbf{72}$ \\ 
  N-LR & 5 & $48$ \\ \bottomrule
  \end{tabular}
  \caption{ Representation from before the FC layer, trained on
    CIFAR-100. Original dimension here is $2k$.} 
  \label{tbl:low_dim_emb_pred}
\end{subtable}\hspace{10pt}
  \begin{subtable}[t]{0.45\linewidth}
\centering\small
  \begin{tabular}{cc@{\enskip}c@{\enskip}}
    \toprule
    \textbf{R18} &Dim & Acc($\%$)\\ \toprule
    2-LR & $16k$ & $\mathbf{91.14}$ \\
    N-LR & $16k$ & $90.7$ \\\midrule
    2-LR & $20$ &$ \mathbf{88.5}$  \\
    N-LR & $20$ & $76.9$   \\\midrule
    2-LR & $10$ & $\mathbf{75}$ \\
    N-LR & $10$ & $61.7$\\\bottomrule
  \end{tabular}
  \caption{Representation from before the last ResNet block, trained
    on CIFAR10. Original dimension here is $16k$.} 
  \label{tbl:low_dim_lyr_rem}
\end{subtable}\hspace{10pt}
\caption{ \textit{Dim} represents the
  size of the \textit{compressed embedding} on which a separate linear
  classifier was trained. }\label{tbl:low_dim_proj_emb}
\end{table}

Experiments in the previous section show that our algorithm induces a low-rank
structure in the activation space and that these low-rank activations are
significantly more stable to input perturbations. Here we inspect the
discriminative power of these embeddings. Further, due to the low
dimensionality of the representation space, these learned representations can
be compressed without losing their discriminative power.  Among other things,
we show that low dimensional projections of our embeddings, {\em with a size of
less than $2\%$ of the original embeddings}, can be used for classification
with a significantly higher accuracy than similar sized projections of
embeddings from a model trained without our training
modification~(N-LR) on the CIFAR100 dataset.

In order to study compressibility, we use the trained
models~(N/1/2-LR-ResNet50/18) to construct hybrid max-margin models. Learned
representations are first generated using the original trained model on the
training set; then a max-margin classifier~(such as SVM) is trained on these
learned representations. In some of the experiments, before training the
max-margin classifier, the learned representations are projected onto a low
dimensional space by performing PCA on this set of learned representations.
The embedding dimension of the PCA and the particular layer from which the
representations are extracted is specified in Table \ref{tbl:low_dim_proj_emb}.
At test time, the original trained model is used to first obtain a
representation, if necessary, it is then projected using the learnt PCA
projection matrix, and is then classified using the learnt max-margin linear
classifier. Further details appear in Appendix~\ref{sec:experimental-details}. 

\textbf{Representation Compression:} Table \ref{tbl:low_dim_emb_pred} shows
that even with sharply decreasing embedding dimension, the hybrid model trained
using the LR preserves the accuracy significantly more so than N-LR. \emph{Even
with a $5$-dimensional embedding~(400x compression), the LR model looses
only $6\%$ in accuracy, but the N-LR model looses $27\%$}. 

\textbf{Model Compression:} A consequence of forcing the activations of
the $\ell^{th}$ layer of the model to lie in a low dimensional subspace
with minimal reconstruction error, is that a simpler model can 
replace the latter parts of the original model without significant
reduction in accuracy. Essentially, if we train the hybrid max-margin
classifier on the representations obtained from after the third ResNet
block, we can replace the entire fourth ResNet block and the last FC
layer with a linear classifier.
The results of this experiment, in~Table~\ref{tbl:low_dim_lyr_rem}, show
evidence of this. \emph{The entire fourth ResNet block along with the last FC
layer~(containing 8.4M parameters) is replaced by a smaller linear model which
has only $0.02$ times the number of parameters. }This yields a significant
reduction in model size at the cost of a very slight drop in accuracy~($<
1\%$). The second benefit is that as the low dimensional embeddings still
retain most of the \emph{discriminative} information, the inputs fed to the
linear model have a small number of features.

\renewcommand{\arraystretch}{1.1}
\begin{table}[h]\small\centering
  \begin{tabular}[h!]{l@{\quad}cccc@{}}
    \toprule
     &R18  &DFL  &ILL & IFSGM \\\hline
   \multirow{3}{0.7cm}{White Box}&\textsf{2-LR}&$\mathbf{0.43}$&$\mathbf{0.55}$&$\mathbf{0.55}$\\
    & \textsf{1-LR}&$0.38$&$0.35$&$0.48$\\
    & \textsf{N-LR}&$0.01$&$0.04$&$0.02$\\\hline
    \multirow{2}{0.7cm}{Black Box}&  \textsf{2-LR}&$\mathbf{0.44}$&$\mathbf{0.50}$&$\mathbf{0.48}$\\
    &\textsf{1-LR}&$0.29$&$0.31$&$0.33$\\\hline
  \end{tabular}
  \caption{\small Accuracy of classification of adversarial examples by ResNet18 Max Margin Classifiers.}
  \label{tab:max_margin_adv}
\end{table}

\textbf{Robustness of Max Margin Classifiers}: Finally, we show that the features
learned by our models are inherently more linearly discriminative
i.e. there exists a linear classifier which can be used to classify
these features with a wide margin. To this end, in Table~\ref{tab:max_margin_adv}, we show that for LR models, the
max-margin hybrid models are significantly more robust to adversarial attacks than the
corresponding original models. Hybrid max-margin models with
\lrs are particularly more robust than hybrid max-margin models without \lrs against adversarial
attacks. Specifically,  as seen in Table~\ref{tab:max_margin_adv}, \emph{a hybrid model with a
\lr correctly classifies $50\%$ of the examples that had  fooled the
original classifier while for a similar amount of noise, an N-LR hybrid model has negligible accuracy}. Our experimental setup  is explained in
Appendix~\ref{sec:advers-attack-maxim}.

\section{Conclusion}
\label{sec:concl}
We proposed a low-rank regularizer (LR) that encourages deep neural networks to learn representations that lie in a lower-dimensional linear subspace. We conducted a wide range of experiments to investigate the properties of representations learned by LR and report, among other things, that
\begin{enumerate*}
\item [(i)] these representations are robust to both, adversarial and random input perturbations,
\item [(ii)] LR models significantly attenuate the effects of perturbations introduced in the input space on the representation space
\item[(iii)] these representations have more discriminatory power than ones trained on representation from models without \lrs, and
\item[(iv)] these representations provide extreme compression without compromising much with the accuracy.
\end{enumerate*}

It is commonly believed that large sparse feature spaces are better for deep models. What we propose here is the idea that while the features themselves need not be sparse, the existence of a basis in which the feature vectors have a sparse representation can provide benefits. To the best of our knowledge, investigating properties of representations learned from NNs, and the possibility of modifying training procedures to obtain desirable properties in said representations, has remained relatively unexplored. This work shows that these representations possess some intriguing properties, which may well be worthy of further investigation.

\section{Acknowledgements}
\label{sec:acknowledgements}
AS acknowledges support from The Alan Turing Institute under the Turing Doctoral Studentship
grant TU/C/000023. VK is supported in part by the Alan Turing
Institute under the EPSRC grant EP/N510129/1. PHS and PD are supported by the ERC grant
ERC-2012-AdG 321162-HELIOS, EPSRC grant Seebibyte EP/M013774/1 and
EPSRC/MURI grant EP/N019474/1. PHS and PD also acknowledges the
Royal Academy of Engineering and FiveAI.

\bibliography{library}
\bibliographystyle{icml2020}
\newpage
\appendix

\titlespacing{\paragraph}{%
 5pt}{%
 5pt}{0.5em}%

\onecolumn

\section{Alternative Algorithms}
\label{sec:alt_algs}

\subsection*{Sparsity and Rank}
\label{sec:sparsity}

While various techniques to induce sparsity on weights or activations
in neural networks exist, we point out that sparsity doesn't
necessarily lead to low rank, e.g., the identity matrix is
sparse while being full rank. However, we also look
at whether sparsity and rank correlate empirically in the representation space in real trained networks.

Empirically, in ResNet 1-LR, the activations before the
4$^\th$ ResNet block are $39\%$ sparse and the activations after the
4$^\th$ ResNet block are $5\%$ sparse. However, the activations before the
4$^\th$ ResNet block require a larger number of singular values~($>1000$) to
explain greater than $99\%$ of the variance despite the high level of sparsity ($39\%$) whereas
the activations after the 4th ResNet block are explained almost
totally by about 10 singular values.  Thus, sparsity and low rank are
not correlated in this case. Technically, being low rank
means there exists a basis in which the vector has a sparse representation.

In Table~\ref{tab:adv-pert-compre}, we compare against SNIP~\citep{lee2018snip}, a popular technique for
pruning/increasing sparsity of neural networks. We use a sparsity
ratio of $0.7$, which was the sparsest network that SNIP could obtain
without any drop in test-accuracy.However, it should be noted that
adversarial robustness of sparse networks is an active area of
research and orthogonal to the ideas proposed in this paper.

\subsection*{Low Rank Weights} With respect to compression, it is
natural to look at low rank approximations of network
parameters~\citep{Denton2014,jaderberg2014}. By factorizing the weight
matrix/tensor $W$, for input $x$, we can get low rank
\emph{pre-activations} $Wx$. This however does not lead to low rank
\emph{activations} as demonstrated both mathematically (by the
counter-example below) and empirically.

In Table~\ref{tab:adv-pert-compre}, we compare against
SRN~\citep{sanyal2020stable} a simple algorithm for reducing the
stable rank~(a softer version of rank) of linear layers in neural
networks. We use the best hyper-parameter obtained from
the~\citet{sanyal2020stable} paper for these experiments.

\textbf{Mathematical Counter-Example}: Consider a rank 1
\emph{pre-activation} matrix $A$ and its corresponding
\emph{post-activation}(ReLU) matrix as below. It is easy to see that
the rank of \emph{post-activation} has increased to $2$. \[ A
= \begin{bmatrix} &1 &-1 &1\\ &-1 &1
&-1\end{bmatrix}\qquad\text{Relu}(A) = \begin{bmatrix} &1 &0 &1\\ &0
&1 &0\end{bmatrix} \]

\textbf{Empirical Result}: In order to see if techniques for low rank
approximation of network parameters like ~\citet{Denton2014} would
have produced low rank activations, we conducted an experiment by
explicitly making the \emph{pre-activations} low-rank using SVD. Our
experiments showed that inspite of setting a rank of $100$ to the
\emph{pre-activation} matrix, the \emph{post-activation} matrix had
full rank. Though all but the first hundred singular values of the
\emph{pre-activation} matrix were set to zero, the
\emph{post-activation} matrix’s $101^{\it{st}}$ and $1000^\th$
singular values were $49$ and $7.9$ respectively, and its first $100$
singular values explained only $94\%$ of the variance.

We try to explain the above empirical results as follows:
Theoretically, a bounded activation function lowers the Frobenius norm
of the \emph{pre-activation} matrix i.e. the sum of the squared
singular values. However, it also causes a smoothening of the singular
values by making certain 0 singular values non-zero to compensate for
the significant decrease in the larger singular values. This leads to
an increase in rank of the \emph{post-activation} matrix.

\textbf{Structure in Linear Transformation}: Added to these arguments,
it must be noted that widely used transformation in deep networks like
convolution layers etc are highly structured and introducing low
structure while maintaining the structure is not as simple as doing it
for a fully connected layer where the matrix does not have any special
structure that needs to be maintained. 
\subsection*{Bottleneck LR Layer}

\textbf{Bottleneck Layer e.g. Autoencoder}: It is easy to see that the effective
dimension of the representation of an input, obtained after passing
through a bottleneck layer (like an auto-encoder), will not be greater
than the dimension of the bottleneck layer itself. However, due to the
various non-linearities present in the network, while the
representation is guaranteed to lie in a low dimensional manifold it
is not guaranteed to lie in a low rank (affine) subspace.

\textbf{ Factorized Linear Layer~(LR Bottleneck)}: Another alternative is to include the low-rank
projection and reconstruction as part of the network instead of as a
regularizor so that the \lr is an actual layer and not a
\emph{virtual} layer. We have indeed experimented with this setup and
observed that this often made the training very unstable. Also, if one
were to add this bottleneck as a fine-tuning process, the test
accuracy of the network decreases by a much higher extent than it does
for our method.

\section{Algorithmic Details}
\label{sec:alg_details_app}

\subsection{Algorithm for LR Layer}
\label{sec:alg-lr}

We solve our optimization problem~\eqref{eq:aug_opt_2} by adding a
\emph{virtual} low rank layer that penalizes representations that are
far from its closest low rank affine representation. We call it virtual as the layer is not
a part of the network's prediction pathway and can be discarded once
the network is trained. Algorithm~\ref{alg:lr_layer} lists the
forward and the backpropagation rules of the \lr.

\begin{algorithm}[h!]  \centering
   \caption{LR Layer}
   \label{alg:lr_layer}
   \begin{algorithmic}[1]
     \STATE {\bfseries Input:} Activation Matrix $A$, Grad\_input $g$\\
     \STATE{\bfseries Forward Propagation}
     \STATE{$Z  \gets W^\top(A+\vb)$}\footnotemark \COMMENT{Compute the affine \textit{Low rank} projection}
     \STATE {\bfseries Output : $A$}\COMMENT{Output the original activations for the next layer}\\
     \STATE{\bfseries Backward Propagation} \STATE {$D_1 \gets \frac{\lambda_1}{n} \norm{Z - (A + \vb)}_2^2$}
     \COMMENT{Computes the reconstruction loss $\mathcal{L}_c$}
     \STATE {$D_2\gets \frac{\lambda_2}{n}\sum_{i=0}^{n-1}\big\vert\mathbf{1} - \norm{\va_i} \big\vert$}
     \COMMENT{Computes the loss for the norm constraint $\mathcal{L}_N$}
     \STATE {$D\gets D_1 + D_2$}
     \STATE {$g_W \gets \frac{\partial D}{\partial W}, g_i\gets g + \frac{1}{n}\sum_{i=0}^{n-1}\frac{\partial D}{\partial \va_i}$}
     \STATE{\bfseries Output : $g_i$}
     \COMMENT{Outputs the gradient to be passed to the layer before}\\
     \STATE{\bfseries Update Step}
     \STATE{$W \gets W - \lambda g_W$}\COMMENT{Updates the weight with the gradient from $D$.}\\
     \STATE{$W\gets \prnk{k}{W}$ \label{alg:hard_thresh_step}}\COMMENT{Hard thresholds the rank of $W$}
\end{algorithmic}
\end{algorithm} \footnotetext{$\vb + A$ is computed by adding $\vb$ to
every row in $A$}

The rank projection step in Line~13 in
Algorithm~\ref{alg:lr_layer} is executed by a hard thresholding
operator $\prnk{r}{W}$, which finds the best $r$-rank approximation of
$W$. Essentially, $\prnk{r}{W}$ solves the following optimization
problem, which can be solved using a singular value decomposition
(SVD). However, the projection can be very expensive due to the large
dimension of the representations space~(e.g. $16,000$). To get around this, we use the
ensembled Nystr\"om SVD algorithm~\citep{williams2001using,
halko2011finding, kumar2009ensemble}.
  \begin{align}
    \label{eq:svp} \prnk{r}{W} &= \argmin_{\rank{Z} = r}\norm{W
                                 - Z}_F^2 \nonumber
  \end{align}

\textbf{Handling large activation
matrices}: %
Singular Value Projection~(SVP) introduced in
~\citet{jain2010guaranteed} is an algorithm for rank minimization
under affine constraints. In each iteration, the algorithm performs
gradient descent on the affine constraints alternated with a rank-k
projection of the parameters and it provides recovery guarantees under
weak isometry conditions. However, the algorithm has a complexity of
$O(mnr)$ where $m, n$ are the dimensions of the matrix and $r$ is the
desired low rank. Faster methods for SVD for sparse matrices are not
applicable as the matrices in our case are not necessarily sparse.  We
use the ensembled Nystr\"om method~\citep{williams2001using,
halko2011finding, kumar2009ensemble} to boost our computational speed
at the cost of accuracy of the low rank approximation. It is
essentially a sampling based low rank approximation to a matrix. The
algorithm is described in detail in the next section(~\ref{sec:ensembl-nystr-meth}). Though the overall complexity
for projecting $W$ still remains $O(m^2r)$ , the complexity of the
hard-to-parallelize SVD step is now $O(r^3)$, while the rest is due to
matrix multiplication, which is fast on modern GPUs.

The theoretical guarantees of the Nystr\"om method hold only when the
weight matrix of the \lr is symmetric and positive semi-definite (PSD)
before each $\prnk{r}{\cdot}$ operation; this restricts the
projections allowed in our optimization, but empirically this does not
seem to matter. However, note that the PSD constraint is not
really a restriction as all projection matrices are PSD by definition. For example, on
the subspace spanned by columns of a matrix $X$, the projection matrix
is $P = X(X^\top X)^{-1}X^\top$, which is always PSD. We know that a symmetric
diagonally dominant real matrix with non-negative diagonal entries is
PSD. With this motivation, the matrix $W$ is smoothened by repeatedly
adding $0.01\mathbf{I}$ until the \textsf{SVD} algorithm
converges where $\vec{I}$ is the identity matrix.\footnote{The computation of the singular value decomposition
sometimes fail to converge if the matrix is ill-conditioned}. This is
a heuristic to make the matrix well conditioned~(as well as diagonally
dominant) and it helps in the convergence of the algorithm
empirically.

\textbf{Symmetric Low Rank Layer}: %
The Nystr\"om method requires the matrix $W$ of the \lr to be
symmetric and PSD (SPSD), however, gradient updates may make the
matrix parameter non-SPSD, even if we start with an SPSD matrix.
Reparametrizing the \lr fixes this issue; the layer is parameterized
using $W_s$ (to which gradient updates are applied), but the layer
projects using $W = (W_s + W_s^\top)/2$. After the rank projection is
applied to the (smoothed version of) $W$, $W_s := \prnk{r}{W}$ is an
SPSD matrix (using Lemma~\ref{thm:nystrom_sym} in~\ref{sec:proof-lemma}). As a result the updated $W$ is also
SPSD.  This layer also has a bias vector $\vb$ to be able to translate
the activation matrix before performing the low rank projection.

\subsection{Ensembled Nystr\"om Method}
\label{sec:ensembl-nystr-meth}

 Let $W\in \reals^{m\times m}$ be a symmetric positive semidefinite
matrix (SPSD). We want to generate a matrix $W_r$ which is a r-rank
approximation of $W$ without performing SVD on the full matrix $W$ but
only on a principal submatrix\footnote{A principal submatrix of a
matrix $W$ is a square matrix formed by removing some columns and the
corresponding rows from $W$~\citep{Meyer2000MAA}} $Z\in
\reals^{l\times l}$ of $W$, where $l\ll m$. We sample $l$ indices from
the set $\{1\cdots m\}$ and select the corresponding columns from $W$
to form a matrix $C\in \reals^{m\times l}$. In a similar way,
selecting the $l$ rows from $C$ we get $Z\in \mathbb{R}^{l \times
l}$. We can rearrange the columns of $W$ so that \[W = \begin{bmatrix}
Z \quad W_{21}^T \\ W_{21}\quad W_{22}
  \end{bmatrix} \quad \quad C =
  \begin{bmatrix} Z \\ W_{21}
  \end{bmatrix}
\] According to the Nystr\"om approximation, the low rank
approximation of $W$ can be written as
\begin{equation}
  \label{eq:nystrom_approx} W_r = C Z_r^{+}C^T
\end{equation} where $Z_r^{+}$ is the pseudo-inverse of the best $r$
rank approximation of $Z$. Hence, the entire algorithm is as follows.
\begin{itemize}
\item Compute $C$ and $Z$ as stated above.
\item Compute the top $r$ singular vectors and values of $Z$ : $U_r,
\Sigma_r, V_r$.
\item Invert each element of $\Sigma_r$ as this is used to get the
Moore pseudo-inverse of $Z_r$.
\item Compute $Z_r^{+} = U_r\Sigma_r^{-1} V_r$ and $W_r = C
Z_r^{+}C^T$.
\end{itemize} Though by trivial computation, the complexity of the
algorithm seems to be $O(l^2r + ml^2 + m^2l) = O(m^2r)$~(In our
experiments $l = 2r$), it must be noted that the complexity of the SVD
step is only $O(k^3)$ which is much lesser than $O(m^2r)$ and while
matrix multiplication is easily parallelizable, parallelization of SVD
is highly non-trivial and inefficient.

To improve the accuracy of the approximation, we use the ensembled
Nystr\"om sampling based methods~\citep{kumar2009ensemble} by
averaging the outputs of $t$ runs of the Nystr\"om method. The $l$
indices for selecting columns and rows are sampled from an uniform
distribution and it has been shown~\citep{kumar2009sampling} that
uniform sampling performs better than most sampling
methods. \textbf{Theorem 3} in ~\citet{kumar2009ensemble} provides a
probabilistic bound on the Frobenius norm of the difference between
the exact best r-rank approximation and the Nystr\"om sampled r-rank
approximation.

\subsection{Lemma \ref{thm:nystrom_sym}}
\label{sec:proof-lemma}

\begin{lem}
  \label{thm:nystrom_sym} If $X\in\reals^{m\times m}$ is a SPSD matrix
and $X_r\in\reals^{m\times m}$ is the best Nystr\"om ensembled, column
sampled r-rank approximation of $X$, then $X_r$ is SPSD as
well. (Proof in Appendix \ref{sec:proof-lemma})
\end{lem}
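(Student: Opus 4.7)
The plan is to track the SPSD property through each stage of the Nyström construction: principal submatrix extraction, truncated eigendecomposition, pseudoinversion, the sandwich $C Z_r^{+} C^{\top}$, and finally averaging across the ensemble.

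First I would observe that the matrix $Z \in \mathbb{R}^{l \times l}$ used in the Nyström method is a \emph{principal} submatrix of $X$ (same index set chosen for rows and columns, as is clear from the block decomposition in \eqref{eq:nystrom_approx} and the surrounding discussion). Since $X$ is SPSD, any principal submatrix is SPSD: for any $y \in \mathbb{R}^{l}$, padding $y$ with zeros to a vector $\tilde y \in \mathbb{R}^{m}$ gives $y^{\top} Z y = \tilde y^{\top} X \tilde y \ge 0$, and symmetry is inherited entrywise. Thus $Z$ admits an eigendecomposition $Z = U \Lambda U^{\top}$ with $\Lambda = \mathrm{diag}(\lambda_1, \dots, \lambda_l)$, $\lambda_i \ge 0$.

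Next I would show that the best rank-$r$ approximation $Z_r$ and its Moore--Penrose pseudoinverse $Z_r^{+}$ are SPSD. Writing $U_r$ for the columns of $U$ associated with the top $r$ eigenvalues and $\Lambda_r = \mathrm{diag}(\lambda_{(1)}, \dots, \lambda_{(r)})$, we have $Z_r = U_r \Lambda_r U_r^{\top}$, which is clearly symmetric and satisfies $y^{\top} Z_r y = \|\Lambda_r^{1/2} U_r^{\top} y\|_2^2 \ge 0$. Because $\Lambda_r$ is diagonal with nonnegative entries, its pseudoinverse $\Lambda_r^{+}$ is diagonal with nonnegative entries (inverting the nonzero ones, zeroing the rest), and thus $Z_r^{+} = U_r \Lambda_r^{+} U_r^{\top}$ is again SPSD by the same argument. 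Then, for any single Nyström approximation $W_r^{(j)} = C_j Z_{r,j}^{+} C_j^{\top}$, symmetry is immediate, and for any $y \in \mathbb{R}^m$, $y^{\top} W_r^{(j)} y = (C_j^{\top} y)^{\top} Z_{r,j}^{+} (C_j^{\top} y) \ge 0$, so each individual Nyström term is SPSD.

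Finally, the ensembled Nyström estimator is a (uniformly weighted) average $X_r = \tfrac{1}{t} \sum_{j=1}^{t} W_r^{(j)}$. Since the cone of SPSD matrices is closed under addition and nonnegative scaling, $X_r$ is SPSD, which is the claim.

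The argument is essentially bookkeeping; the only mild subtlety is making sure the pseudoinverse step preserves positive semidefiniteness, which is handled cleanly by eigendecomposing the SPSD matrix $Z$ rather than using a generic SVD. There is no real obstacle — the main thing to get right is stating explicitly that the sampling is principal (same rows and columns), since the SPSD property of $Z$ would fail otherwise.
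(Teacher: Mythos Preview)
Your proof is correct and follows essentially the same route as the paper: principal submatrix of an SPSD matrix is SPSD (via zero-padding), eigendecompose to get $Z_r^{+}$ SPSD, then the sandwich $C Z_r^{+} C^{\top}$ is SPSD. Two minor differences worth noting: the paper's proof also accounts for the $\delta \cdot I$ smoothing that the algorithm performs on the principal submatrix before the rank-$r$ projection (which only helps, so your argument still goes through), while you, unlike the paper, explicitly close the argument for the \emph{ensembled} estimator by averaging over the $t$ runs --- a step the lemma statement requires but the paper's proof leaves implicit.
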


\begin{proof} By the Construction of the Nystr\"om SVD algorithm, we
know that $X_r = C W_r^{+}C^T$. We will first show that $W_r^{+}$ is a
symmetric matrix.

  We know that $X$ is SPSD. Let $I$ be a sorted list of distinct
indices such that $|I| = l$. Then by construction of $W$, \[W_{i,j} =
X_{I[i],I[j]}\] Hence, as $X_{I[i],I[j]} = X_{I[j], I[i]}$, $W$ is
symmetric.

  At this step, our algorithm adds $\delta\cdot I$ to $W$ where
$\delta\ge0$. It is easy to observe that $W + \delta\cdot\mathcal{I}$
is positive semidefinite.

  Consider a vector $a\in \reals^{|X|}$.  Create a vector
$\bar{a}\in\reals^m $ where \[ \bar{a}_i =
    \begin{cases} 0 &\text{if } i\not\in I\\ a_i & \text{o.w.}
    \end{cases}
  \]
  
  \begin{equation} a^\top \br{W + \delta\cdot \cI}a= \bar{a}^\top
X\bar{a} + \delta\cdot a^\top\cI a \ge 0 + \delta\norm{a}^2\ge
0\label{eq:spsd_proof}
\end{equation}

Let $W + \delta\mathcal{I}$ be the new $W$ and \eqref{eq:spsd_proof}
shows that $W$ is positive semidefinite.

  Now we will show that $X_r$ is symmetric as well.  As $W$ is
symmetric, there exists an orthogonal matrix $Q$ and a non-negative
diagonal matrix $\Lambda$ such that
  $$W = Q\Lambda Q^T$$ We know that $W_r = Q_{[1:r]}\Lambda_{[1:r]} Q_{[1:r]}^T$ and  $W_r^{+} = Q_{[1:r]}\Lambda_{[1:r]}^{-1} Q_{[1:r]}^T$.\\ Hence,
  \begin{align*} X_r &= C W_r^{+}C^T\\ &=C
Q_{[1:r]}\Lambda_{[1:r]}^{-1} Q_{[1:r]}^T C^T\\ X_r^T &= (C
Q_{[1:r]}\Lambda_{[1:r]}^{-1} Q_{[1:r]}^T C^T)^T\\ &= C
Q_{[1:r]}\Lambda_{[1:r]}^{-1} Q_{[1:r]}^T C^T\\ &= X_r\\
  \end{align*} $\therefore X_r$ is symmetric. We can also see that the
$X_r^T$ is positive semi definite by pre-multiplying and post
multiplying it with a non-zero vector and using the fact that
$W_r^{+}$ is positive semi-definite.

\end{proof}

\section{Experimental details with additional experiments}
\label{sec:experimental-details}

In this section, we first describe the training setting and then describe the setting of each experiment in greater
detail so that they can be easily reproduced.

\textbf{Experimental Settings:} We used a single NVIDIA RTX6000 for training our networks. All
our experiments were run with a learning rate of $0.1$ for 350 epochs
with a batch size of 128 with the learning reduced reduced to 0.01 and
0.001 on the 150th and 250th epoch respectively. The hyper-parameter $l$ in the Nystr\"om
method was set to double of the target rank. The model was pre-trained
with SGD for the first 50 epochs and then Algorithm
\ref{alg:lr_layer} was applied. The rank cutting operation was
performed every 10 iterations.

The target rank for the \lrs, which were placed before the FC layers
was set to $100$ while the target rank for the \lr before the last
ResNet block was set to 500. However, experiments suggest that this
hyper-parameter is \textbf{not very crucial} to the training process as the
training procedure converged the effective rank of the
\emph{activation matrix} to a value lesser than the designated target
ranks.

In this section, we also show experiments using  VGG19 2-LR. It contains two \lrs. The VGG model has three FCs
  after 16 convolution layers. The \lrs are before the $1\it{st}$ and
  $3\it{rd}$ FC layers~(to maximize the distance between them).

 We also report results on the fine labels of CIFAR100 and on SVHN in
 this section. For the SVHN dataset, as is common practice, we used
 both the training dataset as well as the extra examples without any
 data augmentation.
 
 The linear classifier in the \emph{hybrid max-margin} model
 is trained using SGD with
hinge loss and $L_2$ regularization with a coefficient of $0.01$. The
learning rate is decreased per iteration as $\eta_t =
\frac{\eta_0}{\br{1+\alpha t}}$ where $\eta_0$ and $\alpha$ are set by
certain heuristics~\footnote{\url{https://goo.gl/V995mD}}.

\subsection{Experiments on test accuracy }
\label{sec:no-decr-perf}
We investigate whether the additional constraints on training have any
significant effect on model
performance. Tables~\ref{tab:adv-robust-cifar},\ref{tab:test-acc-cifar-vgg},
and \ref{tab:test-acc-svhn} show that the additional constraints cause no loss in accuracy. In some cases, we even \emph{observe modest gains in performance}.

To test whether the learned representations can be used in a different
task, we conduct a transfer learning exercise where embeddings
generated from a ResNet-50 model, trained on the coarse labels of
CIFAR-100, are used to predict the fine labels of CIFAR-100. A set of
ResNet-50 hybrid max-margin classifiers are trained for this purpose on these embeddings.
First, two ResNet-50 models were trained with and without
the \lr respectively on the coarse labels of CIFAR-100. Then, 2048
dimensional embeddings were extracted from after the fourth ResNet
block using the train set and the test set of CIFAR 100 for both the models. Essentially, this  resulted in two new datasets for training the linear
classifiers - one for the LR model and the other for the N-LR model.

\begin{table}[t]\centering
  \begin{subtable}{0.32\linewidth}
      \begin{tabular}{l@{\quad}c@{\quad}c@{}}
      \toprule
      Models &\multicolumn{1}{c}{Acc (\%)}\\\midrule
     VGG19 2-LR & $\mathbf{89.8}$\\ %
      VGG19 N-LR & $89.1$\\ \bottomrule
    \end{tabular}  \caption{CIFAR10}
    \label{tab:test-acc-cifar-vgg}
  \end{subtable}
  \begin{subtable}{0.32\linewidth}
    \begin{tabular}{l@{\quad}c@{\quad}c@{}}
      \toprule
      Models &\multicolumn{1}{c}{Acc (\%)}\\\midrule
      R18 2-LR &   $97.86$\\ %
      R18 1-LR &   $\mathbf{97.98}$\\ %
      R18 N-LR & $97.97$\\ %
      R18 Bottle-2LR &  $97.70$\\ \midrule
      VGG19 2-LR & $\mathbf{97.79}$\\ %
      VGG19 N-LR & $97.21$\\ \bottomrule
    \end{tabular}  \caption{SVHN}
    \label{tab:test-acc-svhn}
  \end{subtable}
  \begin{subtable}{0.3\linewidth}
    \begin{tabular}{l@{\quad}C{1.5cm}C{1.5cm}}
      \toprule
      &\multicolumn{2}{c}{ Acc($\%$)}\\ %
      Models& Coarse & Fine \\\hline%
      R50 1-LR & $\mathbf{78.1}$  & $48$ \\
      R50 N-LR & $75.6$ & $\mathbf{52}$  \\
      R50 Bottle-1LR & $76$ & $38$  \\\hline%
    \end{tabular}
    \caption{CIFAR-100}
    \label{tab:test-acc-transfer}
  \end{subtable}
  \caption{Table~\ref{tab:test-acc-cifar-vgg} shows test accuracy on
    CIFAR10 for VGG19, Table~\ref{tab:test-acc-svhn} shows test
    accuracy on SVHN for ResNet18 and VGG19, and
    Table~\ref{tab:test-acc-transfer} shows the transfer learning
    experiment as described below.}
\end{table}
  
The embeddings from each train set were used to train a
separate max-margin linear classifier with the same hyper-parameters as
described above but by replacing the coarse labels with fine
labels. The accuracy of the linear classifier trained on the
representations from the LR model and the N-LR model are reported in the third column of
Table~\ref{tab:test-acc-transfer}. The results of the same experiment, when the linear model was trained on the coarse labels
are reported in the second column of Table~\ref{tab:test-acc-transfer}. It
is surprising that the low rank model performs well at all in this
experiment as one would expect that all information in the
representations that are not strictly required in the classification
of original task are discarded from the model.

Table~\ref{tab:test-acc-transfer} shows that the LR model suffers a small
loss of $4\%$ in accuracy as compared to the N-LR model when its
embeddings are used to train a max-margin classifier for predicting
fine labels. It should be noted that the accuracy of the LR model
actually increases when the max-margin classifier is trained to
perform the original task, i.e. classifying the coarse labels. On the
other hand, the Bottle-LR model suffers a loss of $14\%$ in accuracy
compared to N-LR model and shows no significant advantage in the
original task either.

\paragraph{Effective rank of activations for VGG and SVHN}:
\label{psec:vari-rati-capt-vgg}

In Figure~\ref{fig:var_ratio_plots_vgg}, we plot the variance ratio of representations obtained before the first and
third FC layers of VGG19 models trained on CIFAR10. The LR models show a better low rank structure than N-LR models and
is consistent with experiments on ResNet18.

\begin{figure}[t]
  \begin{subfigure}[t]{0.49\linewidth} \centering
\def\svgwidth{0.95\columnwidth}
\input{./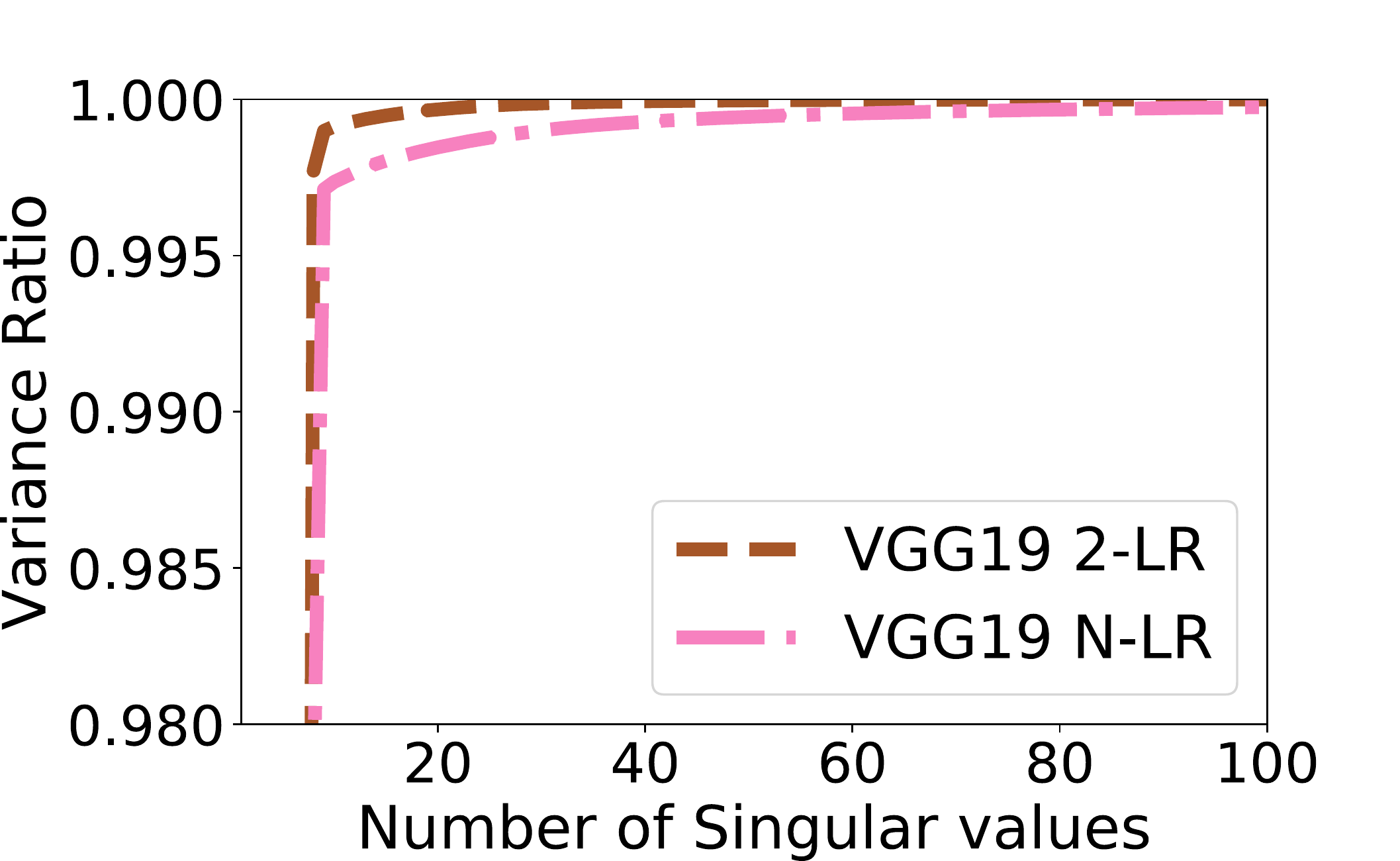_tex}
          \caption{Activations after last FC layer.}
          \label{fig:var_1_vgg}
        \end{subfigure}
        \begin{subfigure}[t]{0.49\linewidth} \centering
\def\svgwidth{0.95\columnwidth}
\input{./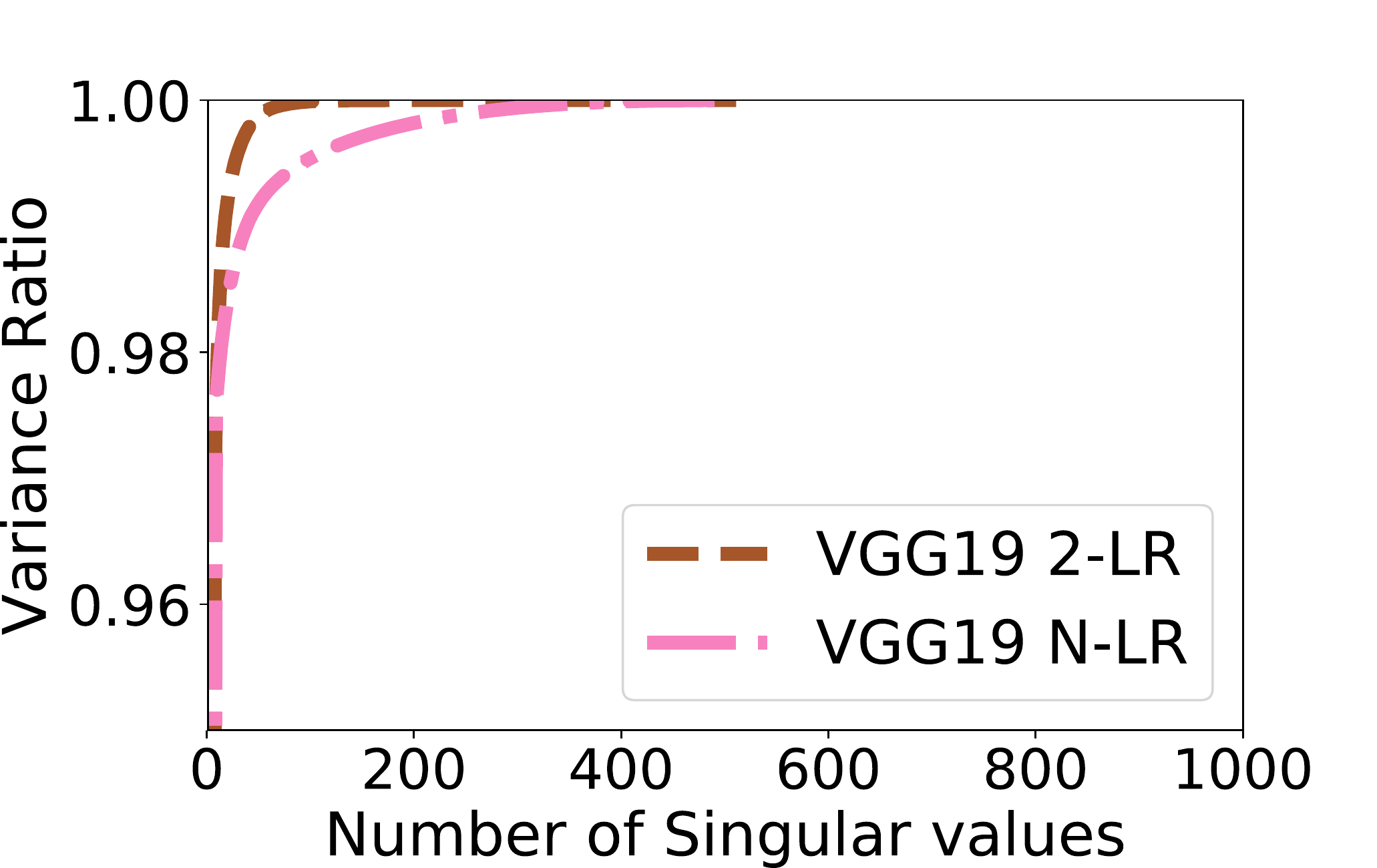_tex}
          \caption{Activations before third last FC layer}
          \label{fig:var_2_vgg}
        \end{subfigure}\caption{Variance Ratio captured by varying
number of Singular Values in VGG19 trained on CIFAR10.}
        \label{fig:var_ratio_plots_vgg}
      \end{figure}

In Figure~\ref{fig:var_ratio_plots_svhn}, we plot the variance ratio of
representations obtained before and after the last resnet block of a ResNet18 models trained on SVHN. The LR models show a better low rank structure than N-LR models and
is consistent with experiments on CIFAR10.

\begin{figure}[h!]
   \begin{subfigure}[t]{0.45\linewidth}
          \centering
          \def\svgwidth{0.95\columnwidth}
          \input{./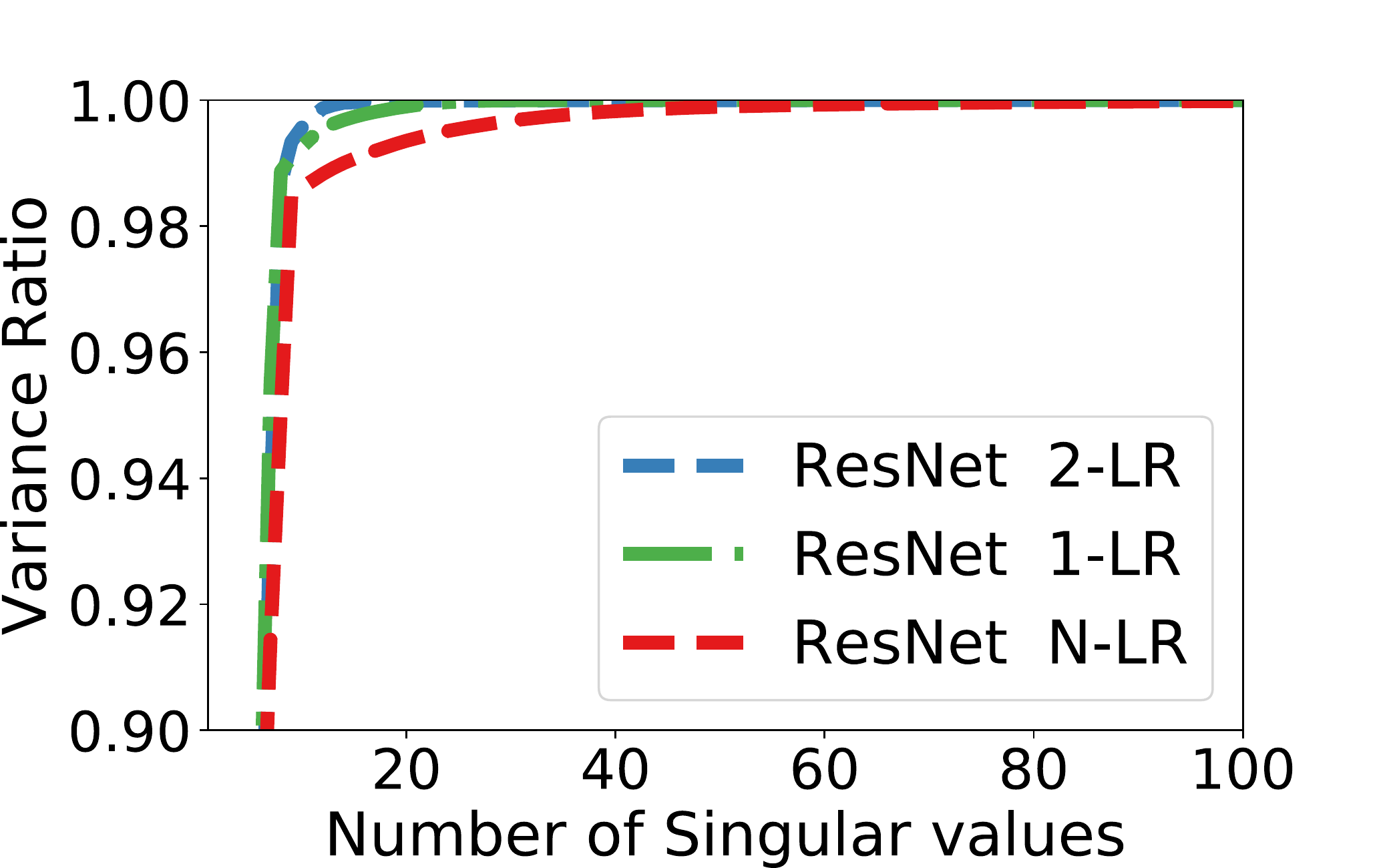_tex}
          \caption{SVHN: Activations after last ResNet block}
          \label{fig:var_3}
        \end{subfigure}\hfill
        \begin{subfigure}[t]{0.45\linewidth}
          \centering
          \def\svgwidth{0.95\columnwidth}
          \input{./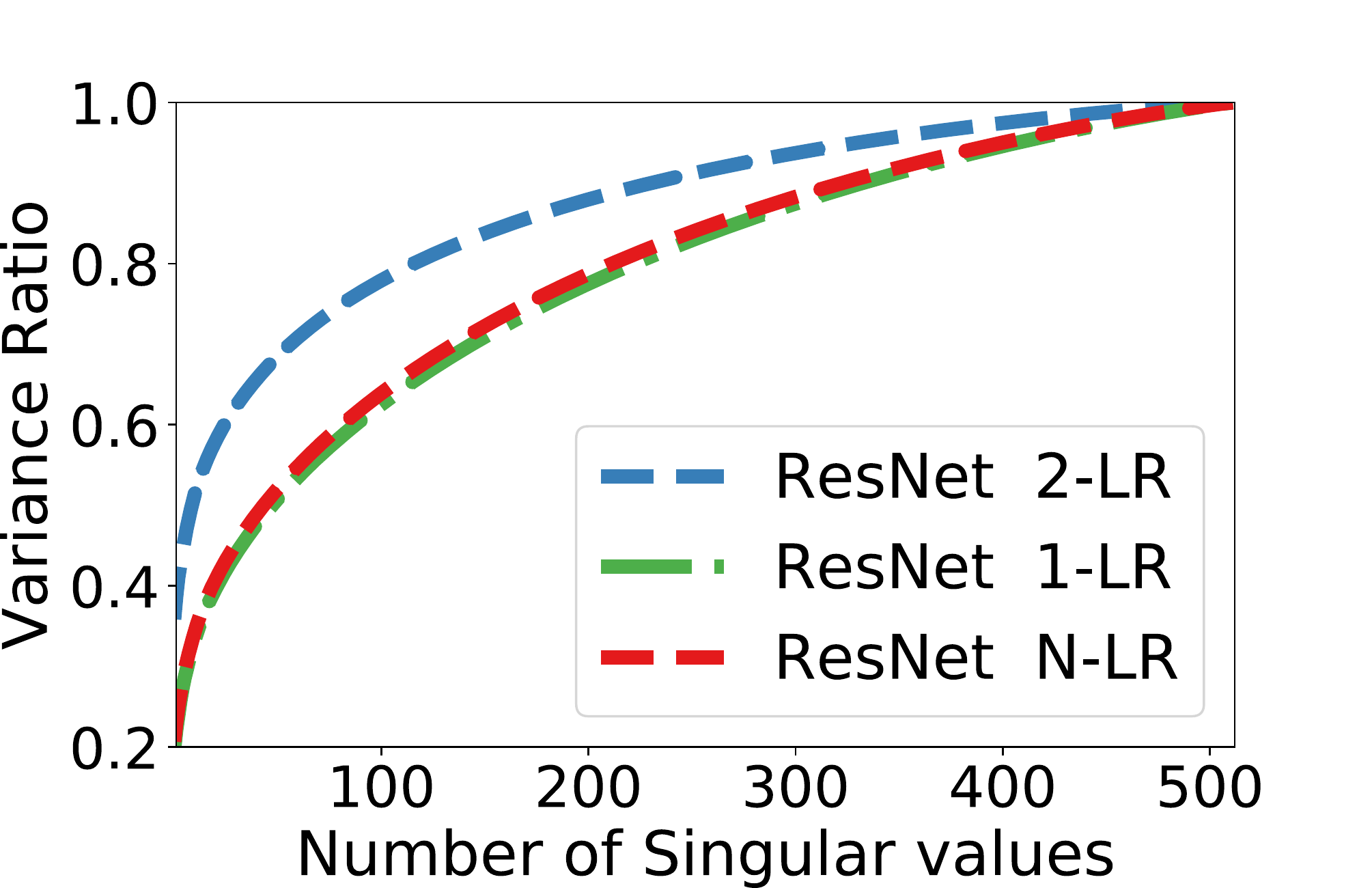_tex}
          \caption{SVHN: Activations before last ResNet block}
          \label{fig:var_4}
        \end{subfigure}\caption{Variance Ratio captured by varying
number of Singular Values in ResNet18 trained on SVHN.}
        \label{fig:var_ratio_plots_svhn}
      \end{figure}

\subsection{ Additional Classification Experiments for low dimensional embeddings}
\label{sec:show-validity-low}

In the first experiment, reported in Table~\ref{tbl:low_dim_emb_pred},
we trained two ResNet-50 hybrid max-margin models -with and without the \lr
respectively- on the 20 super-classes of CIFAR-100. As our objective
here is to see if the embeddings and their low dimensional projections
could be effectively used for discriminative tasks, we used PCA, with
standard pre-processing of scaling the input, to project the
embeddings onto a low dimensional space and then trained a linear
maximum margin classifier on it.

The experiments in~Table~\ref{tbl:low_dim_lyr_rem} were run with
ResNet-18 on CIFAR-10. Two ResNet-18 max-margin classifiers - with and
without the \lr respectively- were trained on CIFAR-10. The
representations were obtained from before the fourth ResNet block and
had a dimension of 16,384. Similar to the previous experiment, we used
PCA, with standard pre-processing, to obtain low dimensional
projections and then trained a linear max margin classifiers on it. The test set of CIFAR10 was converted using the same
ResNet models and projected using the same PCA vectors and then the accuracy of the linear classifier on it is reported
in~Table~\ref{tbl:low_dim_lyr_rem} for varying target dimensions of PCA projections.

\begin{table}[h]\centering\small
  \begin{tabular}{c@{\quad}c@{\enskip}c@{\enskip}c@{}}
    \toprule
   \textbf{V19} & Dim & Acc($\%$) \\ \hline 
   2LR & 512  & $\mathbf{89.8}$ \\ 
  NLR & 512  & $89.7$ \\ \hline
  2LR & 20  & $\mathbf{89.85}$ \\ 
  NLR & 20 &   $89.78$\\ \hline
  2LR & 10 & $\mathbf{89.79}$ \\ 
  NLR & 10 &  $89.65$\\ \hline
  \end{tabular}
  \caption{Representation from before the third last FC layer of a VGG19 trained on CIFAR-10.}
  \label{tbl:vgg_low_dim_lyr_rem}
\end{table}

Similar experiments were conducted with VGG19 on CIFAR10 and the
results are reported in Table~\ref{tbl:vgg_low_dim_lyr_rem}. As expected, due to the smaller size of the representation space, the difference in accuracy here is less stark than the
case of ResNet. This is because of two reasons - 1. The dimension of
the activation layer in ResNet before the last ResNet block is 16,384
whereas the activations before the third last FC layer is only
512. 2. Figure~\ref{fig:var_ratio_plots_vgg} shows that the difference
in the variance ratio between the LR and the N-LR network is much
smaller as compared to Figure~\ref{fig:var_ratio_plots} for ResNets.

\subsection{Class wise variance}
\label{sec:classwise-variance}

\begin{figure}[t]
\begin{subfigure}[t]{0.43\linewidth} \centering
\def\svgwidth{0.99\columnwidth}
\input{./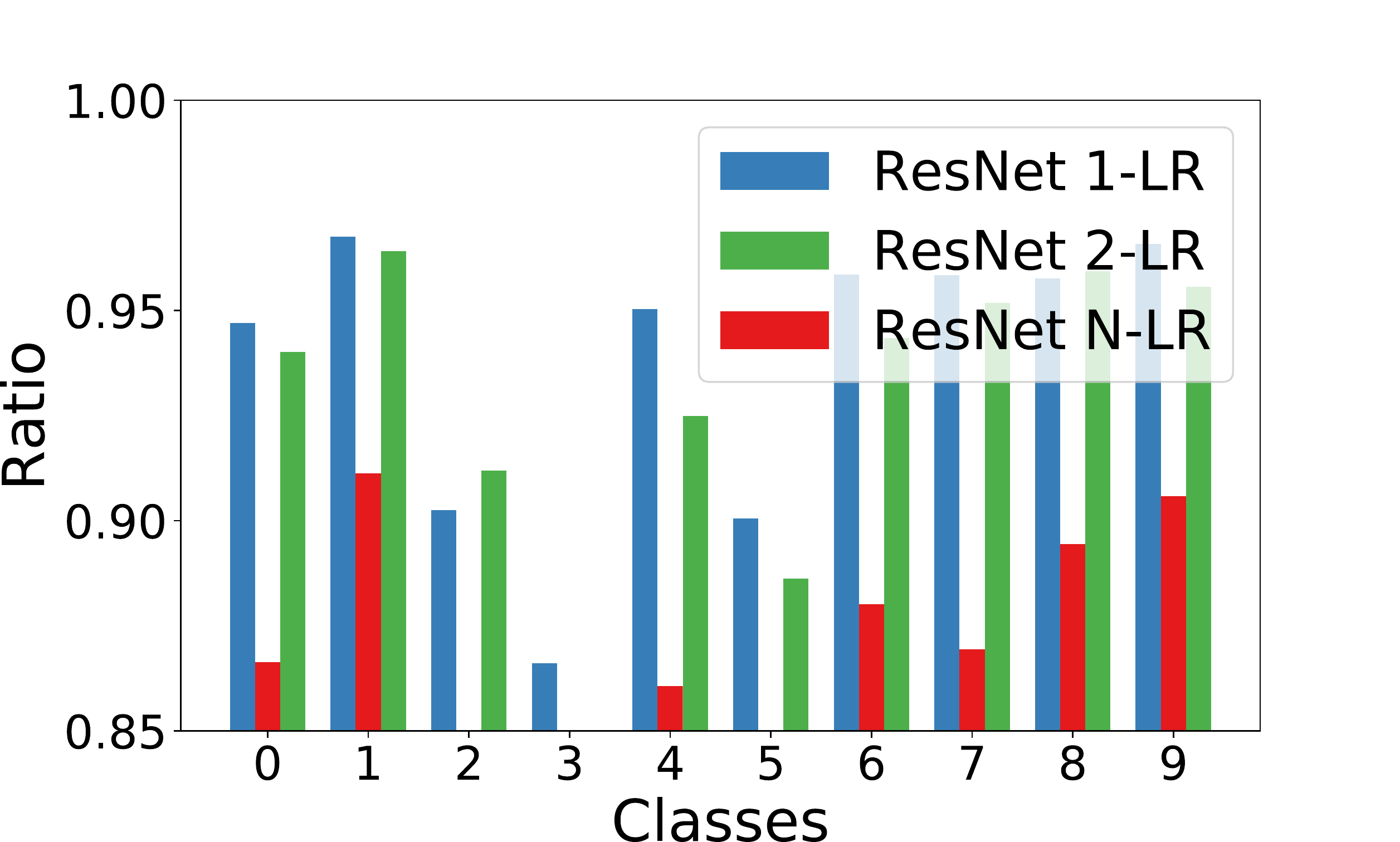_tex}
   \caption{$512$ dimensional activations from after last ResNet
block.}
   \label{fig:class_wise_var_1}
\end{subfigure}\qquad\qquad
\begin{subfigure}[t]{0.43\linewidth} \centering
\def\svgwidth{0.99\columnwidth}
\input{./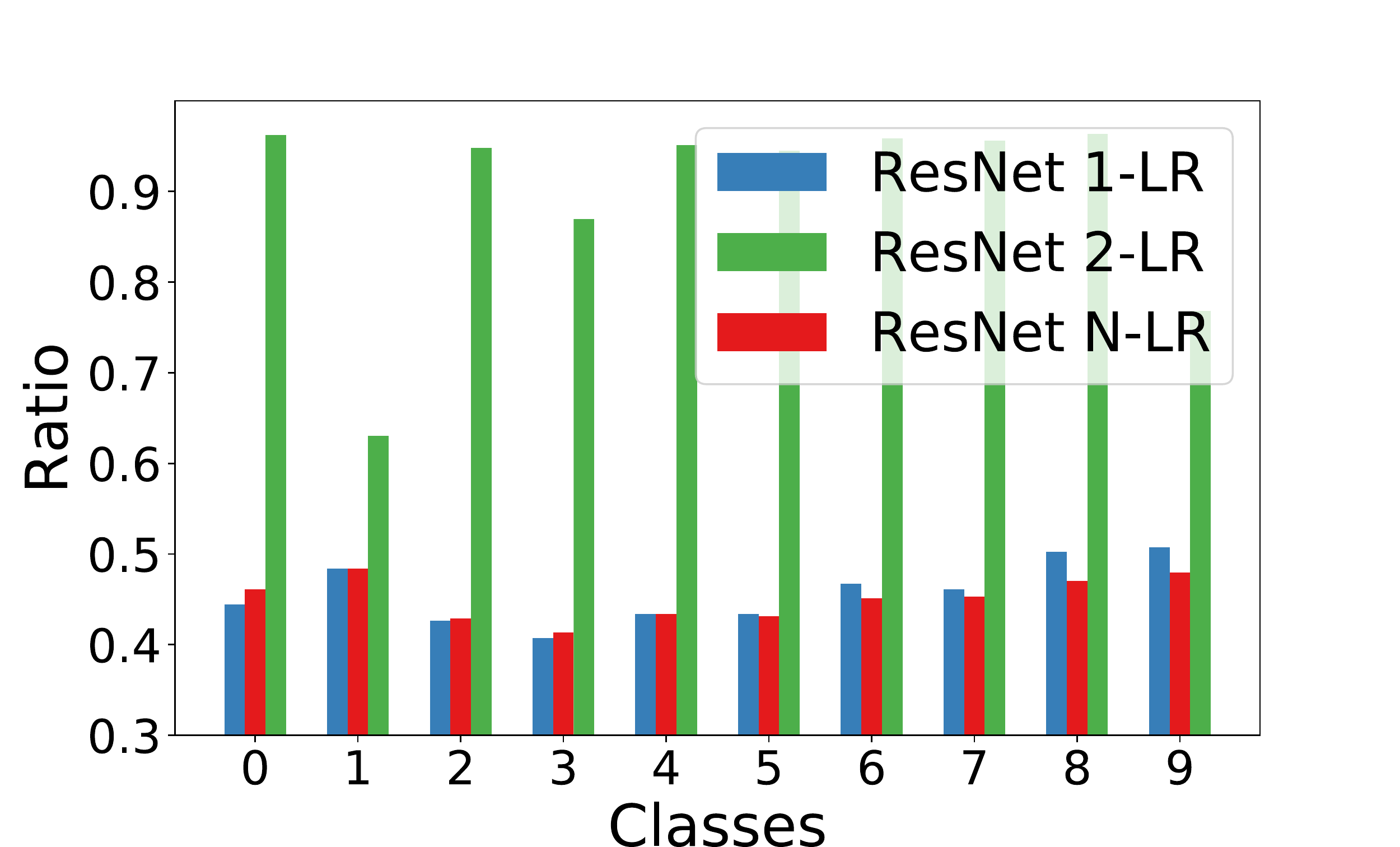_tex}
   \caption{$16k$ dimensional activations from before last ResNet
block.}
   \label{fig:class_wise_var_10}
\end{subfigure}
\caption{Class wise variance ratio of one singular values for the
activations before the last ResNet block.}
\end{figure}

In this experiment, we plot the variance ratio captured by the first
singular value~(i.e. the inverse stable rank) for embeddings of
examples restricted to individual
classes. Figure~\ref{fig:class_wise_var_1} shows the variance ratio
captured by the largest singular value for the activations before the
last FC layer while Figure~\ref{fig:class_wise_var_10} shows the
variance ratio captured by the largest singular value for the
activations before the last ResNet block. These experiments give us
some idea about the extent to which the set of basis vectors assigned
to individual classes are intersecting.Figure~\ref{fig:var_1} shows
that the rank of the entire activation matrix is almost
10. Figure~\ref{fig:class_wise_var_1} shows that the first singular
vector captures a huge portion of the variance of the restriction of
the activation matrix on each individual classes. It, thus, gives us
an intuition that, by sub-additivity of rank, the sets of basis
vectors explaining the activations belonging to each individual
classes are less intersecting than it is in the case of the model
trained without the \lr.

\subsection{Clusters of low dimensional embeddings}
\label{sec:clust-low-dimens}

\begin{figure}[htb]%
  \begin{center}
\includegraphics[width=0.3\linewidth]{./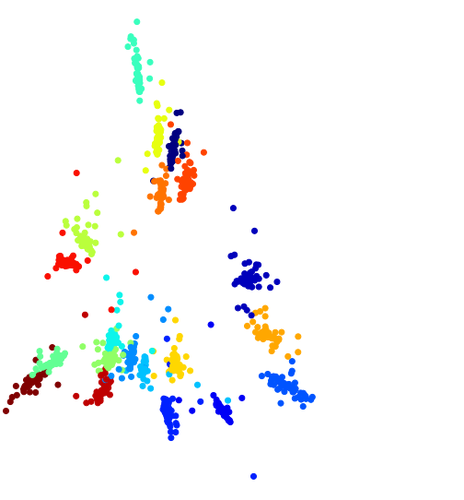}\hspace{20pt}
\includegraphics[width=0.3\linewidth]{./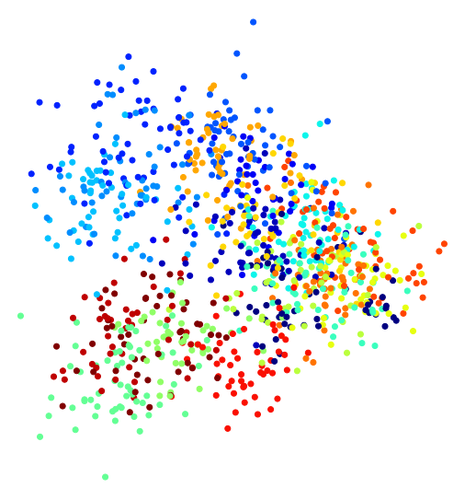}
    \end{center}
  		\caption{2-D PCA projection of representations from
ResNet50 trained on coarse labels of CIFAR 100 with~(left) and
without~(right) low-rank constraints, colored according to the
original 20  coarse labels.}
  \label{fig:coarse_lbl_PCA}
\end{figure}

Figure \ref{fig:coarse_lbl_PCA} shows the two dimensional
projections of the 2048 dimensional embeddings obtained from
ResNet-50-LR and ResNet-50-N-LR. The coloring is done according to the
coarse labels of the input. We can see that the clusters are more
separable in the case of the model with \lr than the model without,
which gives some insight into why a max-margin classifier performs
better for the LR model than the N-LR model. Thus, the representations
of the low rank model are more discriminative in the sense that for
the low rank representations there are low dimensional linear
classifiers that can classify the dataset with a higher margin than
the vanilla models.

\section{Adversarial Attacks}

\subsection{Types of Attacks}
\label{sec:types-attacks}

Here, $\vx_d$ refers to an example from the data distribution and $\vx_a$ the adversarially perturbed version of $\vx_d$. For vectors $\vz$ and $\vx$, let $\clip{\vz}$ denote the element-wise clipping of $\vz$, with $z_i$ clipped to the range $\bs{x_i-\epsilon, x_i + \epsilon}$.

\begin{itemize}
\item \ifsgm - The Fast Sign Gradient Method
(\fsgm)~\cite{goodfellow2014explaining} was proposed as the existing
methods (\citet{szegedy2013intriguing}) of the time were slow. \fsgm
tries to maximize the loss function by perturbing the input
slightly. Iterative Fast Sign Gradient method (\ifsgm) is a simple
extension of \fsgm that follows the following simple iterative step.
  \begin{align}
    \label{eq:ifsgm} &\vx_a^0 = \vx_d,\\ &\vx_{a}^{n+1}= \clip{\vx_a^n
+ \alpha \cdot\sgn{\nabla_{\vx_a^n}\cL(\vx_a^n, \vy_t)}}\nonumber
  \end{align}
  
\item \ill - \ifsgm is an untargeted attack. Iterative less likely
fast sign gradient method (\ill)~\citep{kurakin2016adversarial} is a
way to choose the target label wisely. Consider $\mathbb{P}_{M}(\vy
\vert \vx)$ to be the probability assigned to the label $\vy$, for the
example $\vx$, by the model $M$.  In this attack, the target is set as
$\vy_t^n = \argmin_{\vy\in \mathcal{Y}}\mathbb{P}_{M}(\vy \vert
\vx^n)$ and the following iterative update steps are performed.
  \begin{align}
    \label{eq:ill_upd} &\vx_a^0 = \vx_d,\\ &\vx_{a}^{n+1}=
\clip{\vx_a^n - \alpha \cdot\sgn{\nabla_{\vx_a^n}\mathcal{L}(\vx_a^n,
\vy_t^n)}}\nonumber
  \end{align} Intuitively, this method picks the least likely class in
each iteration and then tries to increase the probability of
predicting that class. In both of these methods, $\alpha$ was set to
$1$ as was done in ~\citet{kurakin2016adversarial}.
\item \deepfool - ~\citet{mosaavi2016} describes the \deepfool
procedure to find the optimal (smallest) perturbation for the input
$\vx$ that can fool the classifier. In the case of affine classifiers,
\deepfool finds the closest hyper-plane of the boundary of the region
where the classifier returns the same label as $\vx$ and then adds a
small perturbation to cross the hyper-plane in that direction.

  As deep net classifiers are not affine, the partitions of the input
space where the classifier outputs the same label are not not
polyhedrons. Hence, the algorithm takes an iterative approach.
Specifically, the algorithm assumes a linerization of the classifier
around $\vx$ to approximate the polyhedron and then it takes a step
towards the closest boundary. For a more detailed explanation please
look at ~\citet{mosaavi2016}.
\end{itemize}
\subsection{Further Experiments on Adversarial Robustness}
\label{sec:more-exp-adv-robust}
In Table~\ref{tab:adv-robust-cifar100-fine}, we show the adversarial
test accuracy for varying perturbation budgets for ResNet50 trained on
the fine labels of CIFAR100. Low Rank models, not have the best
adversarial test accuracies but also the best natural test accuracies.

\begin{table*}[!htb]\centering\footnotesize
\begin{tabular}{llllllllllllc}
\toprule
&&&\multicolumn{8}{c}{Adversarial Test Accuracy($\%)$}           &
                                                                     Clean
                                                                    Test Accuracy ($\%$)    \\ \midrule
\multicolumn{3}{c}{$L_\infty$ radius}               &   \multicolumn{2}{c}{$8/255$}      &   \multicolumn{2}{c}{$10/255$}               &   \multicolumn{2}{c}{$16/255$}   &\multicolumn{2}{c}{$20/255$}&       \\
\multicolumn{3}{c}{Attack iterations}            & $7$     & $20$       &$7$    &$20$           &$7$    &$20$      &$7$    &$20$   &       \\\toprule
\multirow{2}{*}{\rotatebox[origin=c]{0}{White Box}}    & \multirow{2}{*}{R50}       &N-LR& 27.8 & 21.8  & 25.2 & 17.7  & 21.1 & 17.7   & 19.4 & 7.6  & 77.2 \\
                      &                       &1-LR&  $\mathbf{28.8}$ & $\mathbf{23.6}$   & $\mathbf{26.8}$ & $\mathbf{21.4}$   & $\mathbf{24.4}$ & $\mathbf{17.8}$  & $\mathbf{23.5}$ & $\mathbf{16.5}$  & $\mathbf{77.7}$ \\\addlinespace
\multirow{1}{*}{\rotatebox[origin=c-10]{0}{ Black Box}}     & R50       &1-LR & 38.3 & 32.8   & 34.3 & 26.4   & 28.9 & 15.0   & 27.0 & 11.7 &   -   \\\bottomrule
\end{tabular}\caption{Adversarial Test Accuracy against a
  $\ell_\infty$ constrained PGD adversary with the $\ell_\infty$
  radius bounded by $\epsilon$ and the number of attack steps bounded
  by $\tau$. R50 and R18 denotes ResNet50 and ResNet18
  respectively. C10 and C100 refer to CIFAR10 and CIFAR100~(Coarse
  labels) respectively.}\label{tab:adv-robust-cifar100-fine}
\end{table*}

In Figure~\ref{fig:adv_pert_svhn_vgg}, we compare the change in the
adversarial test accuracy with respect to the amount of adversarial
noise added for ResNet18 models trained on SVHN~\ref{fig:adv-r18-svhn} and VGG19 on
CIFAR10~\ref{fig:adv-vgg-c10}.
\begin{figure}[t]
  \begin{subfigure}[c]{0.15\linewidth} \centering
\def\svgwidth{0.99\columnwidth} %
\input{./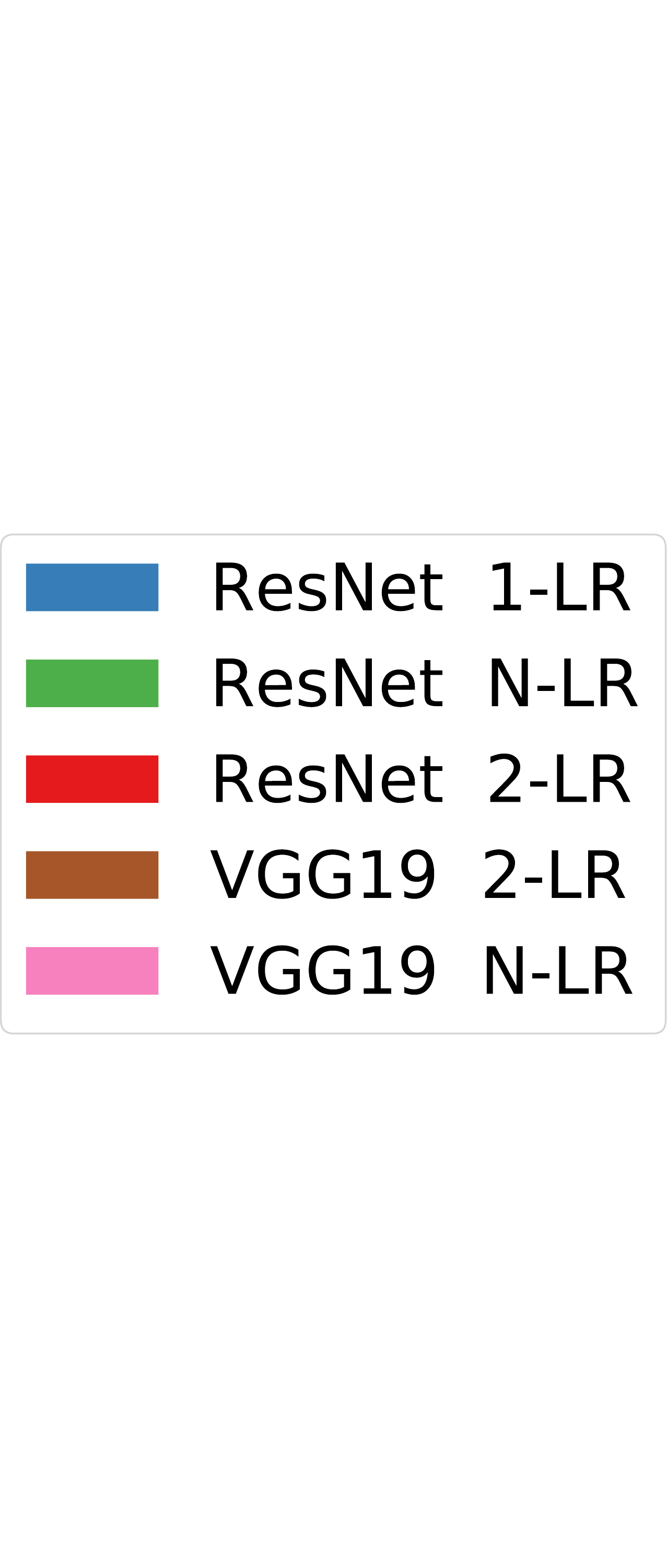_tex}
\end{subfigure}\hfill
\begin{subfigure}[c]{0.4\linewidth} \centering
\def\svgwidth{0.99\columnwidth}
\input{./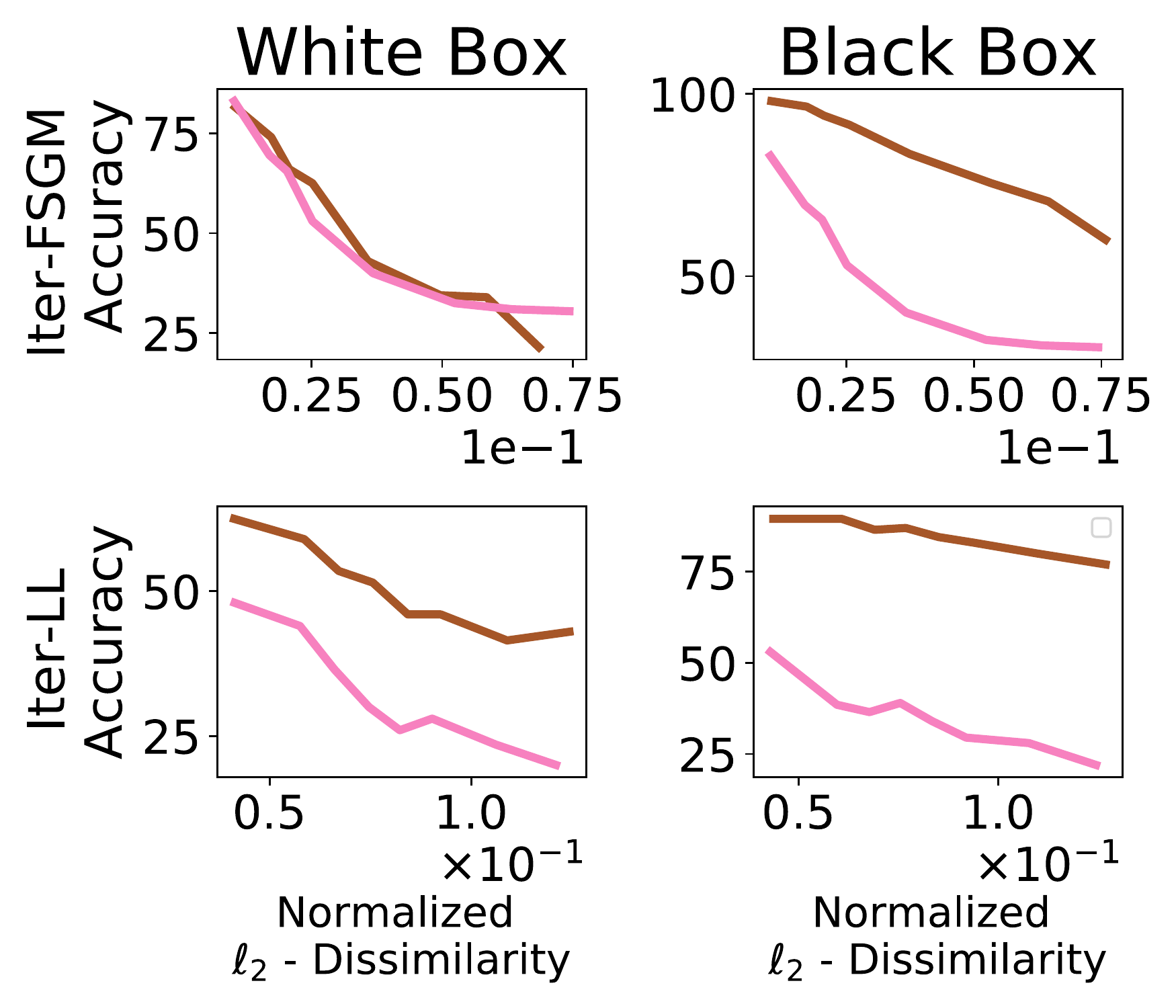_tex}\caption{VGG19-CIFAR10}\label{fig:adv-vgg-c10}
\end{subfigure}
\begin{subfigure}[c]{0.4\linewidth} \centering
\def\svgwidth{0.99\columnwidth}
\input{./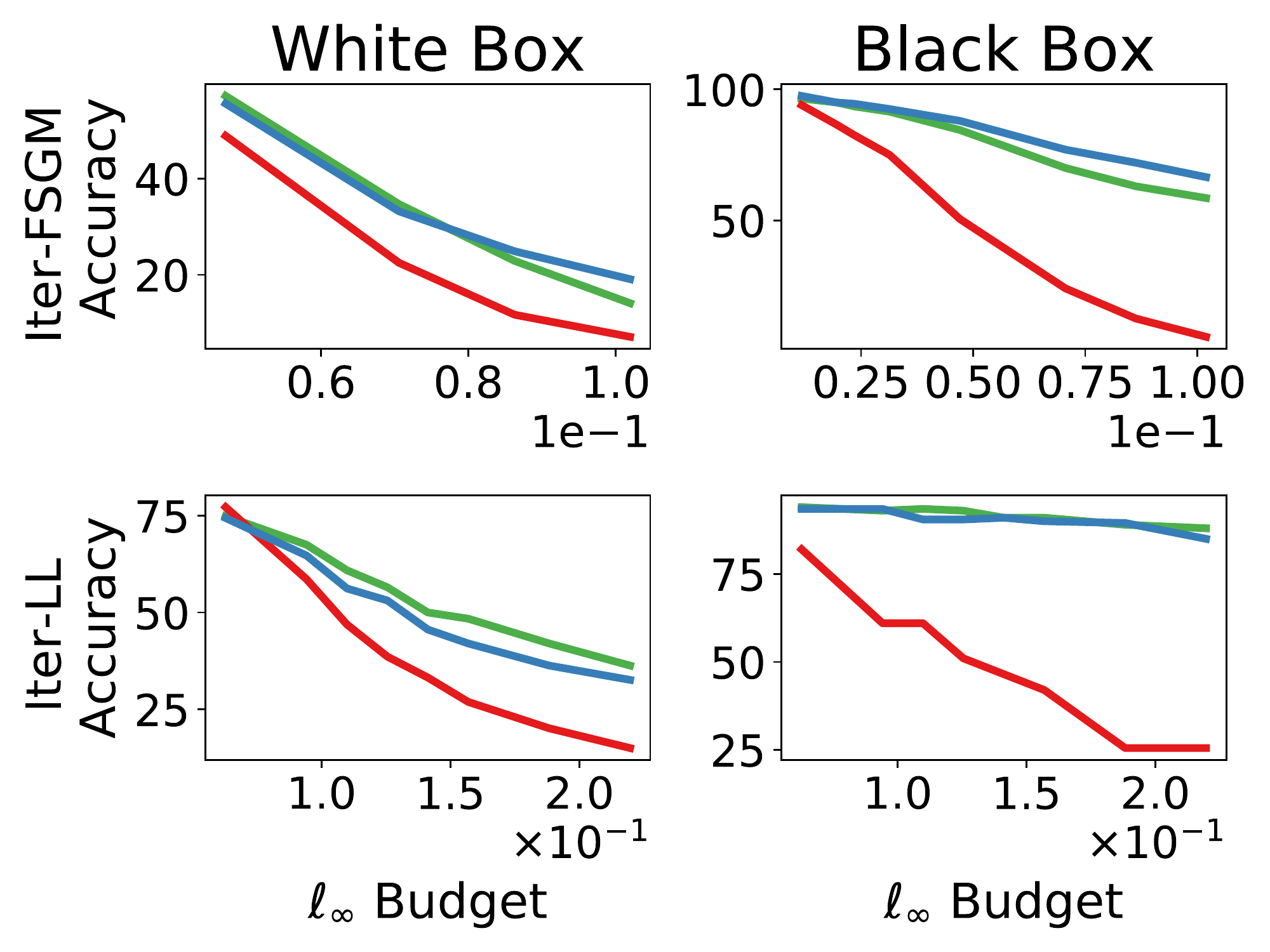_tex}\caption{ResNet18-SVHN}\label{fig:adv-r18-svhn}
\end{subfigure}
\caption[Change in
perturbation]{Adversarial accuracy plotted against magnitude of
  perturbation. }
\label{fig:adv_pert_svhn_vgg}
	\end{figure}

\subsection{Minimum Perturbation for a successful Attack}
\label{sec:mimim-pert-succ}

Table~\ref{tab:adv_rob_pert} lists the minimum perturbation required
to fool the classifier under the particular attack scheme. For \ifsgm
and \ill, there are essentially three hyper-parameters($t, \alpha,
\epsilon$) in the experiments as can be seen below.

\ifsgm
\begin{align}
  \label{eq:ifsgm_app} \textbf{Repeat $t$ times}&\\\nonumber &\vx_a^0
= \vx_d,\\\nonumber &\vx_{a}^{n+1}= \clip{\vx_a^n + \alpha
\cdot\sgn{\nabla_{\vx_a^n}\cL(\vx_a^n, \vy_t)}}
\end{align} \ill
\begin{align}
    \label{eq:ill_upd_app} \textbf{Repeat $t$ times}&\\\nonumber
&\vx_a^0 = \vx_d,\\\nonumber &\vx_{a}^{n+1}= \clip{\vx_a^n - \alpha
\cdot\sgn{\nabla_{\vx_a^n}\mathcal{L}(\vx_a^n, \vy_t^n)}}
\end{align}

Following the convention of ~\citet{kurakin2016adversarial}, we set
$\alpha = 1$. We tuned the hyper-parameter $\epsilon$ function to
obtain the smallest $\epsilon$ that resulted in over $99\%$
misclassification accuracy for some $t$ and then repeated the
experiments until such a $t$ was achieved. Finally $\rho$ was
calculated.

Algorithm 2 in ~\citet{mosaavi2016} gives details about the DeepFool
algorithm for multi-class classifiers. The algorithm returns the
minimum perturbation $r(\vx)$ required to make the classifier
misclassify the instance $\vx$.  The $L_2$ dissimilarity is obtained
by calculating $ \rho = \frac{r(\vx)}{\norm{\vx}_2}$

For the benefit of reproducibility of experiments, we list the values
of $\epsilon$ for \ill and \ifsgm in Table~\ref{tab:adv_rob_pert_eps}
corresponding to the values in Table~\ref{tab:adv_rob_pert} . The values of the perturbation budget also show that the
minimum perturbation required for $99\%$ mis-classification is much higher for LR models than N-LR models.
For
\deepfool, we used the publicly available code~\footnote{
\url{https://github.com/LTS4/DeepFool/blob/master/Python/deepfool.py}}.

\begin{table}[h!]  \centering
  \begin{tabular}{|c|c|c|c|} \hline&Model&$\epsilon$[\ill]&$\epsilon$
[\ifsgm]\\\hline \multirow{3}{*}{White Box}&ResNet
\textbf{2-LR}&$0.04$&$0.02$\\\cline{2-4} &ResNet
\textbf{1-LR}&$0.06$&$0.01$\\\cline{2-4} &ResNet
\textbf{N-LR}&$0.01$&$0.01$\\\hline \multirow{2}{*}{Black Box}&ResNet
\textbf{1-LR}&$0.08$&$0.01$\\\cline{2-4} &ResNet
\textbf{2-LR}&$0.1$&$0.01$\\\hline
  \end{tabular} \vspace{1em}
  \caption{Value for $\epsilon$ required for Adversarial
Misclassification corresponding to Table~\ref{tab:adv_rob_pert}.}
  \label{tab:adv_rob_pert_eps}
\end{table}

 \begin{figure}[h!]  \centering
\includegraphics[width=4in,height=2in]{./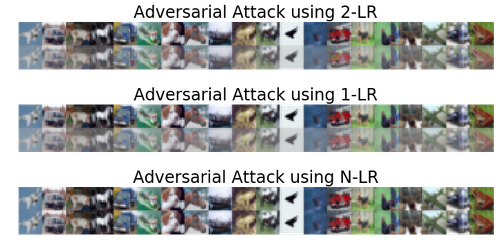}
  \caption{For each model, original images are on the top row and the
images generated by DeepFool are below.}
  \label{fig:adv_img}
\end{figure}

We also look at some of the adversarial images generated by DeepFool
in Figure~\ref{fig:adv_img}. We observe that it is immediately clear
that the adversarial images are different from the original images in
the case of LR models whereas it is not so apparent in the case of
N-LR models.

\subsection{Unstability of Adversarial Attacks}
\label{sec:fixed-number-steps}

 An interesting observation is that the
values of $\rho$ in Table~\ref{tab:adv_rob_pert} are lower than
those in Figure~\ref{fig:adv_pert} though the attacks have a higher rate of
success. To explain this behaviour, we show empirical evidence that an attack that adds noise for a fixed number of
steps~\citep{kurakin2016adversarial,kurakin2016} to the input is
significantly weaker than one that stops on successful
misclassification. %
The essential difference between the attacks in
Figure~\ref{fig:adv_pert} and Table~\ref{tab:adv_rob_pert} is in the
number of iterations for which the updates (Step~\ref{eq:ifsgm_app}
and Step~\ref{eq:ill_upd_app}) are executed. In
Figure~\ref{fig:adv_pert}, the step is executed $t$ times whereas in
Table~\ref{tab:adv_rob_pert}, the updates are executed until the
classifier makes a mistake.
\begin{figure}[h!]
  \begin{subfigure}[c]{0.4\linewidth} \centering
\def\svgwidth{0.99\columnwidth} \input{./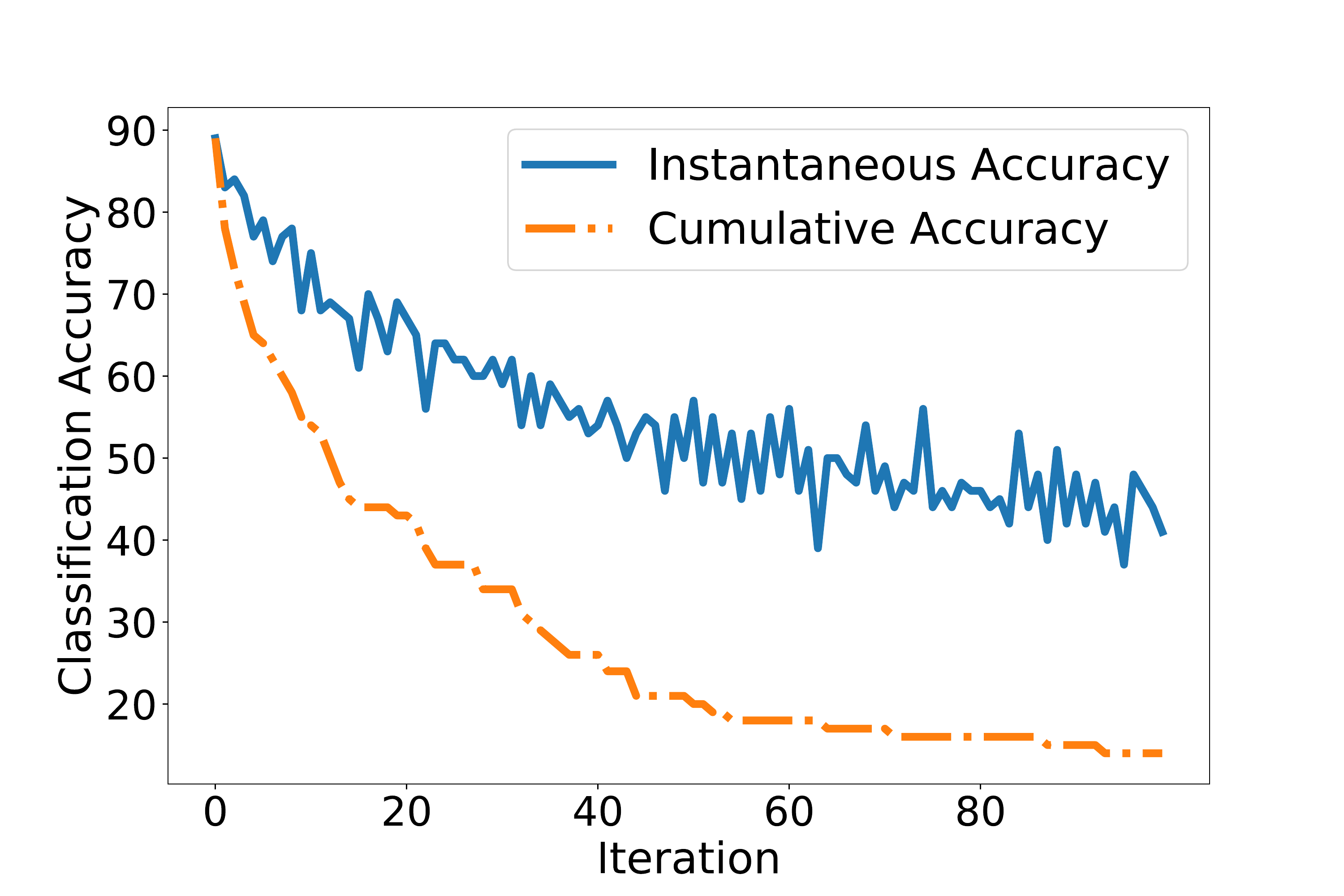_tex}
			 \caption{\ill \label{sfig:ll_inst_cum_adv}}
  \end{subfigure}\hfill
     \begin{subfigure}[c]{0.4\linewidth} \centering
\def\svgwidth{0.99\columnwidth}
\input{./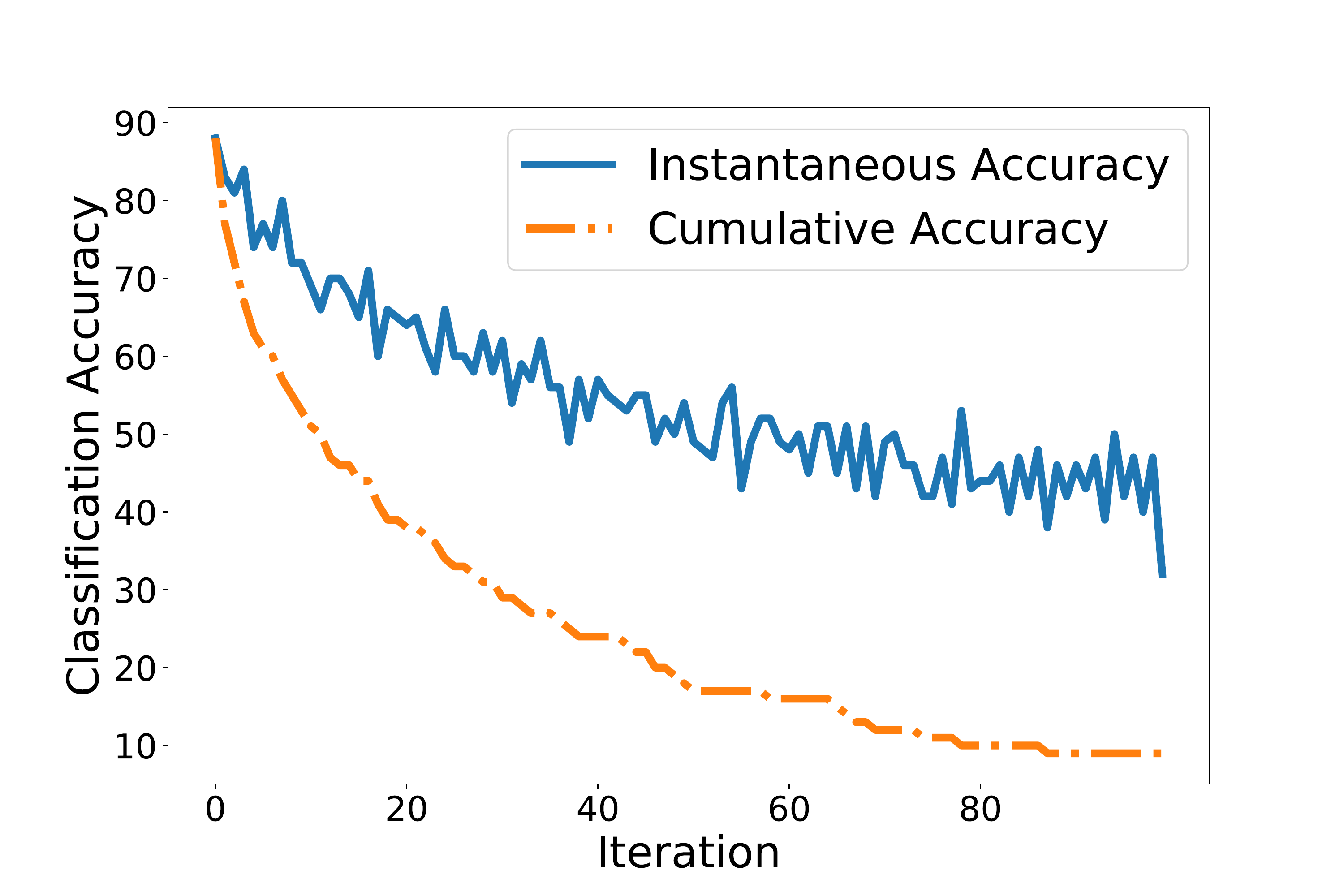_tex}
		  \caption{\ifsgm \label{sfig:fsgm_inst_cum_adv}}
        \end{subfigure}
         \caption{This shows that an adversarial example that has
successfully fooled the classifier in a previous step can be
classified correctly upon adding more
perturbation. Figure~\ref{sfig:ll_inst_cum_adv} and
~\ref{sfig:fsgm_inst_cum_adv} refers to the two attack schemes - \ill
and \ifsgm respectively.}
         \label{fig:adv_fix_step}
       \end{figure}
It would be natural to expect that once a classifier has misclassified
an example, adding more adversarial perturbation will not make
the classifier classify it correctly. However,
Figure~\ref{fig:adv_fix_step} suggests that a misclassified example
can be possibly classified correctly upon further addition of noise.

Let $y_{a}(\vx; k)$ be the label given to $\vx$ after adding
adversarial perturbation to $\vx$ for $k$ steps. We define
\emph{instantaneous accuracy} ($a_{\cI}(k)$) and \emph{cumulative
accuracy} ($a_{\cC}(k)$) as \[a_{\cI}(k) = 1 -
\dfrac{1}{m}\sum_{i=1}^m \cI_{0,1}\bc{y_{a}(\vx; k) \neq y_{a}(\vx;
0)};\quad a_{\cC}(k) =1 - \dfrac{1}{m}\sum_{i=1}^m \max_{1\le j\le
k}\bc{ \cI_{0,1} \bc{y_{a}(\vx; j) \neq y_{a}(\vx; 0)}} \]

In Figure~\ref{fig:adv_fix_step}, we see the \emph{instantaneous
accuracy} and the \emph{cumulative accuracy} for ResNet 1-LR where
$\alpha=0.01,\epsilon=0.1$ and $t$ is plotted in the x-axis. The
cumulative accuracy is by definition a non-increasing
sequence. However, surprisingly the instantaneous accuracy is not
monotonic and has a lower rate of decrease than the cumulative
accuracy. It also appears to stabilize at a value much higher than the
cumulative accuracy.

\subsection{Adversarial Attack on Maximum Margin Model}
\label{sec:advers-attack-maxim}

Here, we train max-margin classifiers on representations of images
obtained from different ResNet models~(similar to Section~\ref{sec:show-validity-low}) and see whether the
representations of adversarial images, that had successfully fooled
the ResNet model, can fool the max-margin classifier as well. We
train a variety of hybrid max-margin models  with ResNet18-1-LR,
ResNet18-2-LR, and ResNet18-N-LR along with black box versions of the
same. Then we generate adversarial examples for all three attacks (both black box and white box) on the trained ResNet
models~(not the hybrid models). Then we use these adversarial examples to attack the corresponding max-margin models and report the accuracy of the
max-margin models in Table~\ref{tab:max_margin_adv}.

To perform a fair comparison with the hybrid ResNet18-N-LR, it is essential
to add a similar amount of noise to generate the examples for the hyrbrid
ResNet18-N-LR as is added to hybrid ResNet18-1-LR. The adversarial
examples are hence generated by obtaining the gradient using
ResNet18-N-LR but stopping the iteration only when the adversarial
example could fool ResNet18-1-LR. This is, in-fact, the black box
attack on ResNet18-1-LR. As Table~\ref{tab:max_margin_adv} suggests,
the max-margin classifiers are not only more robust to adversarial examples in general but are es-specially more robust
when the representations come from LR models than N-LR models.

\section{Noise Cancellation Properties}
\label{sec:gen_bounds}

Here we plot a quantity called \textit{layer cushion}, first mentioned in~\citet{arora18b}, for various layers
in ResNet18~\textsf{1-LR}, ResNet18~\textsf{2-LR}, ResNet18~\textsf{N-LR} and a randomly
initialized ResNet. As suggested in~\citet{arora18b}, this quantity appears in the denominator in the generalization bound of the network
and has a positive correlation with the noise-cancellation property of the network. Thus a higher value of this quantity
can be used to justify the low sensitivity of the network to noise.

The motivation for these quantity is that if the “real” data
$\vec{x}$ is more aligned with the high singular values of the linear
transformations, the linear transformations are more robust to noise at
that point $\vec{x}$. It can roughly be thought of as the inverse of the
sensitivity of the transformation. It measures the ratio of the norm
of the actual output of the layer at $\vec{x}$ with the upper bound on
the norm of the output at $\vec{x}$. If this quantity is large for
most $\vec{x}$, it means that most of the signal is aligned with the
high singular values of the linear transformation, which means that
the transformation is more resilient to noise.
This partly explains why the network attenuates the adversarial noise
at the data points. For any layer $i$, the layer cushion is defined as the largest number
$\mu_i$ such that the following holds for all examples $\vec{x}\in
\cS$ where $\cS$ is the training set.

\[ \mu_i\norm{\vec{W}_i}_F\norm{\phi\br{\vec{x}_{i-1}}} \le
\norm{\vec{W}_i\phi\br{\vec{x}_{i-1}}}\] $\vec{W}_i$ is the weight
matrix of the $i^{\it{th}}$ layer, $\vec{x}_i$ is the pre-activation
of the layer and $\phi$ is the activation function. As observed
by~\citet{arora18b}, higher the value of $\mu_i$, better is the
generalization ability of the model. Here we plot a distribution of
the ratio for the examples in the dataset.

\subsection{Layer Cushion for ResNet18 on CIFAR10}
\label{sec:resnet18-cifar10}

 The following corresponds to the four ResNet blocks in ResNet18. Each block has two smaller
sub-blocks where each sub-block has two convolutional layers. The
value of layer cushion for these modules of each of these blocks are plotted
below. Note that only 2-LR shows an increased cushion in Layer 3
whereas both 1-LR and 2-LR have higher cushions in all other layers.

\begin{center}
  \begin{figure}[h!]
  \begin{subfigure}[c]{0.24\linewidth} \centering
\def\svgwidth{0.99\columnwidth} \input{./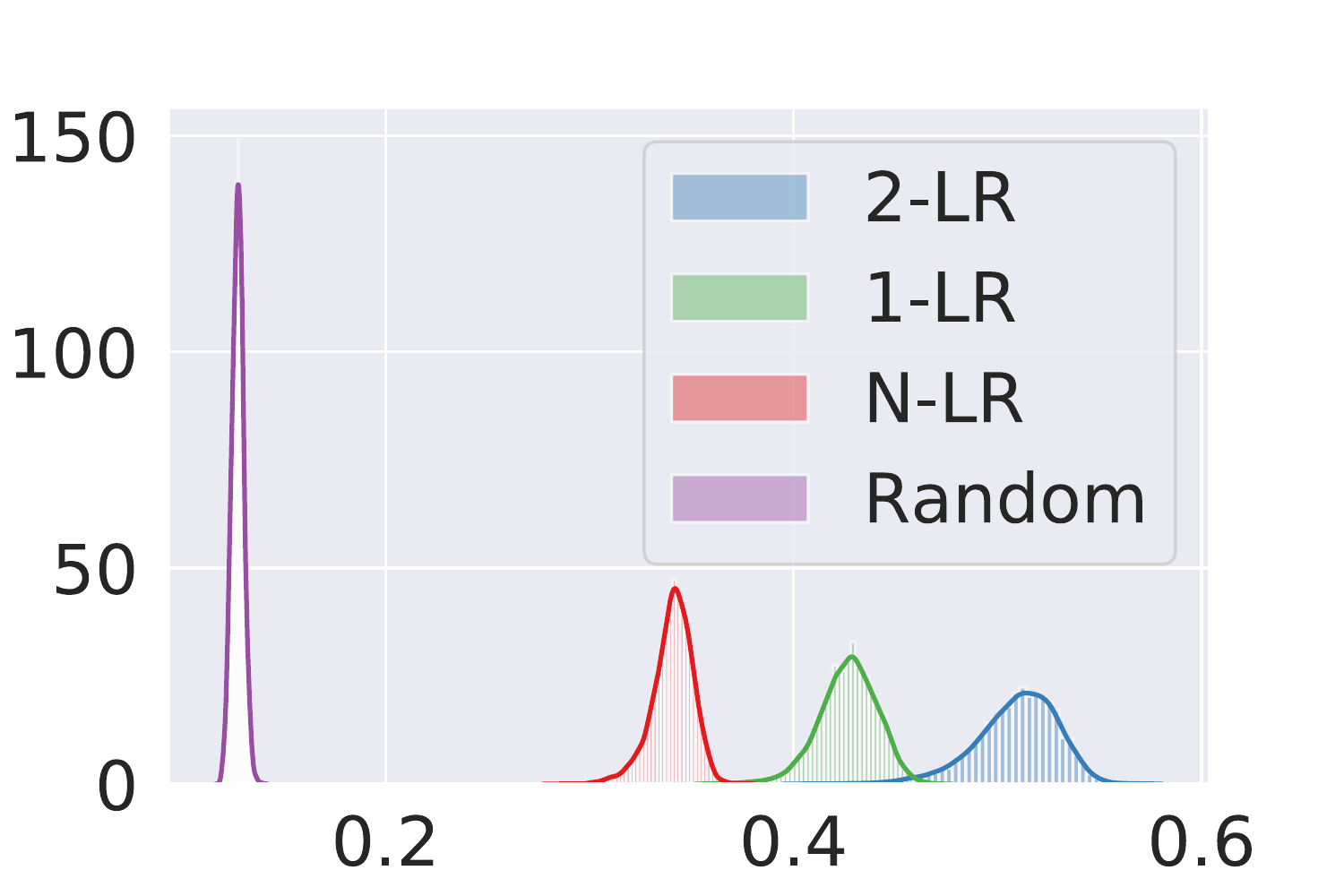_tex}
  \end{subfigure}
  \begin{subfigure}[c]{0.24\linewidth} \centering
\def\svgwidth{0.99\columnwidth} \input{./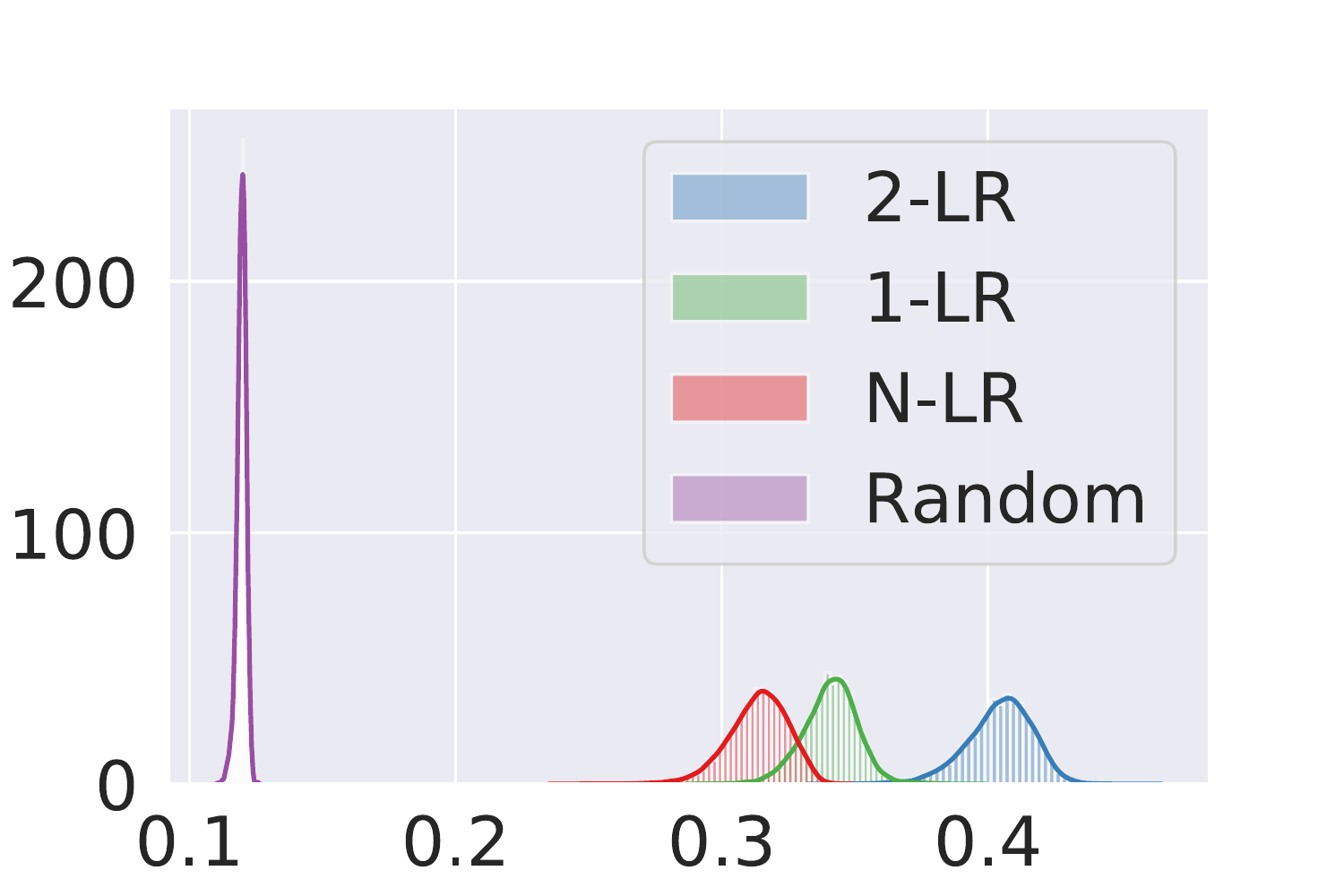_tex}
  \end{subfigure}
  \begin{subfigure}[c]{0.24\linewidth} \centering
\def\svgwidth{0.99\columnwidth} \input{./figs/spec_lyr1_b1_c1.pdf_tex}
  \end{subfigure}
  \begin{subfigure}[c]{0.24\linewidth} \centering
\def\svgwidth{0.99\columnwidth} \input{./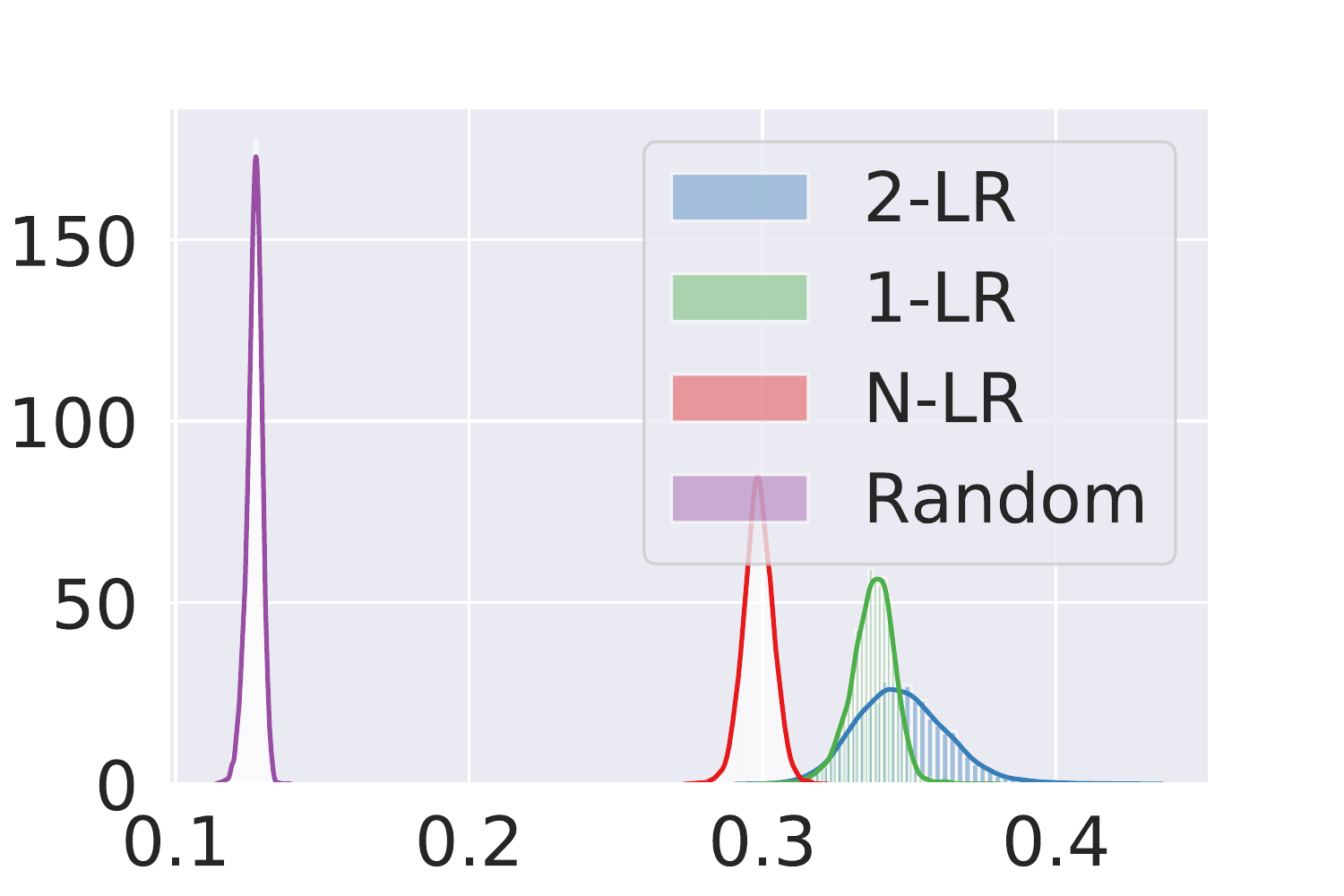_tex}
  \end{subfigure}
  \caption{Cushion of Layer 1}
  \label{fig:int_spec_lyr_cush}
\end{figure}
\end{center}

\begin{center}
  \begin{figure}[h!]
  \begin{subfigure}[c]{0.24\linewidth} \centering
\def\svgwidth{0.99\columnwidth} \input{./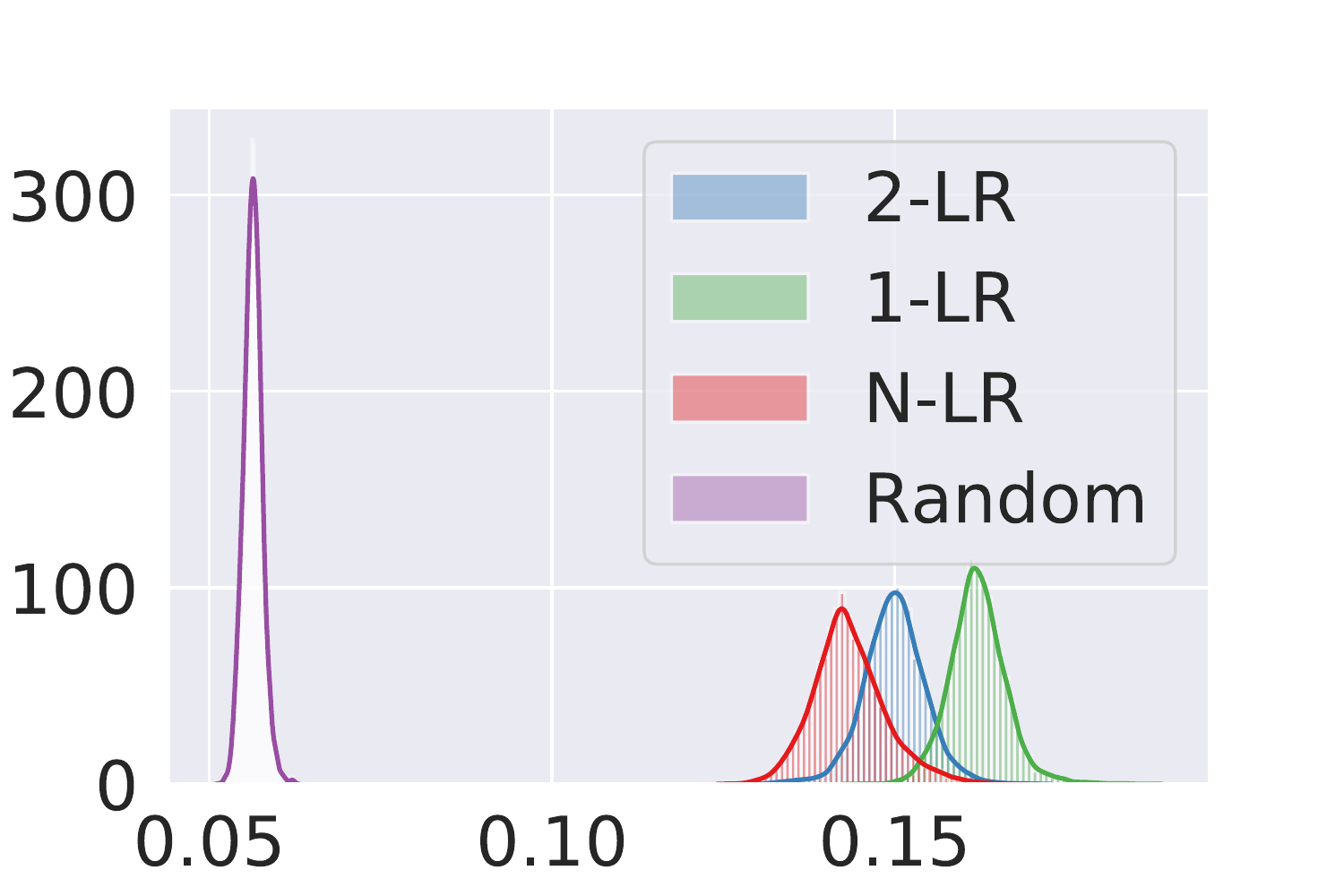_tex}
  \end{subfigure}
  \begin{subfigure}[c]{0.24\linewidth} \centering
\def\svgwidth{0.99\columnwidth} \input{./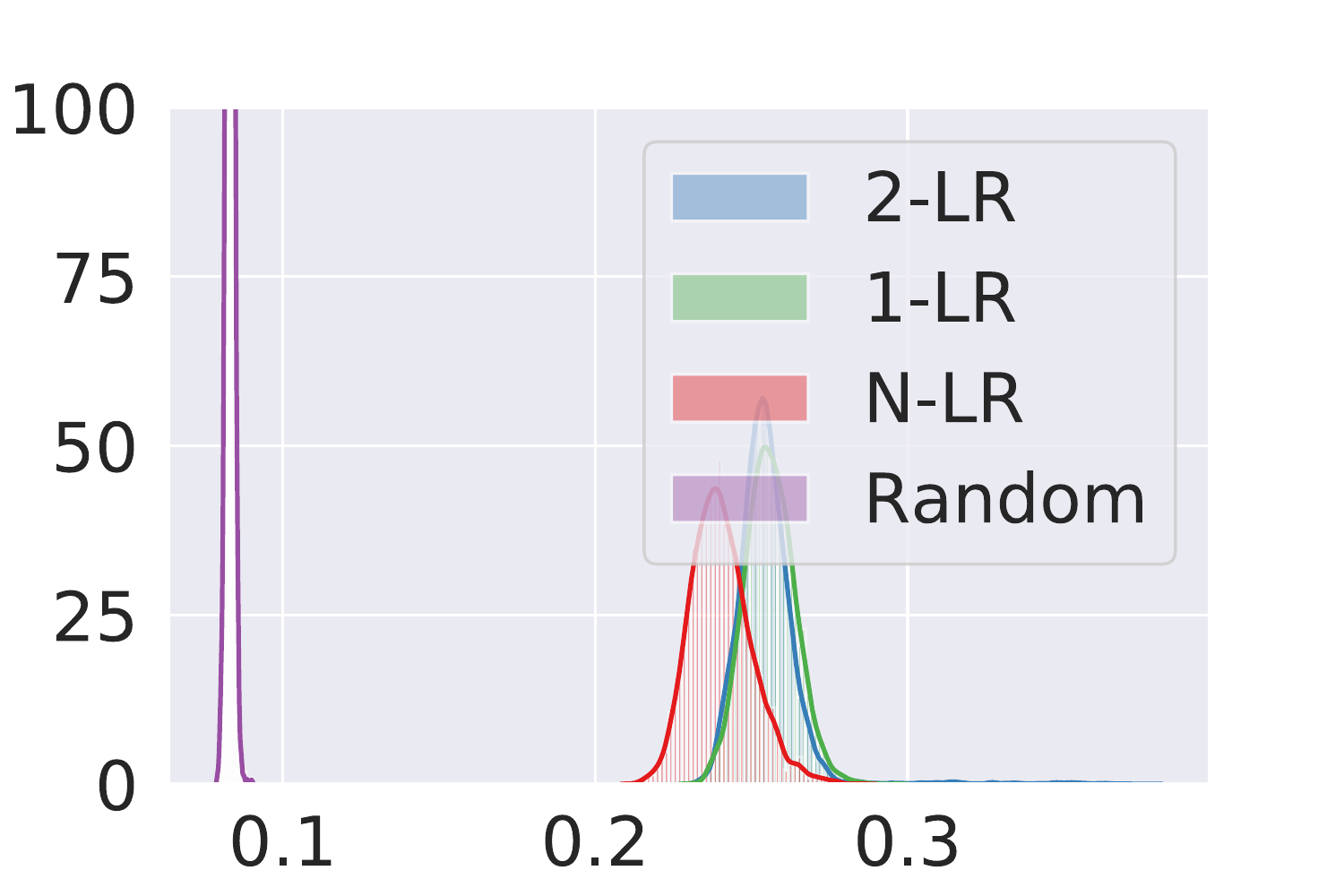_tex}
  \end{subfigure}
  \begin{subfigure}[c]{0.24\linewidth} \centering
\def\svgwidth{0.99\columnwidth} \input{./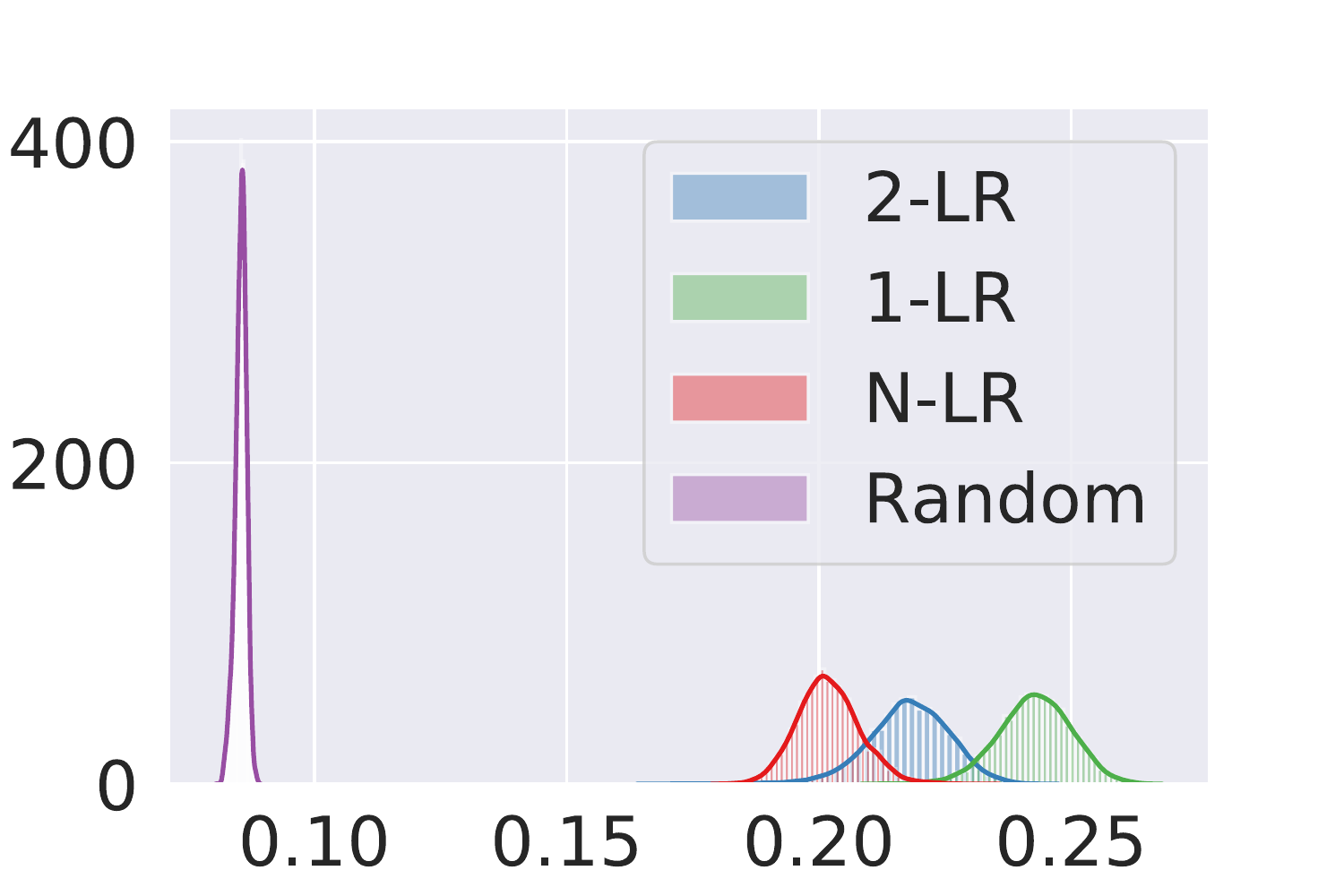_tex}
\end{subfigure}
\begin{subfigure}[c]{0.24\linewidth} \centering
\def\svgwidth{0.99\columnwidth} \input{./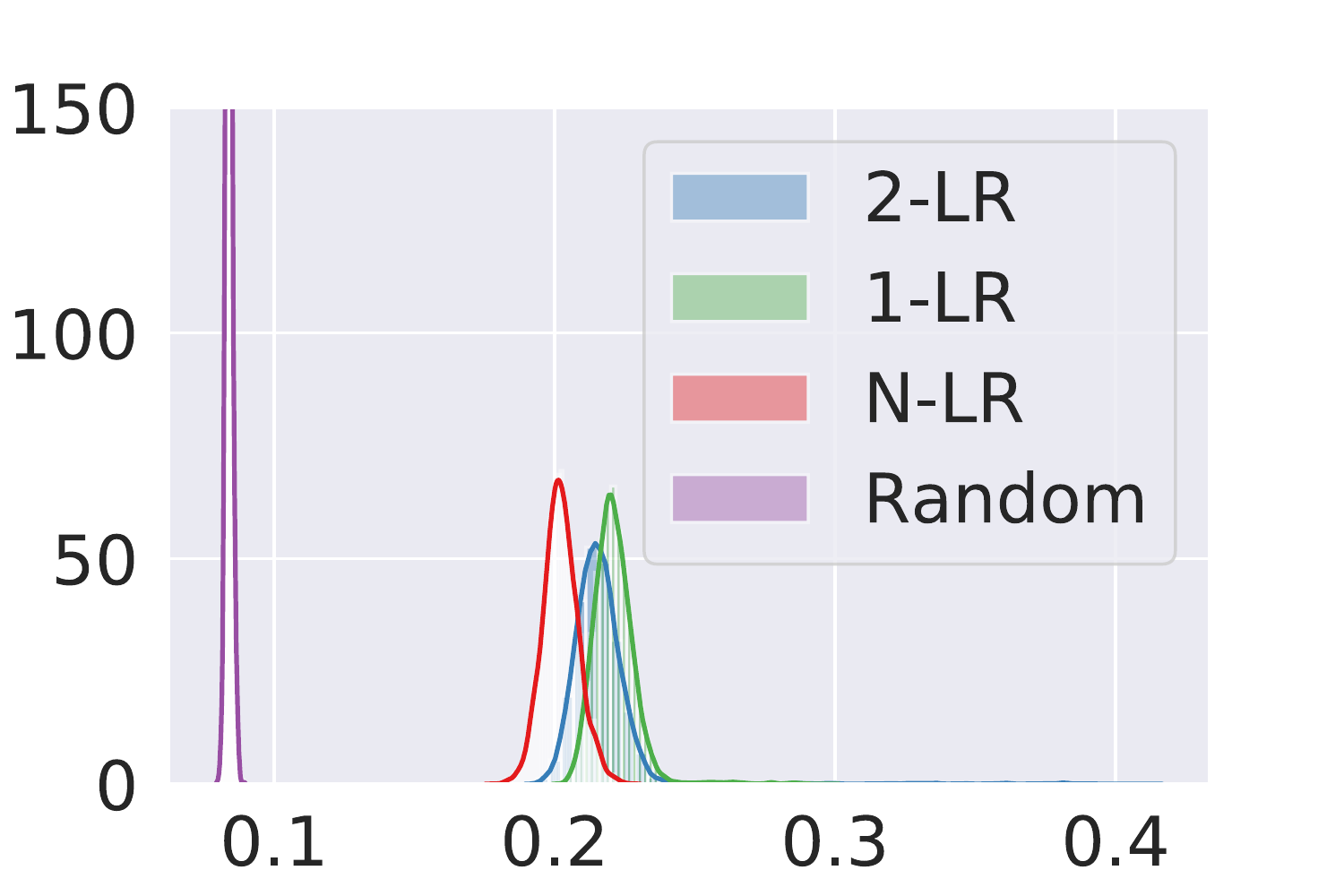_tex}
  \end{subfigure}
  \caption{Cushion of Layer 2}
  \label{fig:int_spec_lyr2_cush}
\end{figure}
\end{center}

\begin{center}
  \begin{figure}[h!]
  \begin{subfigure}[c]{0.24\linewidth} \centering
\def\svgwidth{0.99\columnwidth} \input{./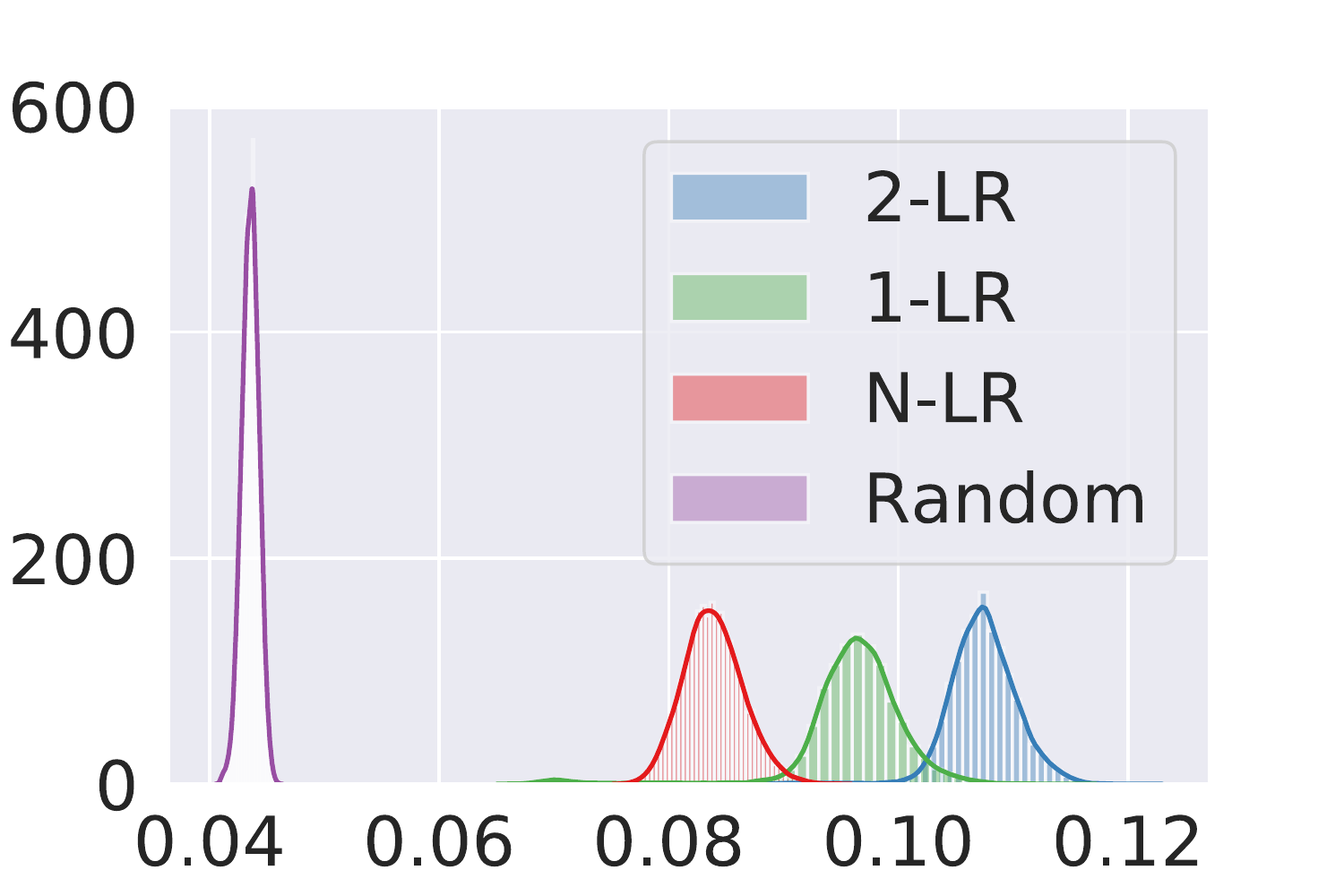_tex}
  \end{subfigure}
  \begin{subfigure}[c]{0.24\linewidth} \centering
\def\svgwidth{0.99\columnwidth} \input{./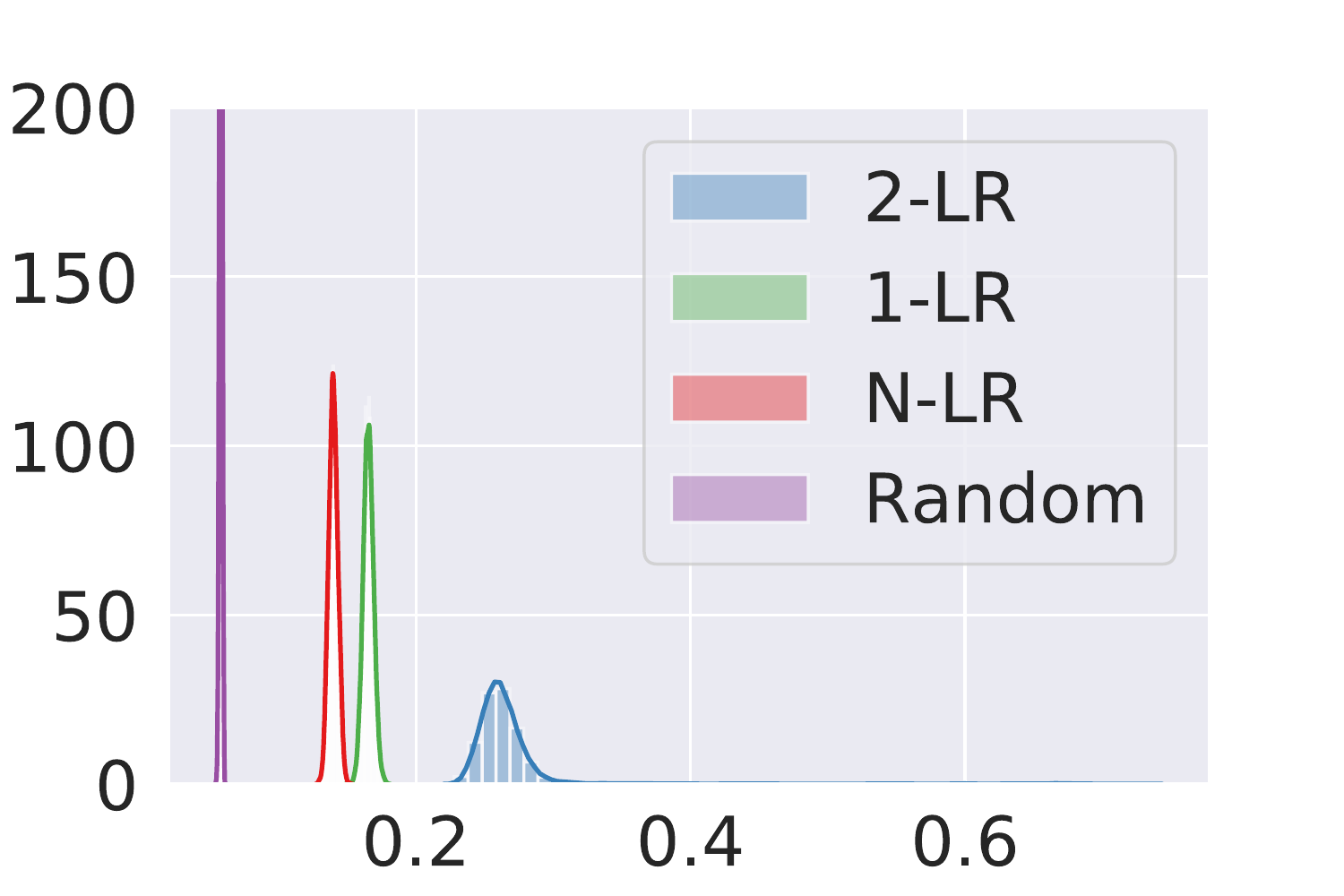_tex}
  \end{subfigure}
  \begin{subfigure}[c]{0.24\linewidth} \centering
\def\svgwidth{0.99\columnwidth} \input{./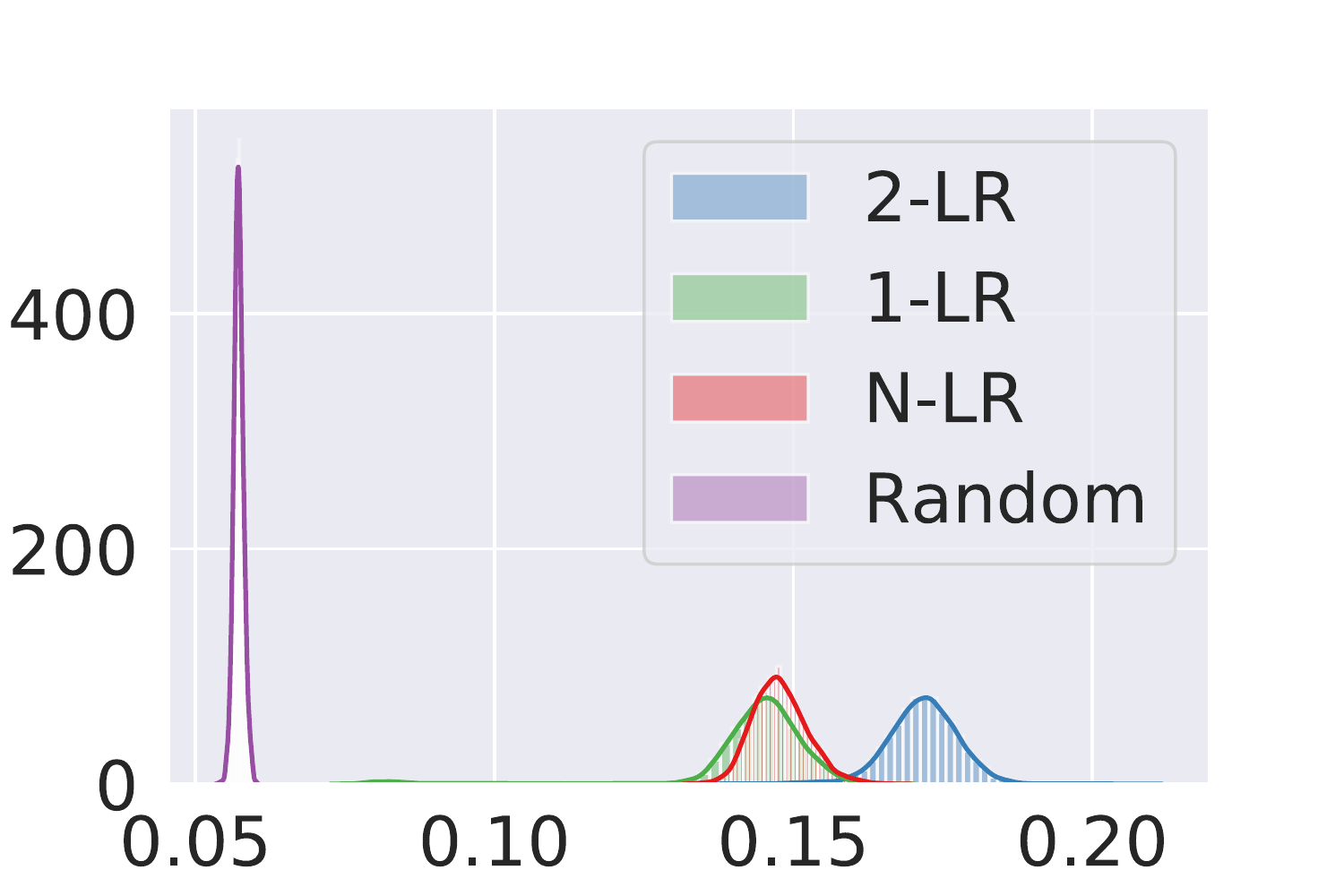_tex}
  \end{subfigure}
  \begin{subfigure}[c]{0.24\linewidth} \centering
\def\svgwidth{0.99\columnwidth} \input{./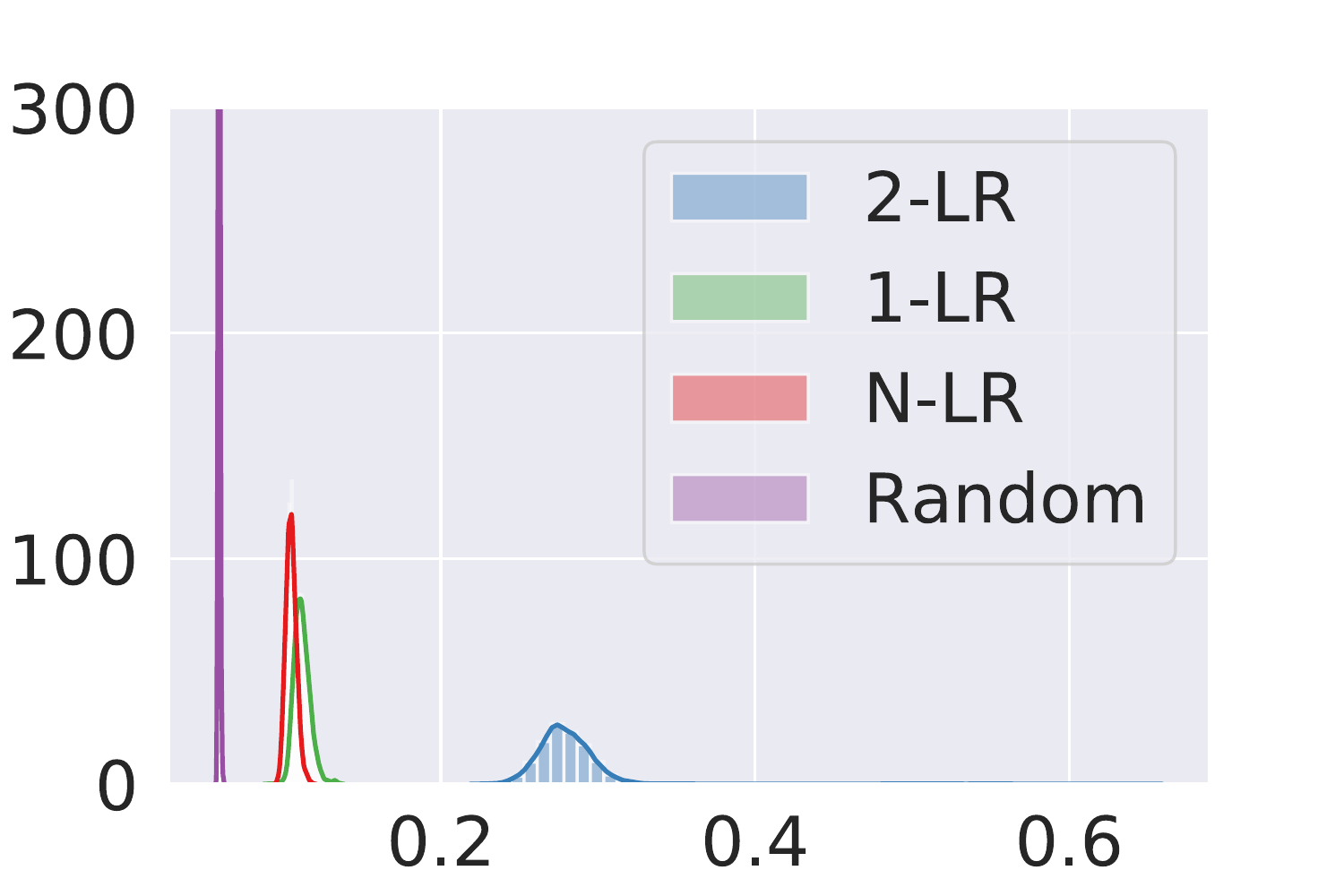_tex}
  \end{subfigure}
  \caption{Cushion of Layer 3}
  \label{fig:int_spec_lyr3_cush}
\end{figure}
\end{center} 
  \begin{figure}[h!]
  \begin{subfigure}[c]{0.24\linewidth} \centering
\def\svgwidth{0.99\columnwidth} \input{./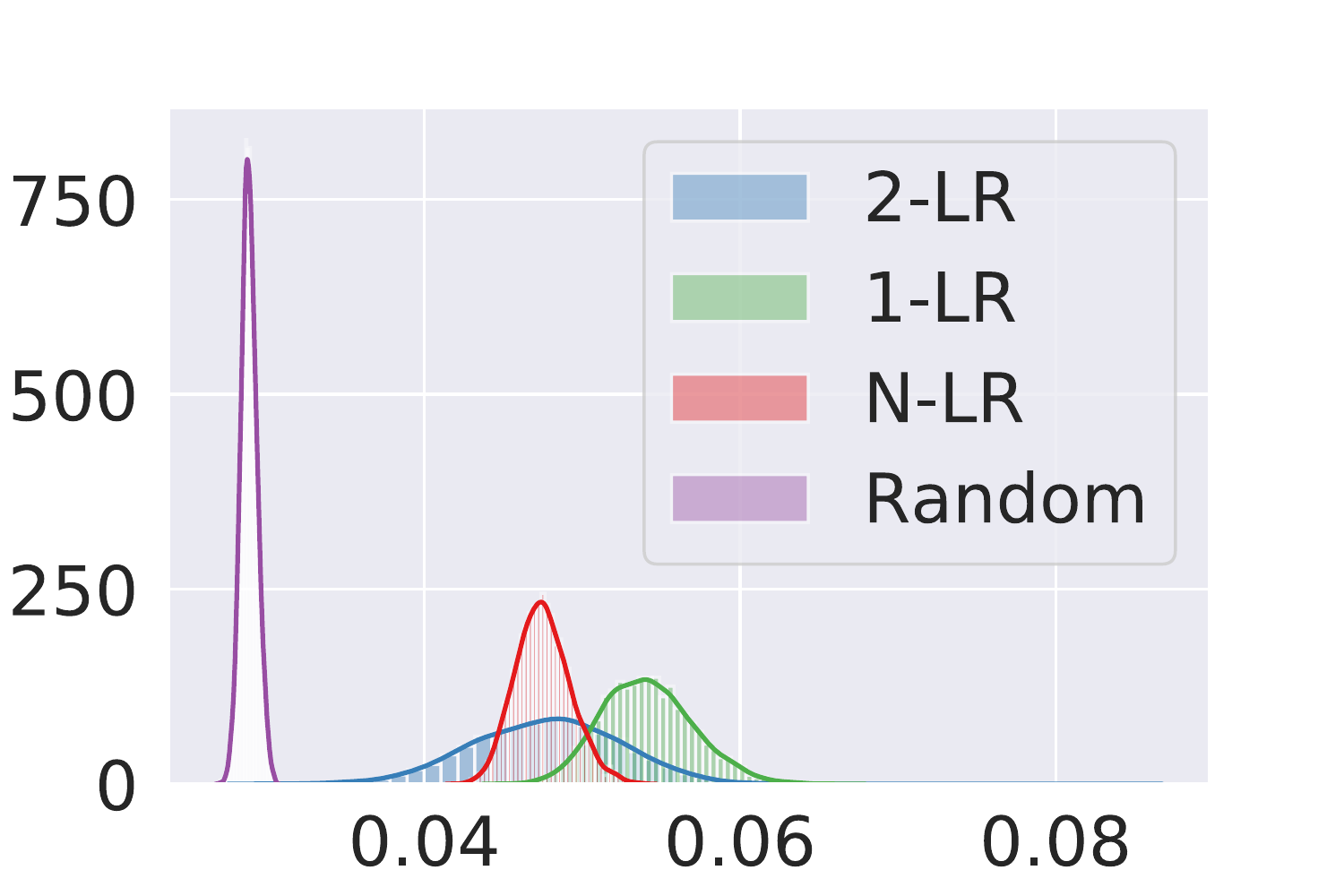_tex}
  \end{subfigure}
  \begin{subfigure}[c]{0.24\linewidth} \centering
\def\svgwidth{0.99\columnwidth} \input{./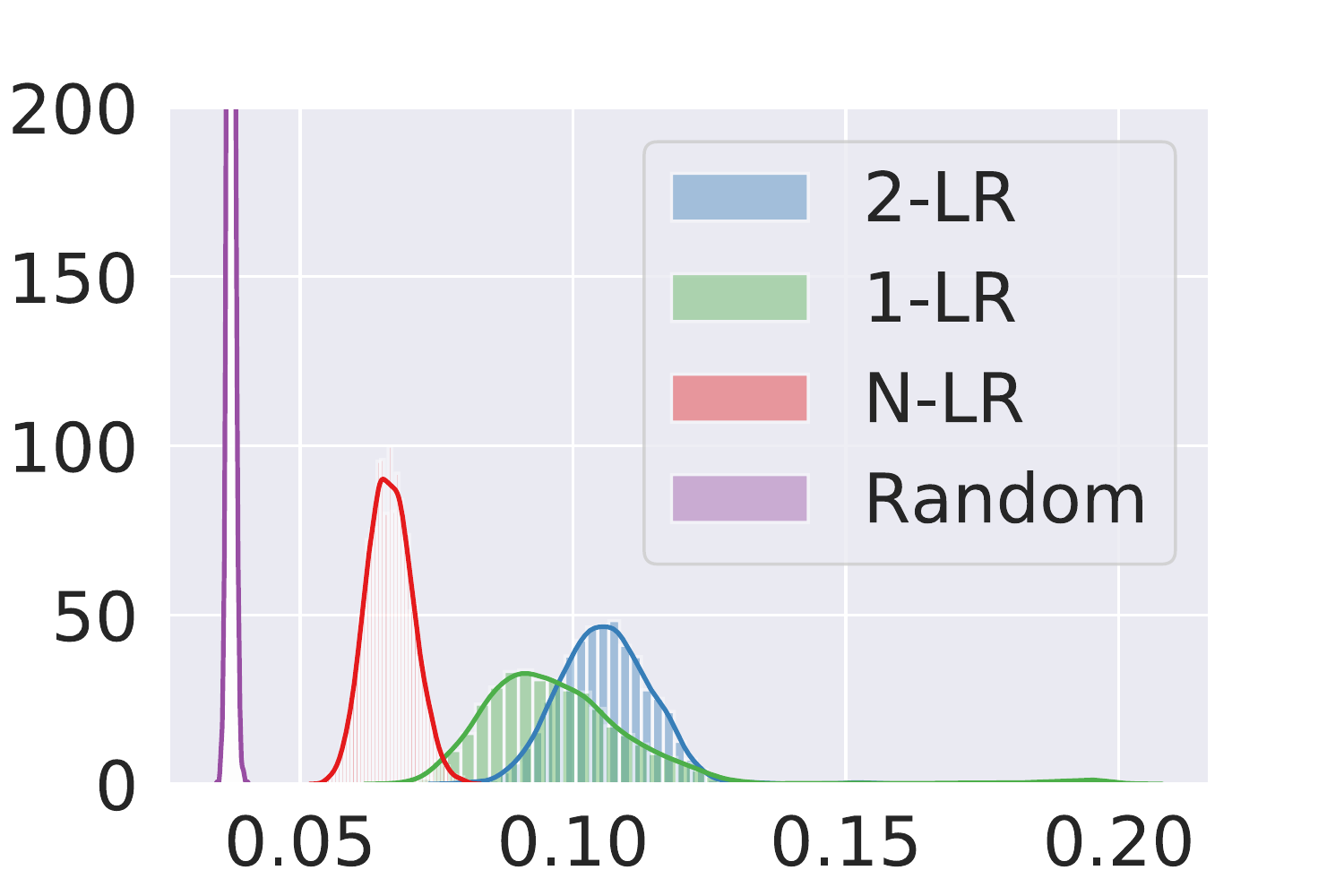_tex}
  \end{subfigure}
  \begin{subfigure}[c]{0.24\linewidth} \centering
\def\svgwidth{0.99\columnwidth} \input{./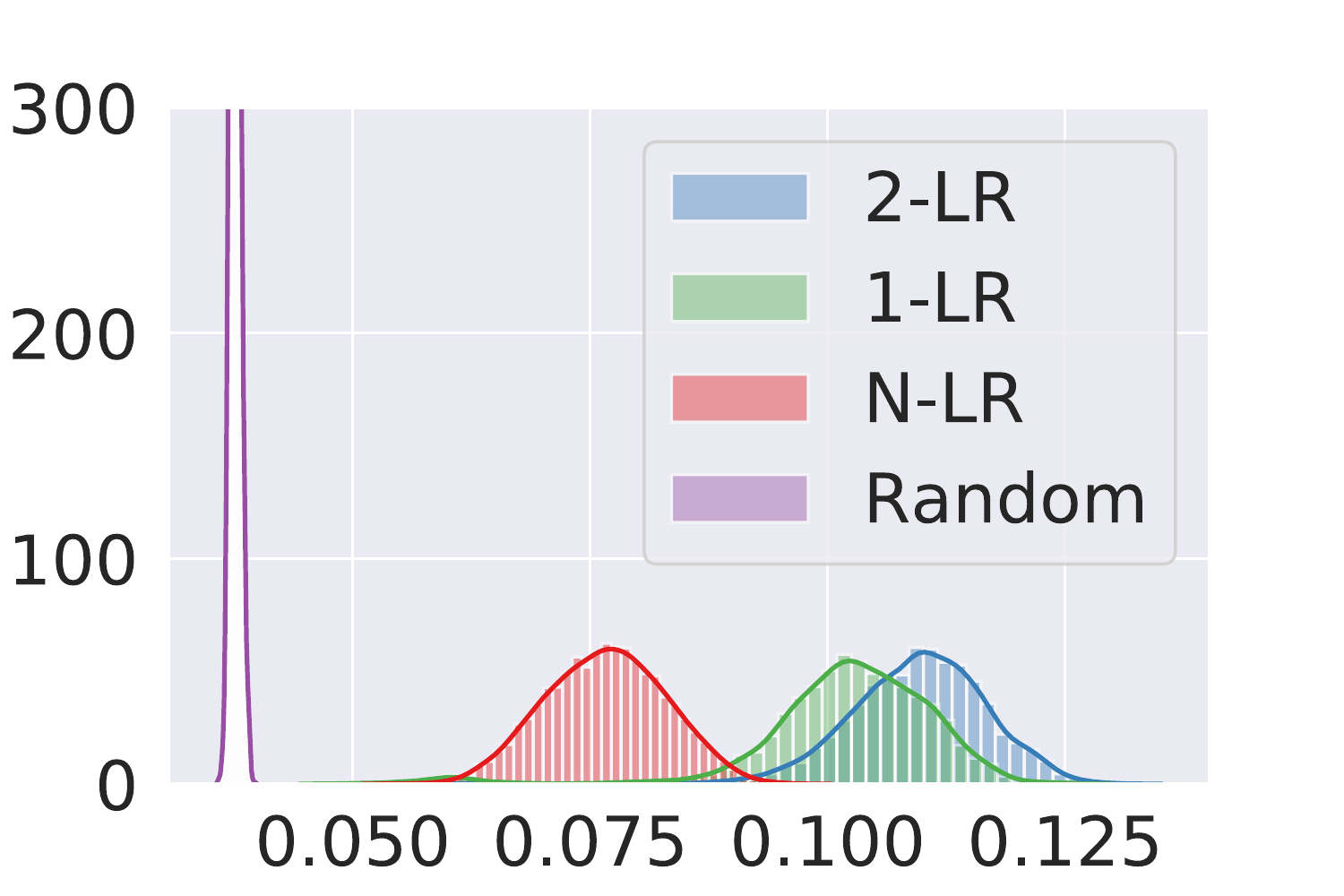_tex}
\end{subfigure}
 \begin{subfigure}[c]{0.24\linewidth} \centering
\def\svgwidth{0.99\columnwidth} \input{./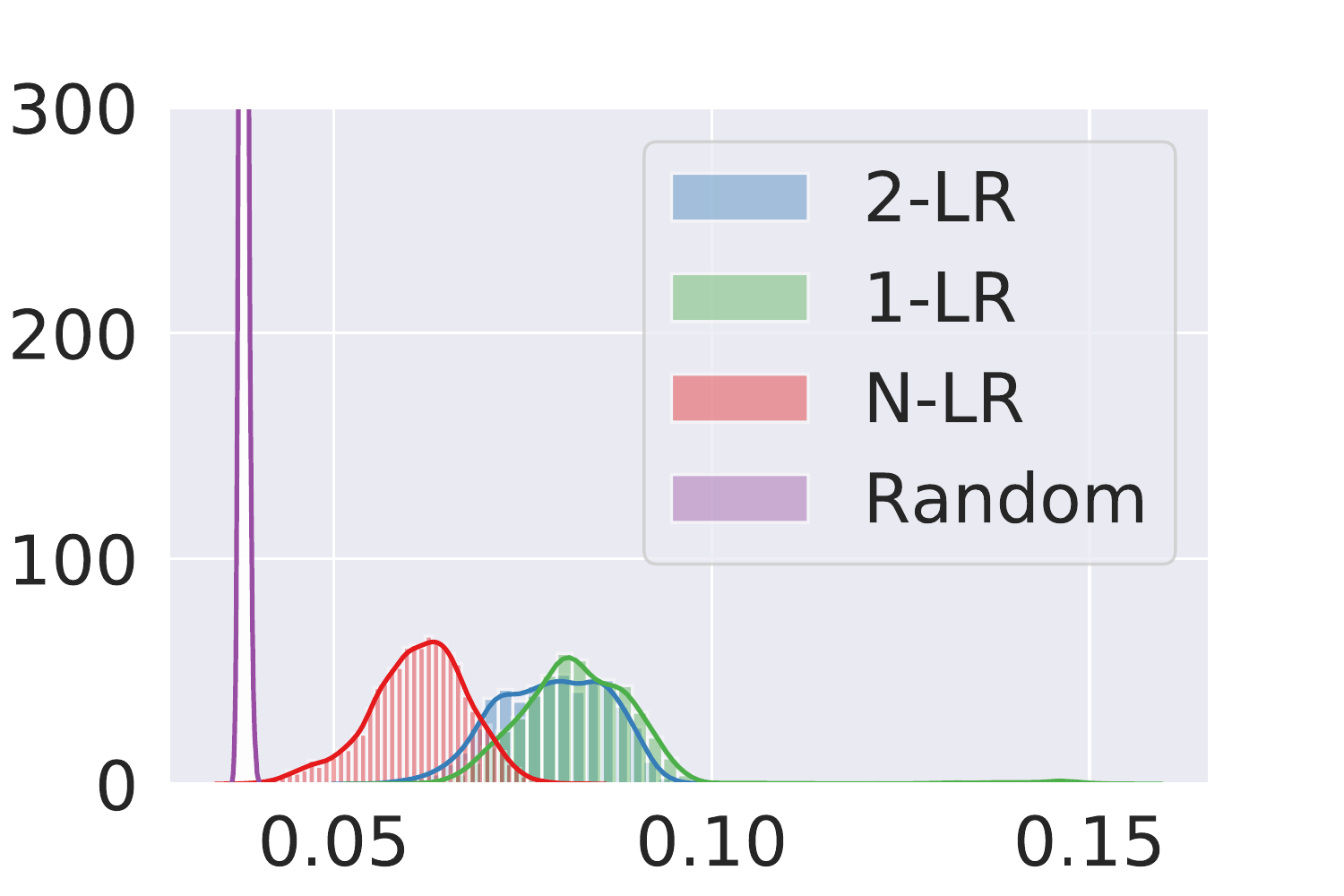_tex}
  \end{subfigure}
  \caption{Cushion of Layer 4}
  \label{fig:int_spec_lyr4_cush}
\end{figure}
\clearpage
   
\subsection{ResNet18 on SVHN}
\label{sec:resnet18-svhn}

\paragraph{Adversarial Noise Attenuation for SVHN}
\label{sec:noise-stability-pert-svhn}
In Figure~\ref{fig:perturbation_spaces_svhn}, we plot experiments similar to the one in~\ref{fig:perturbation_spaces} for SVHN dataset and we observe a similar trend.

\vspace{-10pt}   \begin{figure}[h]
     \begin{subfigure}[c]{0.14\linewidth} \centering
\def\svgwidth{0.90\columnwidth}
\input{./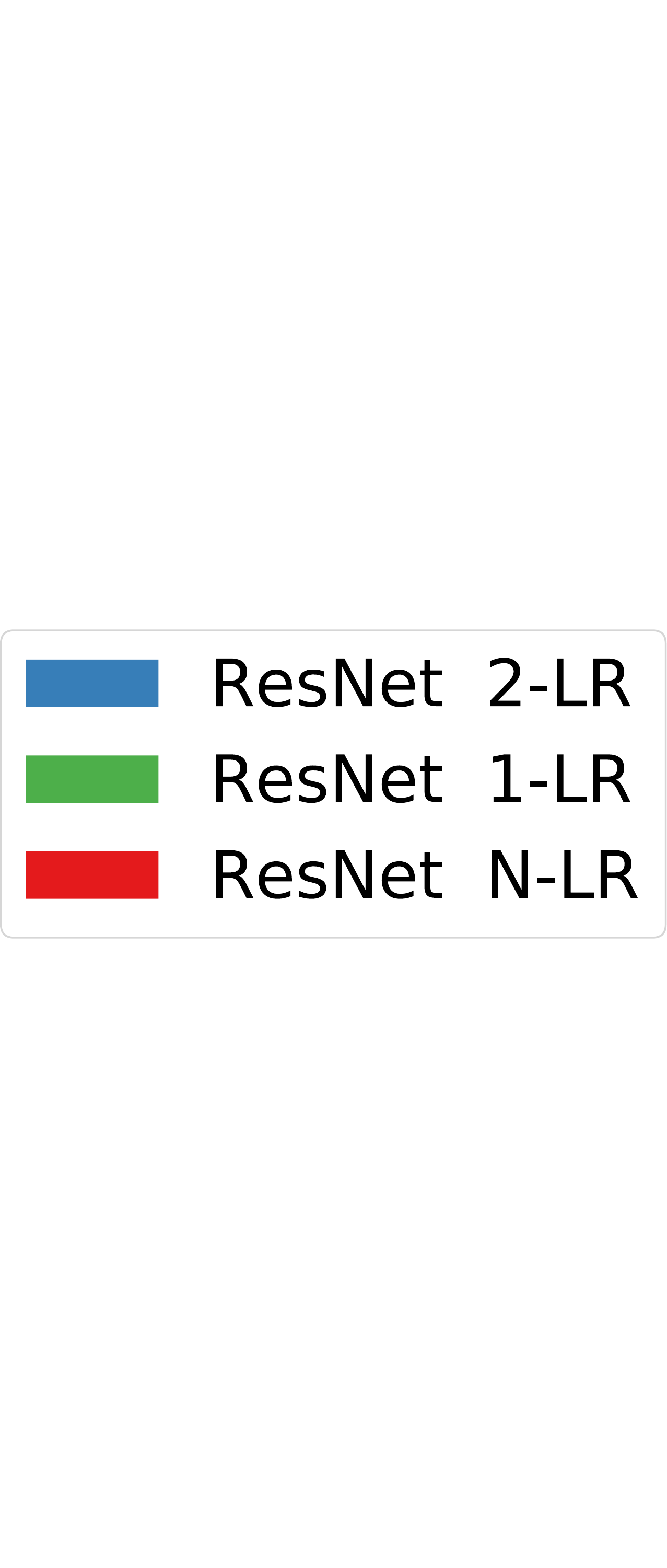_tex}
     \end{subfigure}\hspace{20pt}
     \begin{subfigure}[c]{0.6\linewidth} \centering
\def\svgwidth{0.99\columnwidth}
\input{./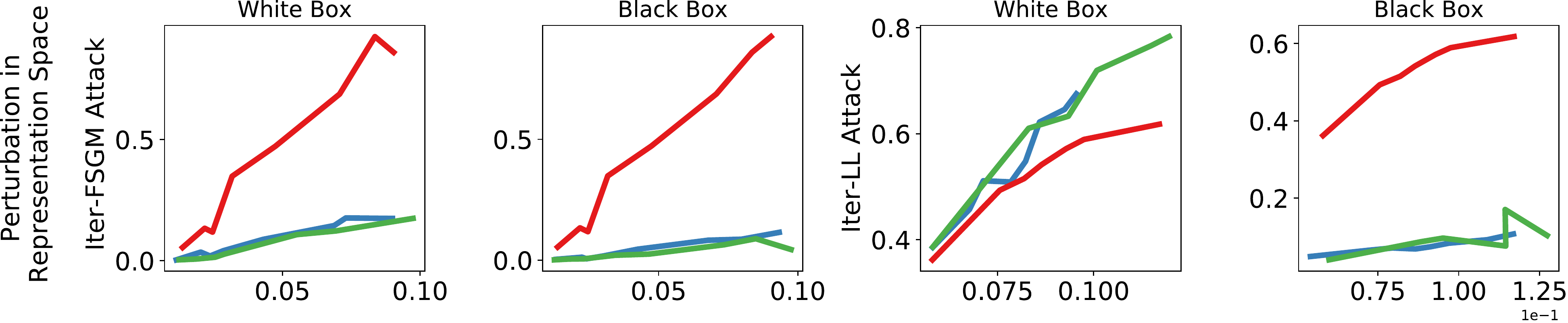_tex}
     \end{subfigure}\vspace{-4em}
     \caption[Change in perturbation]{\small Adversarial Perturbation in
Input Space and Perturbation in Representation Space in SVHN}
     \label{fig:perturbation_spaces_svhn}\vspace{-1em}
   \end{figure}

\vspace{-5pt}\paragraph{Layer Cushion  for SVHN}
 The following correspond to ResNet blocks. Each block has two smaller
sub-blocks where each sub-block has two convolutional layers. The
value of layer cushion for these modules of one block are plotted
below.

\begin{center}
  \begin{figure}[h!]
  \begin{subfigure}[c]{0.22\linewidth} \centering
\def\svgwidth{0.99\columnwidth}
\input{./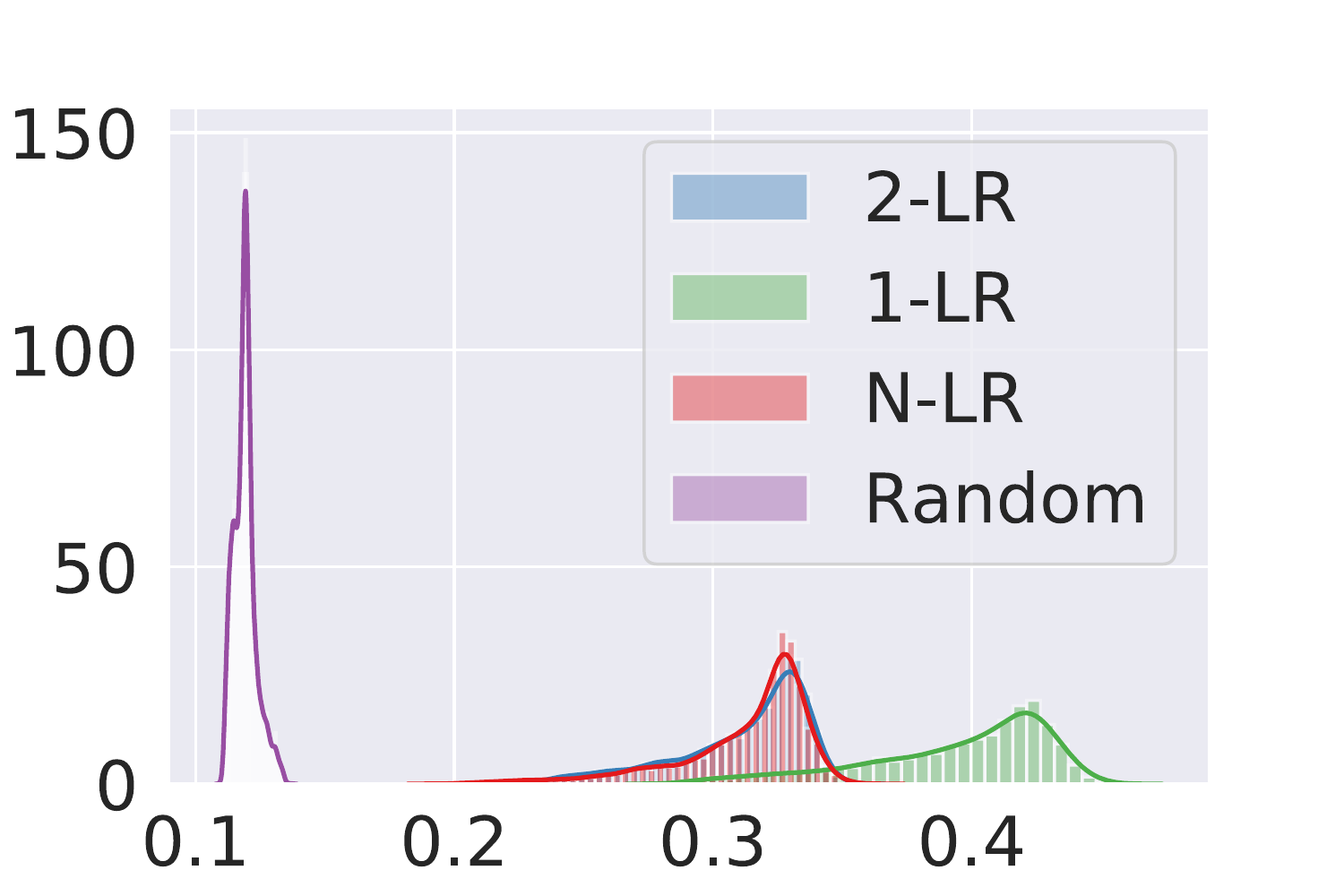_tex}
  \end{subfigure}\hspace{15pt}
  \begin{subfigure}[c]{0.22\linewidth} \centering
\def\svgwidth{0.99\columnwidth}
\input{./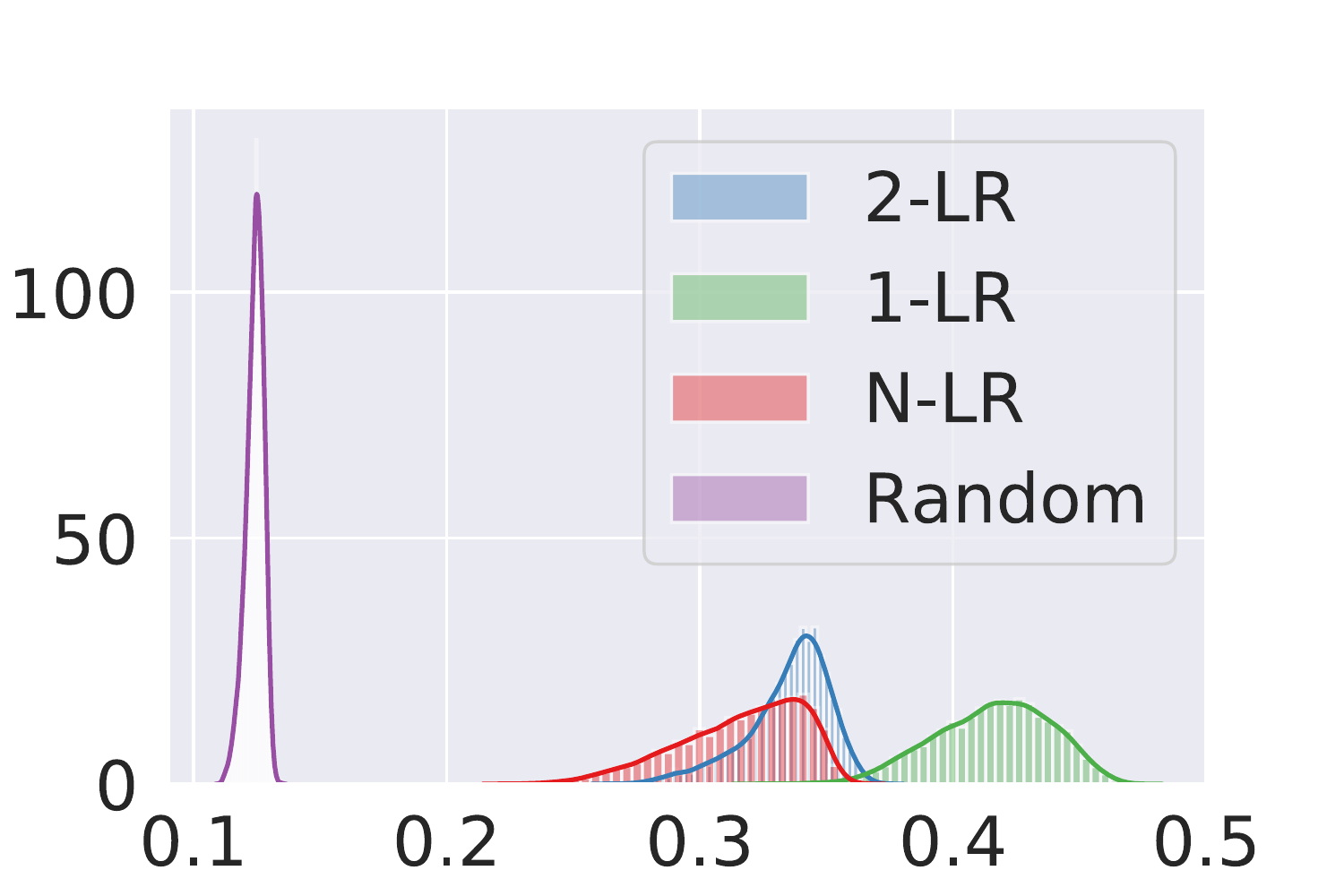_tex}
  \end{subfigure}\hspace{15pt}
  \begin{subfigure}[c]{0.22\linewidth} \centering
\def\svgwidth{0.99\columnwidth}
\input{./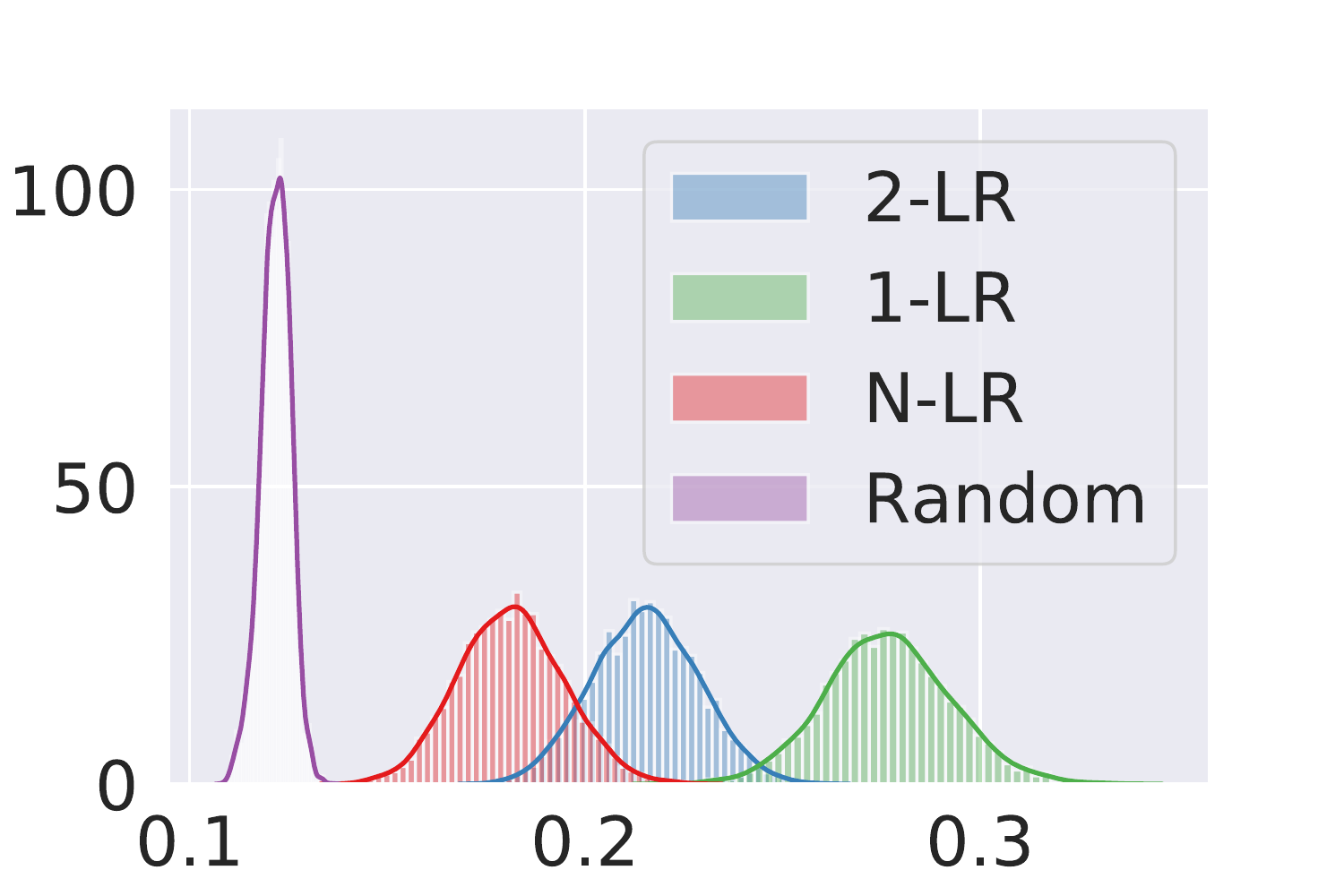_tex}
  \end{subfigure}\hspace{15pt}
  \begin{subfigure}[c]{0.22\linewidth} \centering
\def\svgwidth{0.99\columnwidth}
\input{./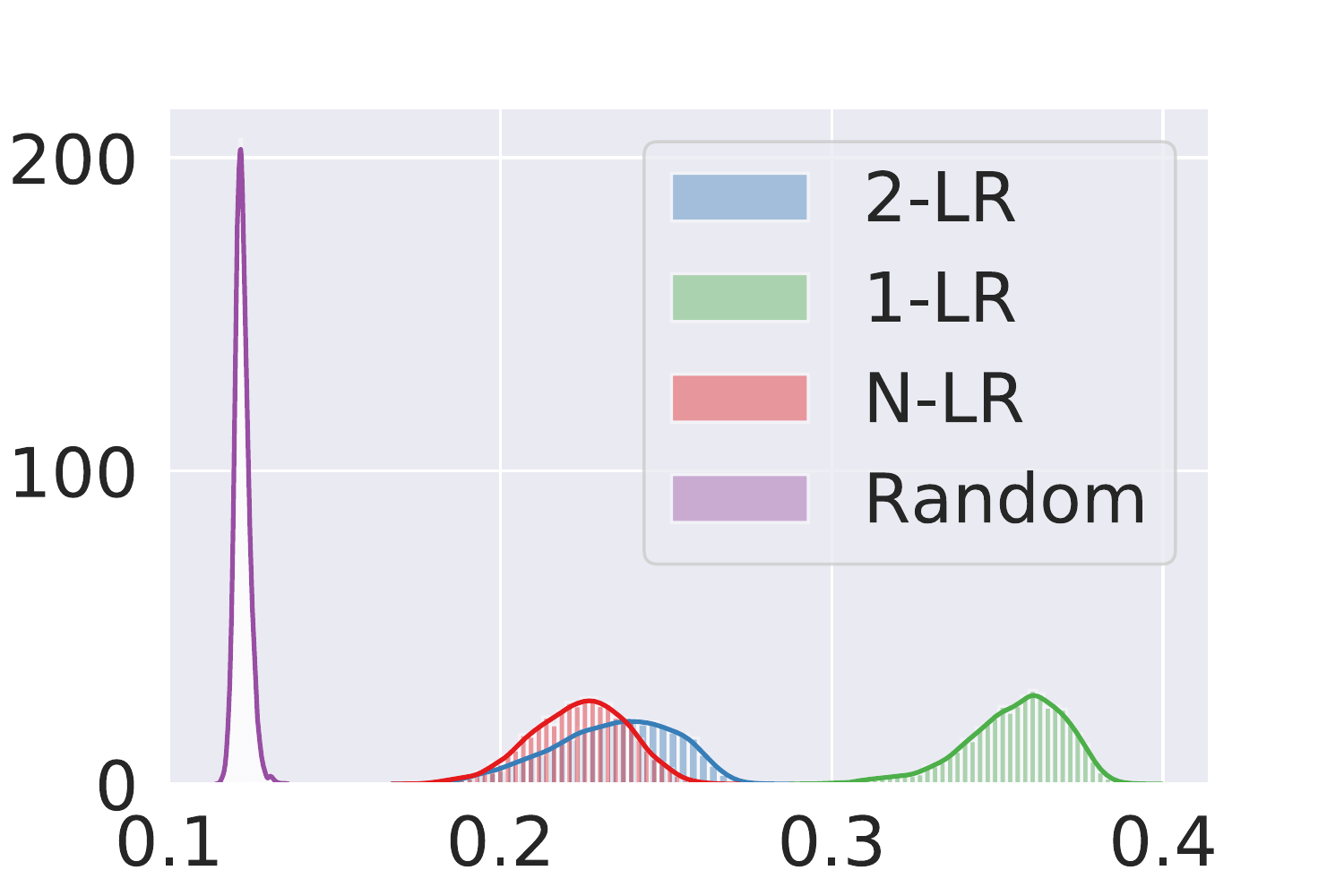_tex}
  \end{subfigure}
  \caption{Cushion of Layer 1}
  \label{fig:int_svhn_spec_lyr_cush}
\end{figure}
\end{center}
\vspace{-35pt}
\begin{center}
  \begin{figure}[h!]
  \begin{subfigure}[c]{0.22\linewidth} \centering
\def\svgwidth{0.99\columnwidth}
\input{./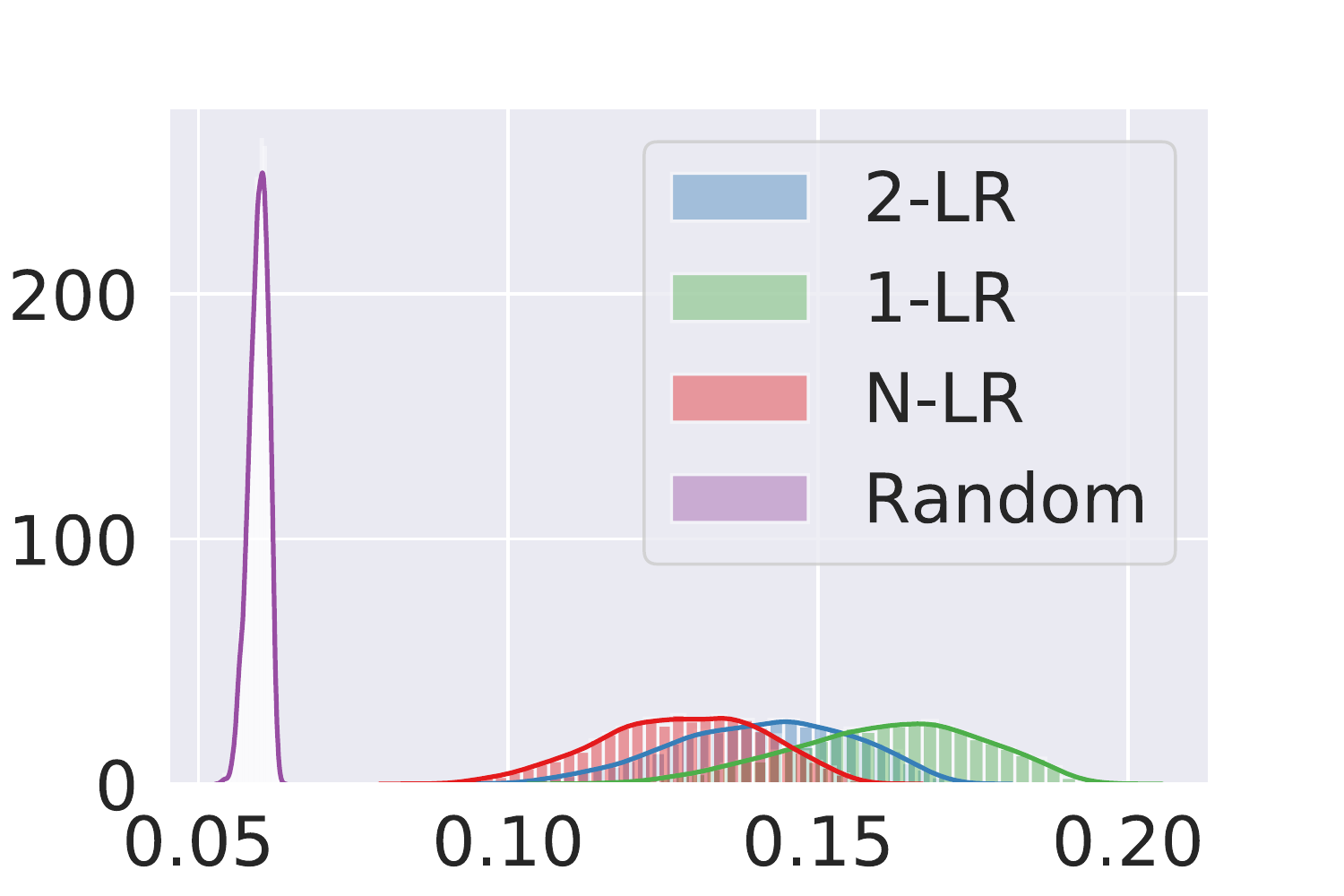_tex}
  \end{subfigure}\hspace{15pt}
  \begin{subfigure}[c]{0.22\linewidth} \centering
\def\svgwidth{0.99\columnwidth}
\input{./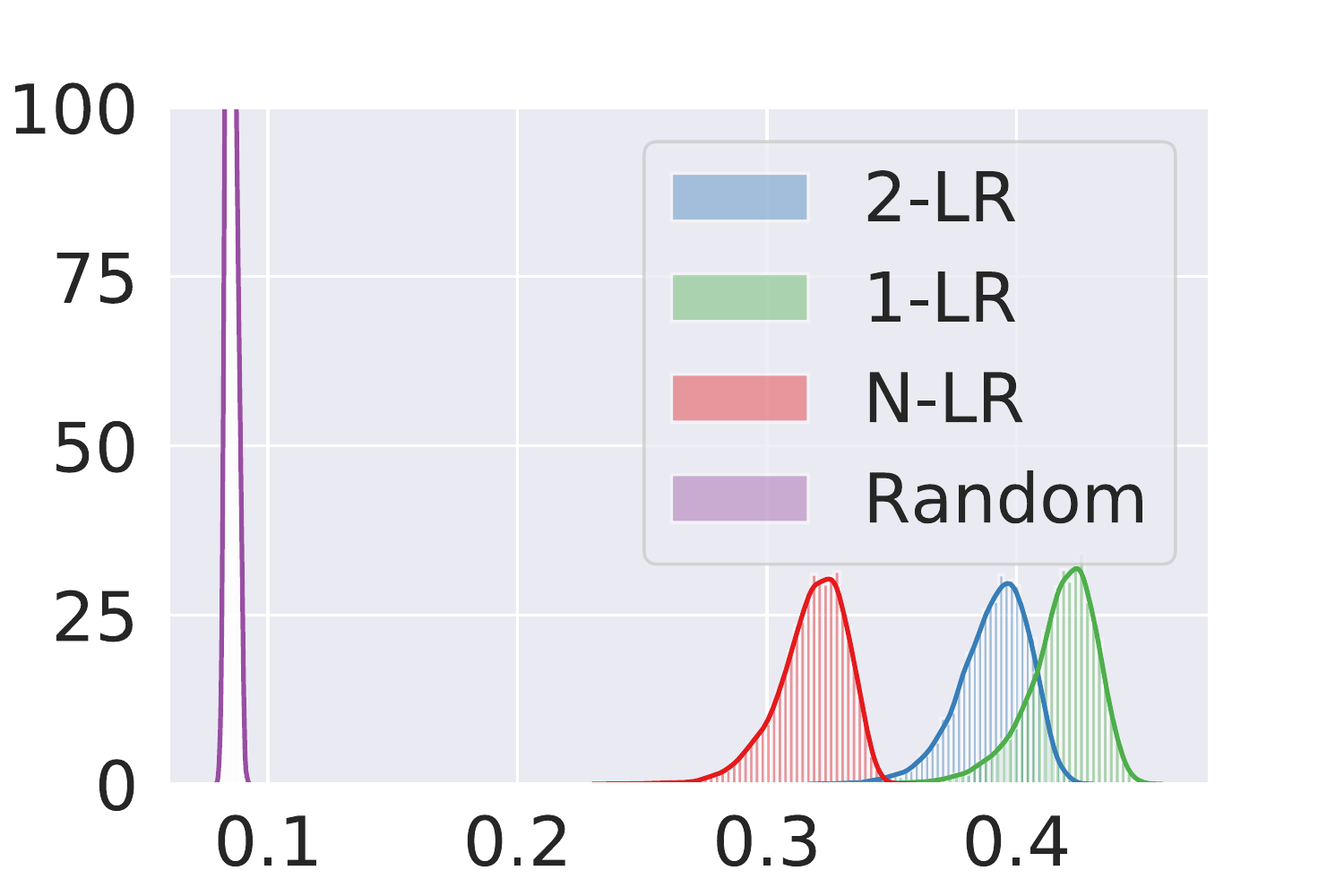_tex}
  \end{subfigure}\hspace{15pt}
  \begin{subfigure}[c]{0.22\linewidth} \centering
\def\svgwidth{0.99\columnwidth}
\input{./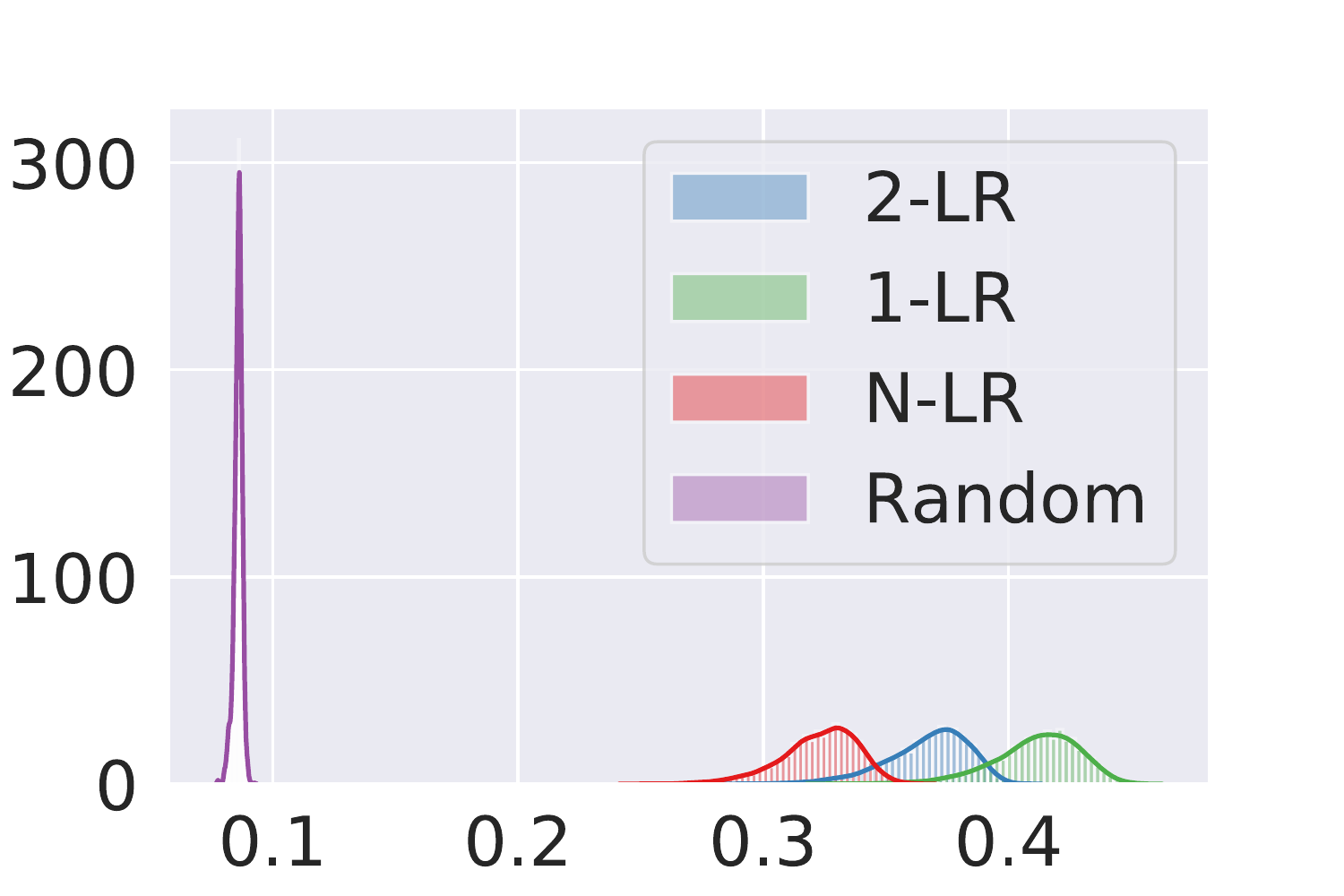_tex}
  \end{subfigure}\hspace{15pt}
  \begin{subfigure}[c]{0.22\linewidth} \centering
\def\svgwidth{0.99\columnwidth}
\input{./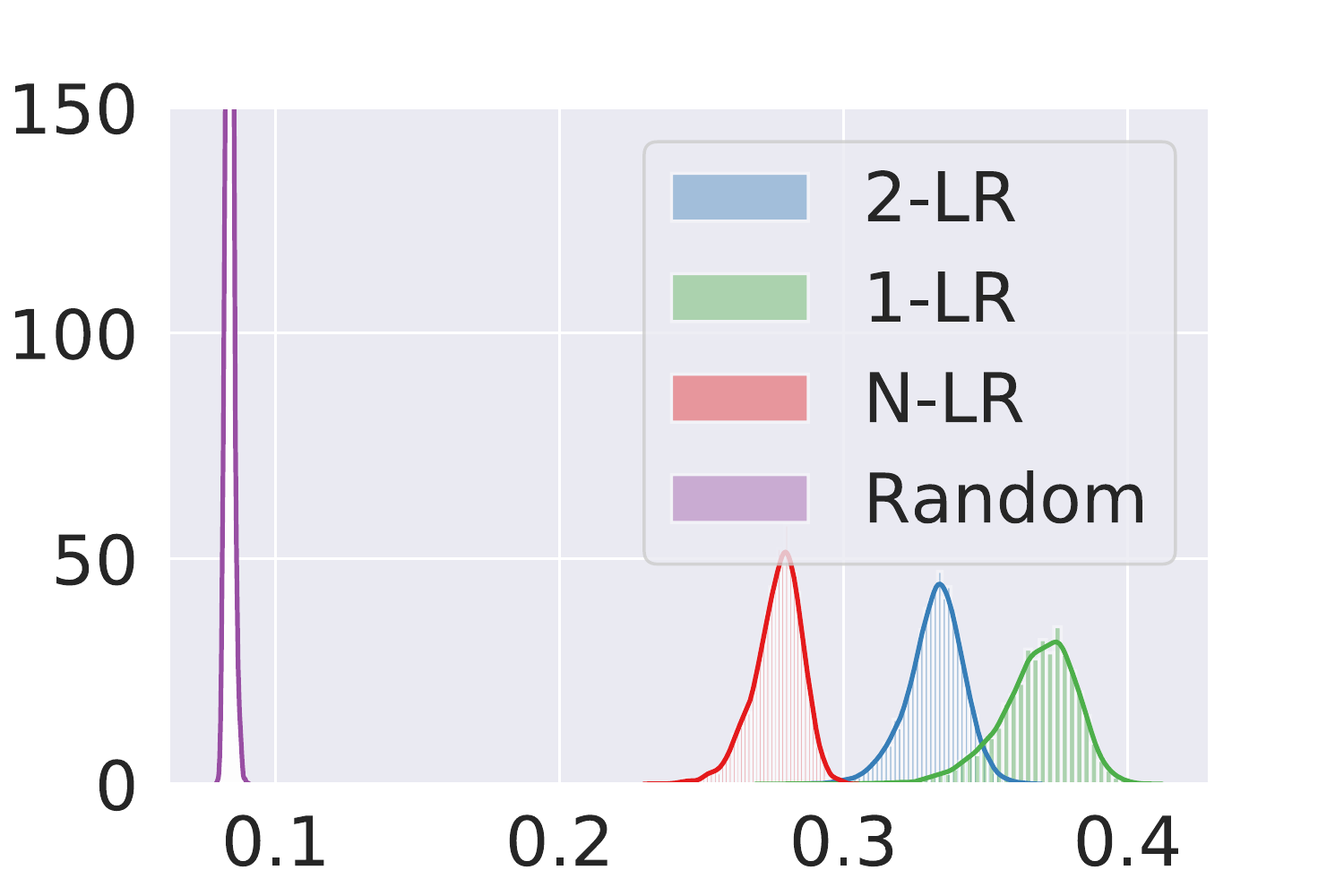_tex}
  \end{subfigure}
  \caption{Cushion of Layer 2}
  \label{fig:int_svhn_spec_lyr2_cush}
\end{figure}
\end{center}
\vspace{-35pt}
\begin{center}
  \begin{figure}[h!]
  \begin{subfigure}[c]{0.22\linewidth} \centering
\def\svgwidth{0.99\columnwidth}
\input{./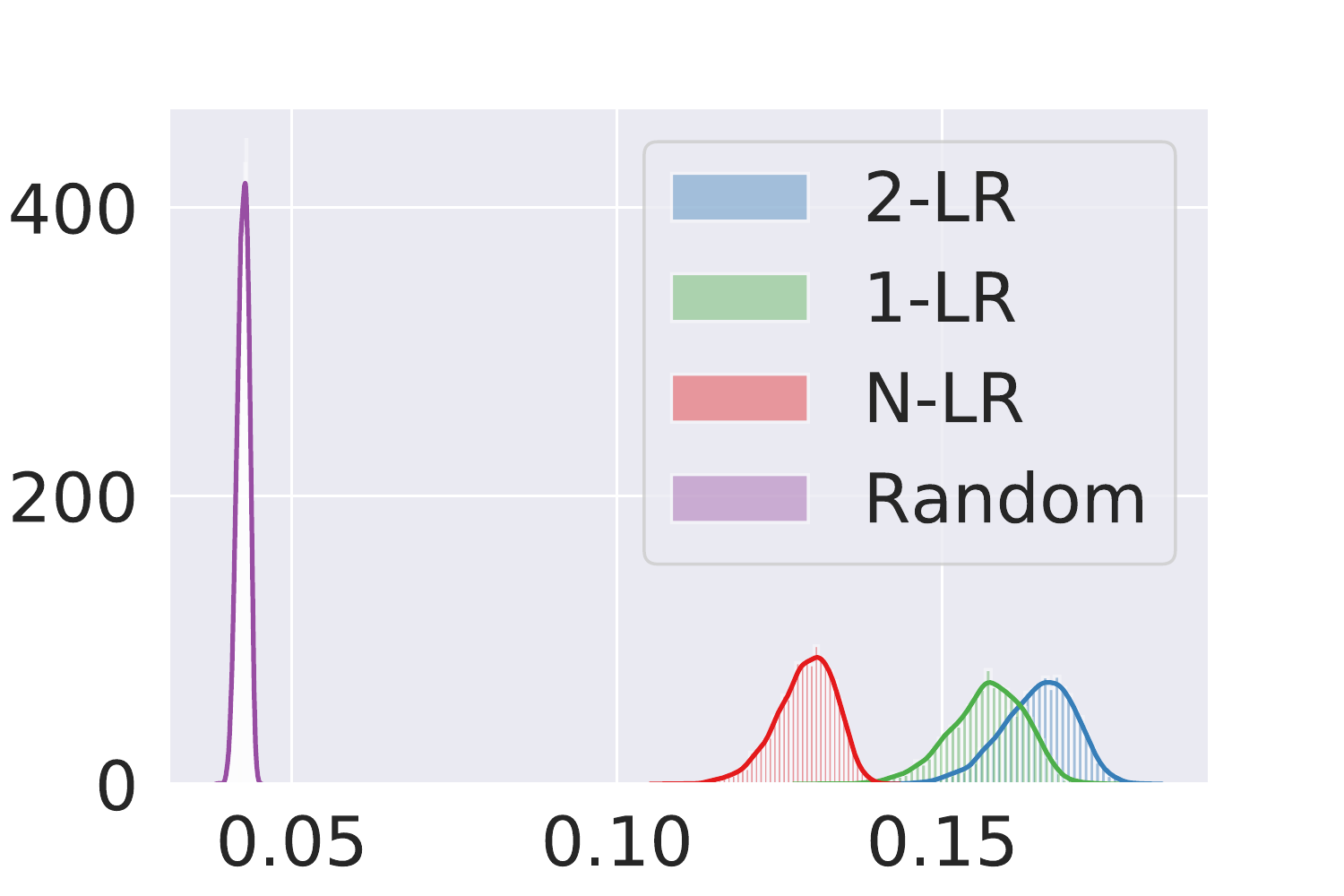_tex}
  \end{subfigure}\hspace{15pt}
  \begin{subfigure}[c]{0.22\linewidth} \centering
\def\svgwidth{0.99\columnwidth}
\input{./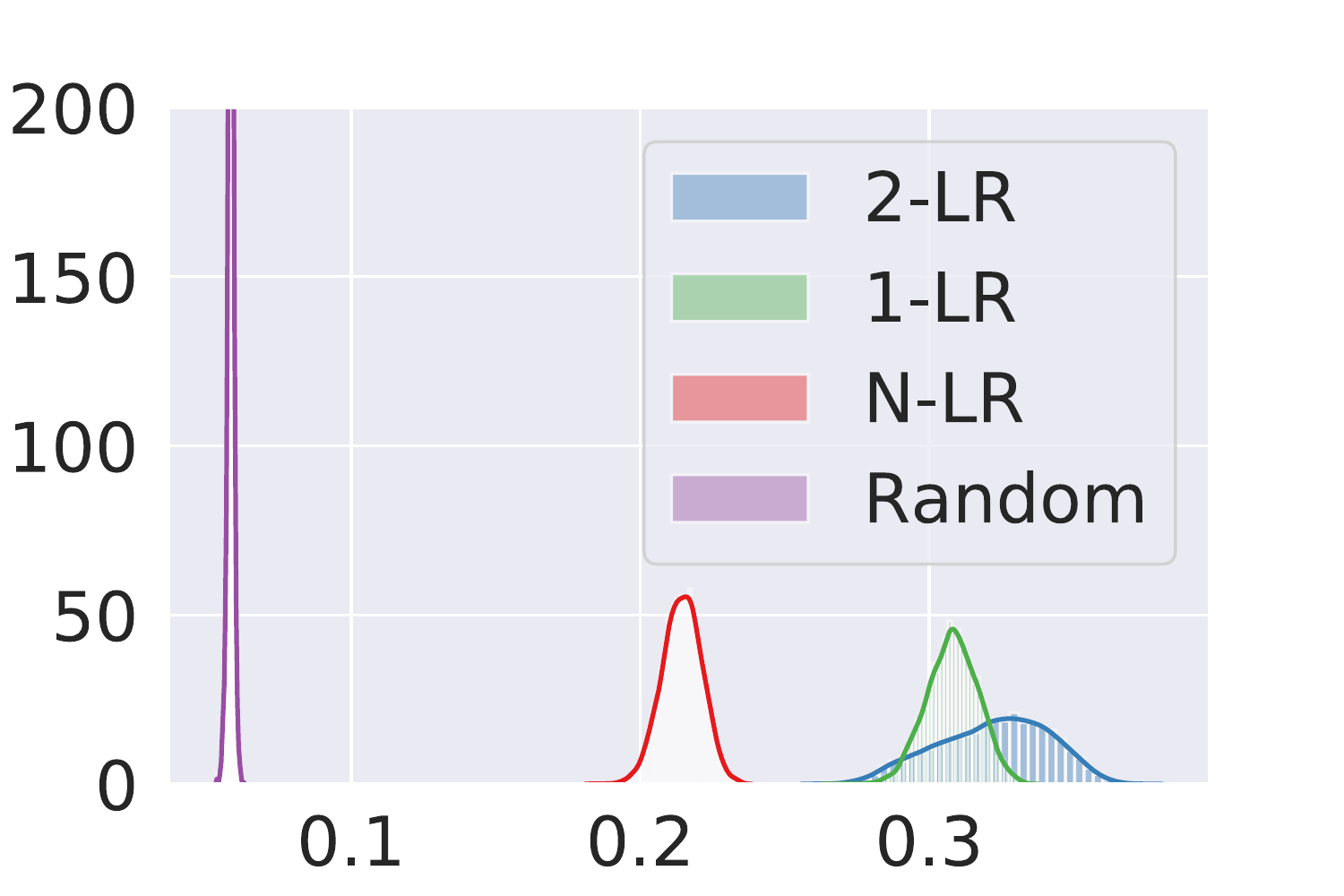_tex}
  \end{subfigure}\hspace{15pt}
  \begin{subfigure}[c]{0.22\linewidth} \centering
\def\svgwidth{0.99\columnwidth}
\input{./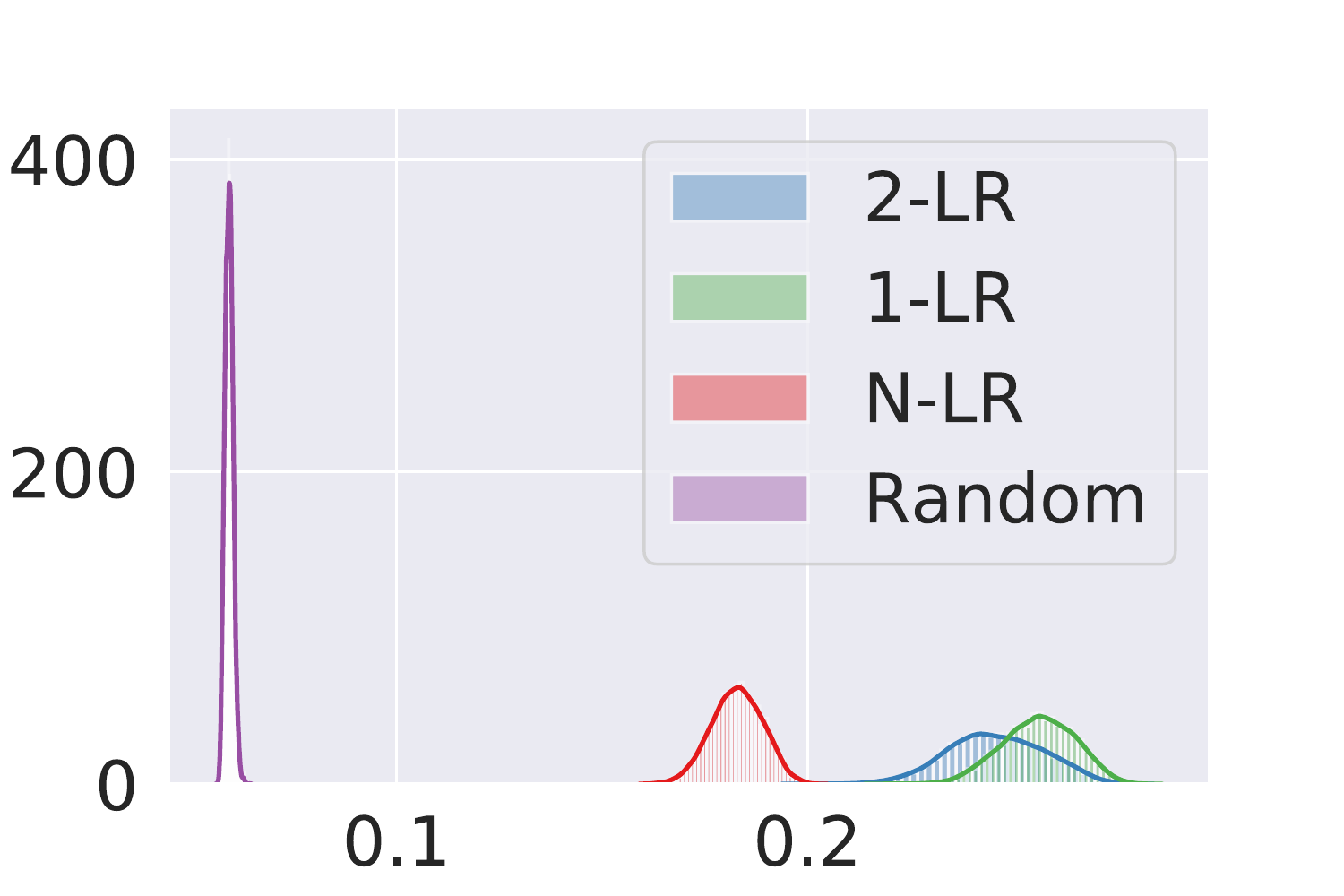_tex}
  \end{subfigure}\hspace{15pt}
  \begin{subfigure}[c]{0.22\linewidth} \centering
\def\svgwidth{0.99\columnwidth}
\input{./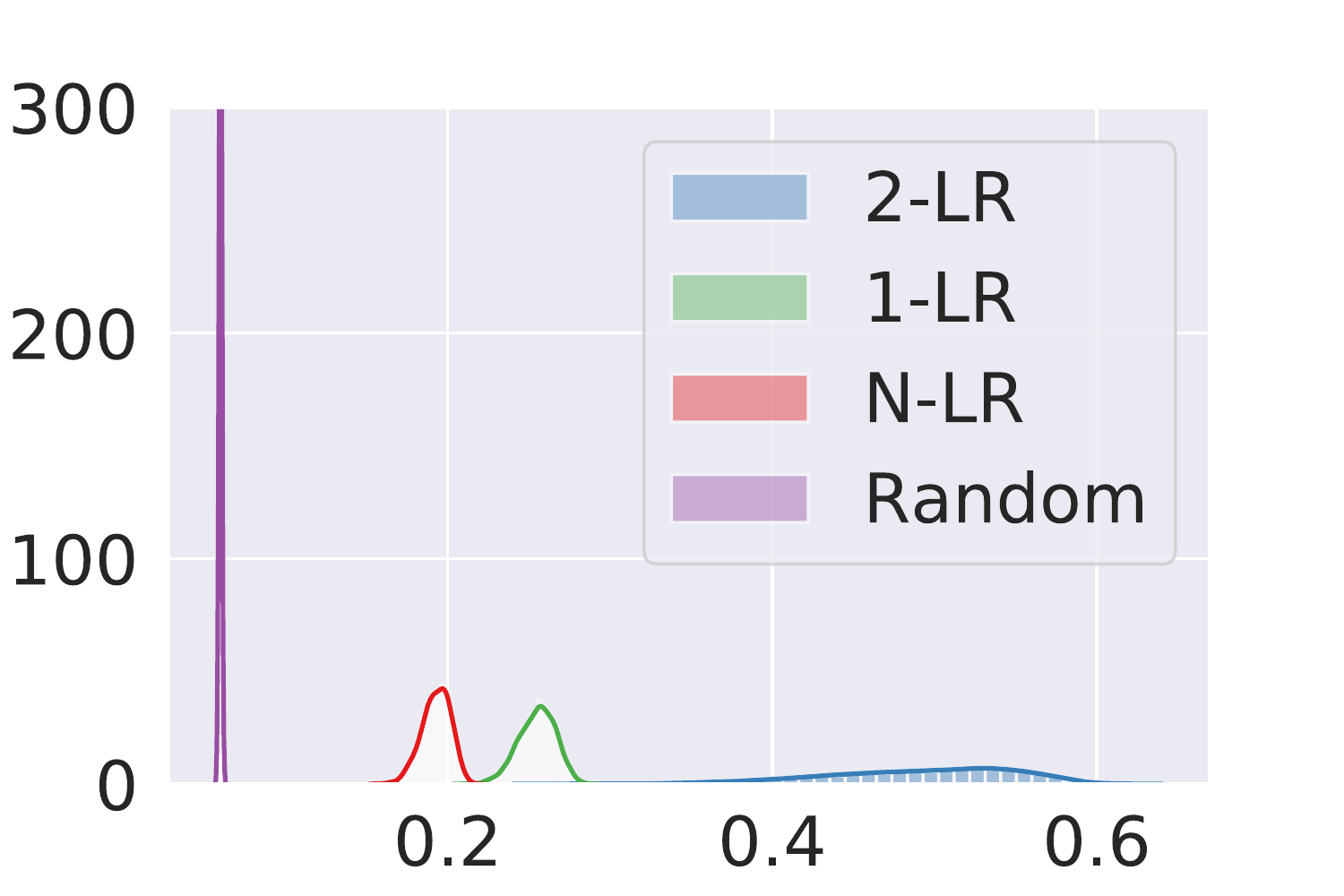_tex}
  \end{subfigure}
  \caption{Cushion of Layer 3}
  \label{fig:int_svhn_spec_lyr3_cush}
\end{figure}
\end{center}\vspace{-35pt}
  \begin{figure}[h!]
  \begin{subfigure}[c]{0.22\linewidth} \centering
\def\svgwidth{0.99\columnwidth}
\input{./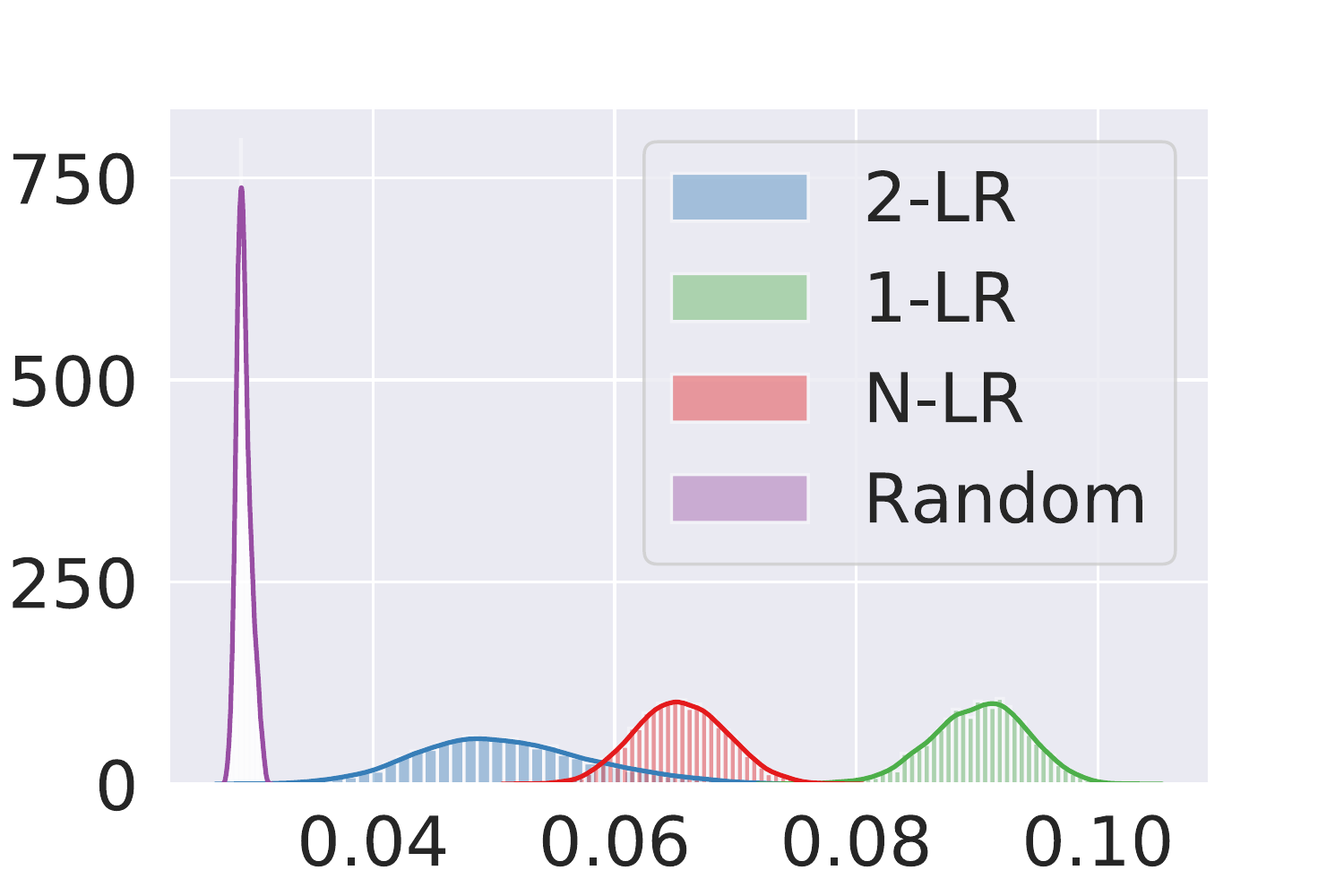_tex}
  \end{subfigure}\hspace{15pt}
  \begin{subfigure}[c]{0.22\linewidth} \centering
\def\svgwidth{0.99\columnwidth}
\input{./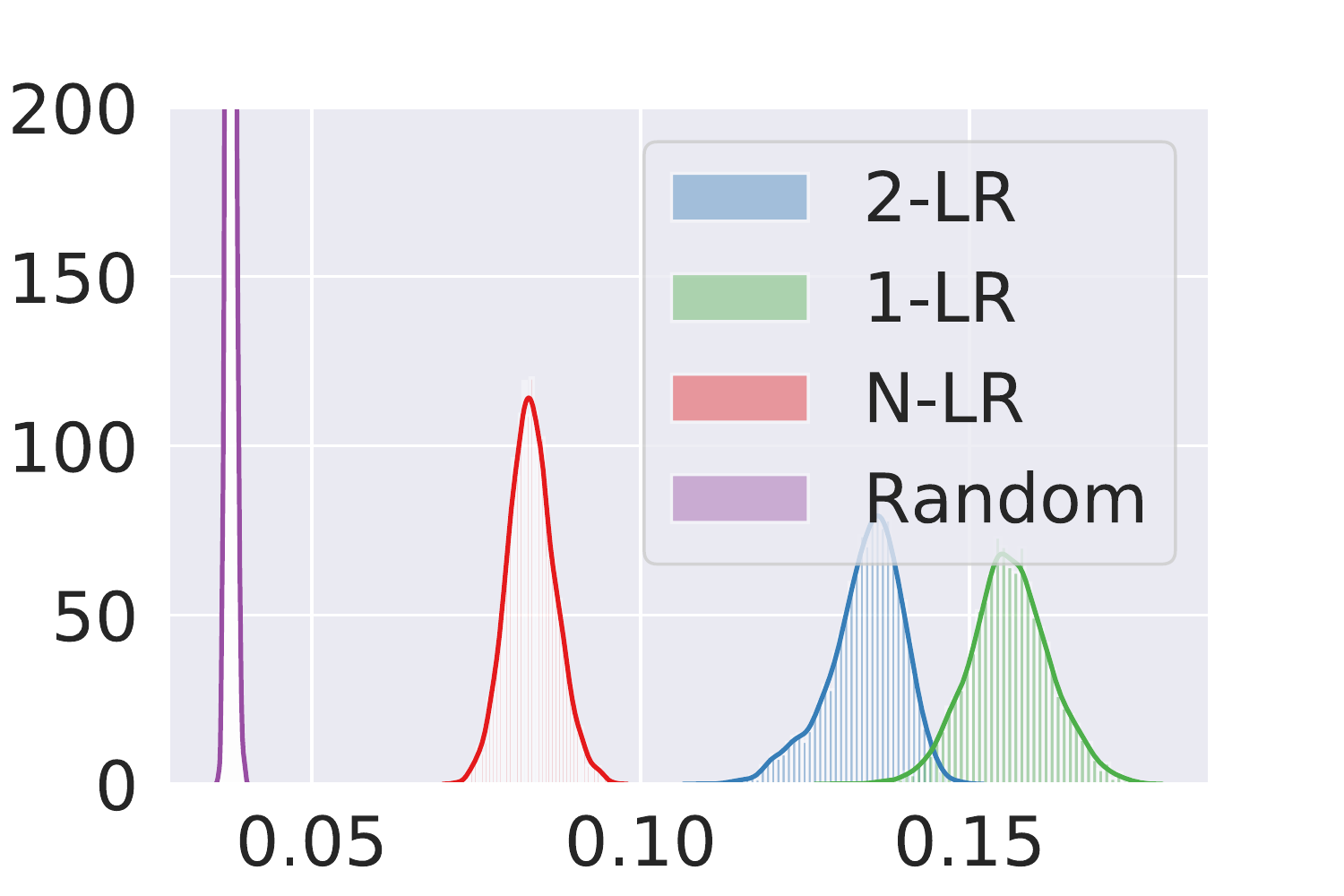_tex}
  \end{subfigure}\hspace{15pt}
  \begin{subfigure}[c]{0.22\linewidth} \centering
\def\svgwidth{0.99\columnwidth}
\input{./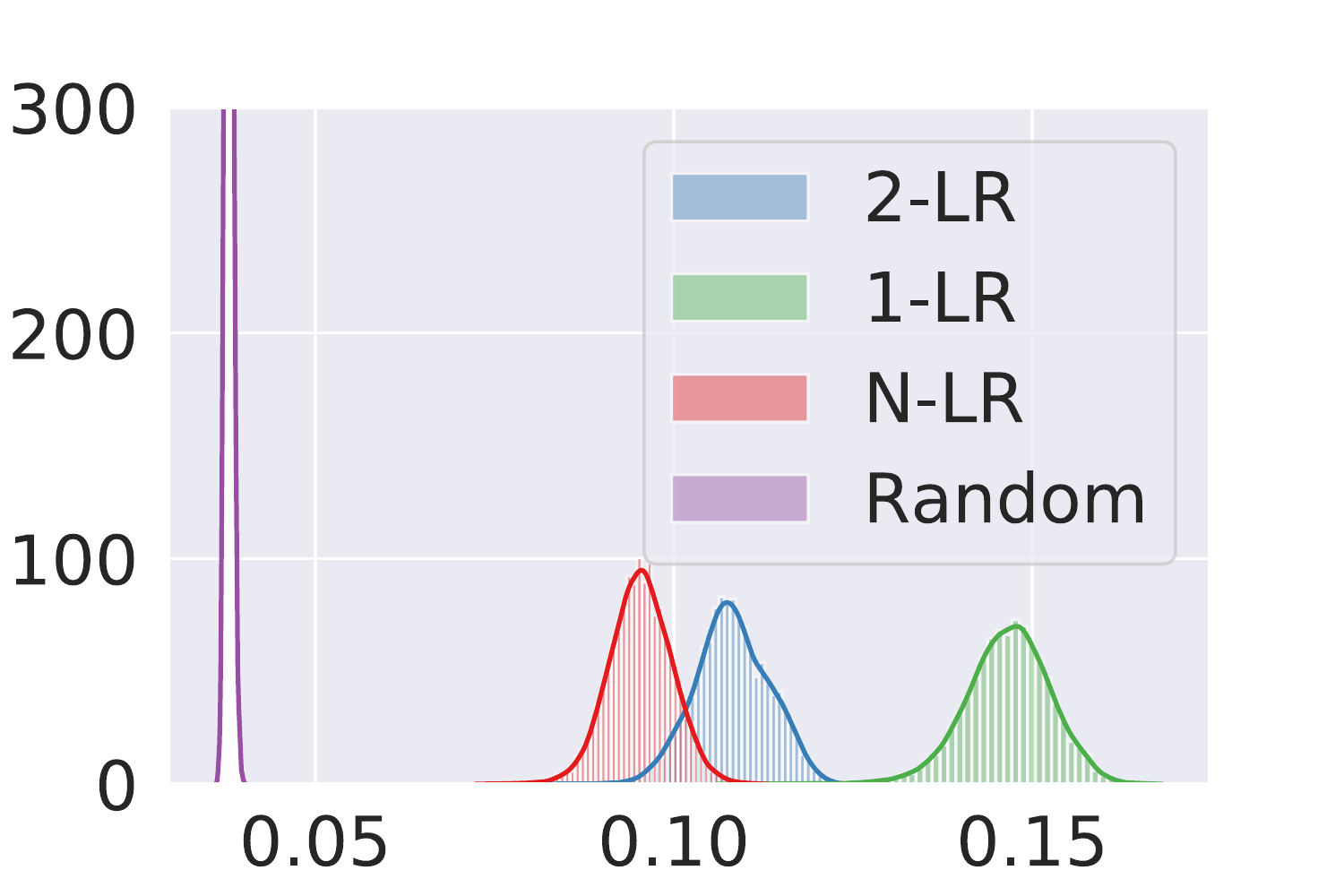_tex}
  \end{subfigure}\hspace{15pt}
  \begin{subfigure}[c]{0.22\linewidth} \centering
\def\svgwidth{0.99\columnwidth}
\input{./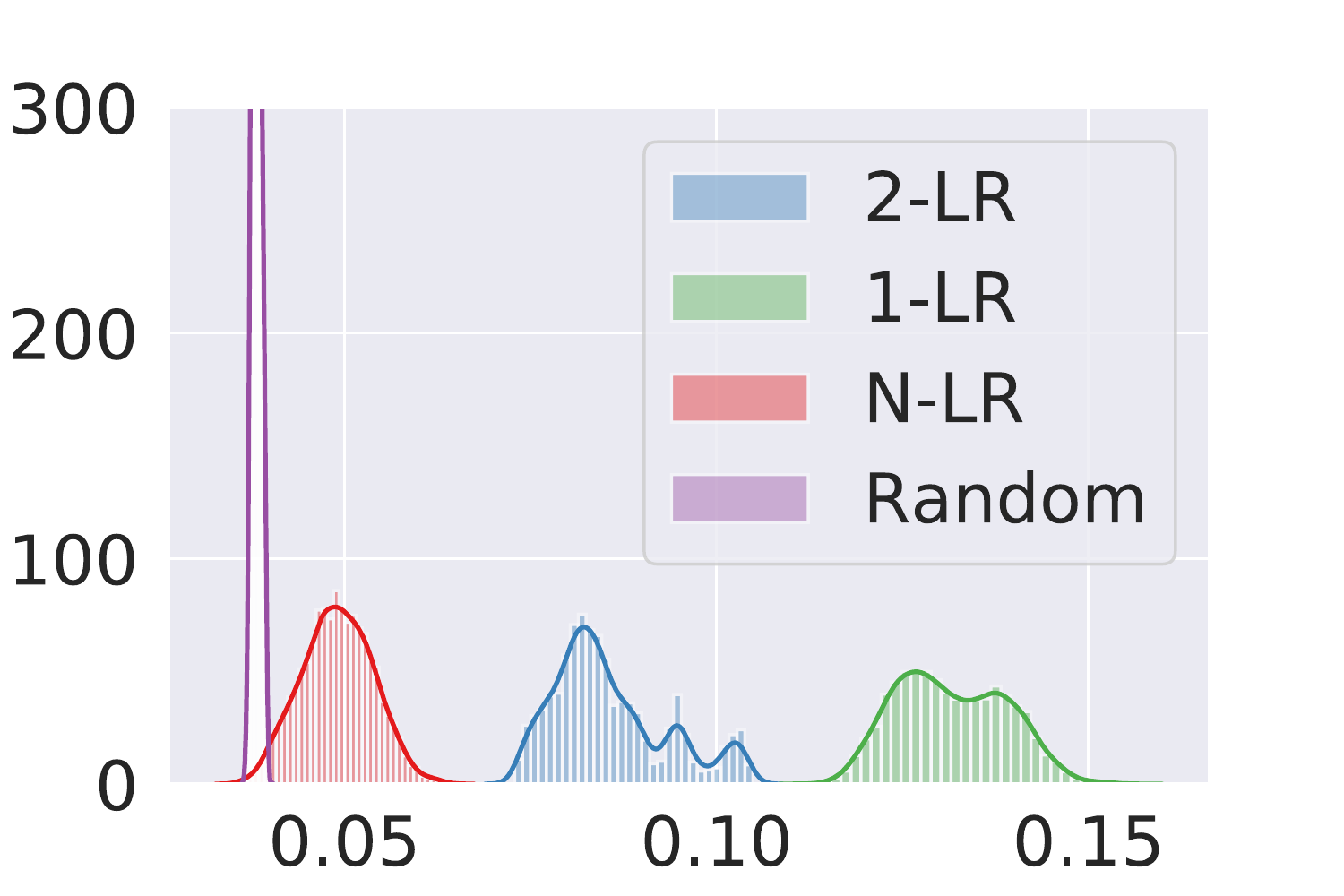_tex}
  \end{subfigure}
  \caption{Cushion of Layer 4}
  \label{fig:int_svhn_spec_lyr4_cush}
\end{figure}

\end{document}